\def\eqref#1{equation~\ref{#1}}
\def\1{\bm{1}}
\def\rmD{{\mathbf{D}}}
\def\vo{{\bm{o}}}
\def\vp{{\bm{p}}}
\def\mL{{\bm{L}}}
\def\mM{{\bm{M}}}
\DeclareMathAlphabet{\mathsfit}{\encodingdefault}{\sfdefault}{m}{sl}
\SetMathAlphabet{\mathsfit}{bold}{\encodingdefault}{\sfdefault}{bx}{n}
\def\sP{{\mathbb{P}}}
\newcommand{\R}{\mathbb{R}}
\DeclareMathOperator*{\argmax}{arg\,max}
\newtheorem{assumption}{Assumption}
\newtheorem{theorem}{Theorem}
\theoremstyle{definition}
\newtheorem{definition}{Definition}
\theoremstyle{remark}
\title{Efficient Differentiable Discovery of \\ Causal Order}
\def\perm{{\bm{\sigma}}}
\def\Sinkhorn{{\mathcal{S}}}
\def\polytope{{\mathcal{B}_d}}
\DeclareMathOperator*{\grad}{grad}
\DeclareMathOperator*{\step}{Step}
\newcommand{\binaryset}{\{0,1\}^{d\times d}}
\author{%
\textbf{Mathieu Chevalley}$^{1,2}$  \quad  \textbf{Arash Mehrjou}$^1$ \quad \textbf{Patrick Schwab}$^1$ 
\\
$^1$GSK.ai \quad $^2$ETH Zürich \quad 
}
\begin{document}

\maketitle

\begin{abstract}
In the algorithm Intersort, \citet{chevalley2024deriving} proposed a score-based method to discover the causal order of variables in a Directed Acyclic Graph (DAG) model, leveraging interventional data to outperform existing methods. However, as a score-based method over the permutahedron, Intersort is computationally expensive and non-differentiable, limiting its ability to be utilised in problems involving large-scale datasets, such as those in genomics and climate models, or to be integrated into end-to-end gradient-based learning frameworks. We address this limitation by reformulating Intersort using differentiable sorting and ranking techniques. Our approach enables scalable and differentiable optimization of causal orderings, allowing the continuous score function to be incorporated as a regularizer in downstream tasks. Empirical results demonstrate that causal discovery algorithms benefit significantly from regularizing on the causal order, underscoring the effectiveness of our method. Our work opens the door to efficiently incorporating regularization for causal order into the training of differentiable models and thereby addresses a long-standing limitation of purely associational supervised learning.
\end{abstract}

\section{Introduction}

%Intersort introduced the notion of interventional faithfulness. However, performance and theoretical guarantees cannot be fully leverage as it is unclear how the inductive bias can be induced for example in downstream tasks with a differentiable objective, in a scalable manner. We solve this problem, by showing how the score of Intersort can be rewritten to allow for technique in differentiable sorting and ranking to be used. This allows us to solve the original objective on causal ordering in a differentiable and scalable way. More importantly, we can now use it in a framework of a downstream causal task as a differentiable contraint/regularizer. As a proof of concept, we develop of causal discovery algorithm that can include the interventional faithfulness inductive bias in its objective. We show that the regularized version of our causal discovery algorithm performs much better, highlighting the usefulness of inducing this inductive bias. As such, our work opens the door for fully leveraging interventional data in different causal tasks in a differentiable and scalable way. 

Causal discovery is fundamental for understanding complex systems by identifying underlying causal relationships from data. It has significant applications across various fields, including biology \citep{meinshausen2016methods,chevalley2022causalbench,chevalley2023causalbench}, medicine \citep{feuerriegel2024causal}, and social sciences \citep{imbens2015causal}, where causal insights inform decision-making and advance scientific knowledge. Traditionally, causal discovery has relied heavily on observational data due to the practical challenges and costs associated with conducting large-scale interventional experiments. However, observational data alone often necessitates strong assumptions about the data distribution to ensure identifiability beyond the Markov equivalence class \citep{spirtes2000causation, shimizu2006linear, hoyer2008nonlinear}.

The emergence of large-scale interventional datasets, particularly in domains like single-cell genomics \citep{replogle2022mapping, datlinger2017pooled, dixit2016perturb}, introduces new opportunities and challenges for causal discovery. Interventional data, obtained through targeted manipulations of variables, offers a unique perspective by revealing causal mechanisms that may be obscured in purely observational studies. Recently \citep{chevalley2024deriving} introduced Intersort, which uses the notion of \emph{interventional faithfulness} and a score on causal orders, enabling the inference of causal orderings by comparing marginal distributions across observational and interventional settings.

Despite its promising performance, Intersort faces significant limitations. Specifically, it lacks differentiability, which hinders its integration into gradient-based learning frameworks commonly used in modern machine learning. Additionally, scalability remains a challenge, making this method impractical for applications involving large numbers of variables—a common scenario in fields such as genomics \citep{replogle2022mapping,chevalley2022causalbench} and neuroscience that involve up to tens of thousands of variables.

In this work, we address these challenges by extending Intersort to a continuous realm to make it both differentiable and scalable. We achieve this by expressing the score in terms of a potential function and utilizing differentiable sorting and ranking techniques, including the Sinkhorn operator \citep{cuturi2013sinkhorn}. This novel formulation allows us to incorporate a useful inductive bias into downstream tasks as a differentiable constraint or regularizer, enabling seamless integration into gradient-based optimization frameworks.

We develop a causal discovery algorithm that integrates differentiable Intersort score into its objective function. Our empirical evaluations on diverse simulated datasets—including linear, random Fourier features, gene regulatory networks (GRNs) and neural network models—demonstrate that the proposed regularized algorithm significantly outperforms baseline methods such as GIES \citep{hauser2012characterization} and DCDI \citep{brouillard2020differentiable} on RFF and GRN data. Moreover, we demonstrate that our approach exhibits robustness across different data distributions and noise types. The algorithm efficiently scales with large datasets, maintaining consistent performance regardless of data size.

Our contributions pave the way for fully leveraging interventional data in various causal tasks in a scalable and differentiable manner. By addressing the limitations of existing methods, we enable the application of interventional faithfulness in large-scale settings and facilitate its integration into modern causal machine learning pipelines.

\
\section{Related work}

%\arash{@Mathieu: any other method to add here?}
Causal discovery is a fundamental problem in machine learning and statistics, aiming to uncover causal relationships from data. Traditional methods can be broadly categorized into constraint-based, score-based, and functional approaches.
 Algorithms like PC \citep{spirtes2000causation} and FCI \citep{spirtes2001anytime} use conditional independence tests to infer causal structures. These methods often require faithfulness and causal sufficiency assumptions, which may not hold in real-world scenarios.
 Score-based approaches, such as GES \citep{chickering2002optimal} and GIES \citep{hauser2012characterization}, search over possible graph structures to maximize a goodness-of-fit score. More recently, \citet{zheng2018dags} introduced NOTEARS, a differentiable approach that formulates causal discovery as a continuous optimization problem using an acyclicity constraint. Most recently proposed models follow the idea of NOTEARS and are thus continuously differentiable, e.g. using neural networks to model the functional relationships  \citep{brouillard2020differentiable,lachapelle2019gradient}.

\citet{chevalley2024deriving} recently introduced Intersort, a score-based algorithm to derive the causal order of the variables when many single-variable interventions are available. Intersort relies on a light assumption on the changes in marginal distributions across observational and interventional settings, as measure by a statistical distance (see \cref{def:inter} subsequently). %They proposed the Intersort algorithm to leverage that assumption. 
Intersort is a two-step algorithm, where the first step, SORTRANKING, finds an initial ordering by taking into account the scale of the measured distances, and the second step, LOCALSEARCH, search in a close neighbourhood in permutation space to improve the score. Their algorithm suffers from scalability issues, as the second step, LOCALSEARCH, becomes computationally intractable for large datasets. Moreover, it is unclear how this inductive bias can be applied to downstream causal task, as the score is not differentiable with respect to a causal order and thus it cannot be integrated into continuously differentiable models. Previous to Intersort, most methods to infer the causal order of a system worked primarily on observational data. For example, EASE \citep{gnecco2021causal} leverages extreme values to identify the causal order. \citet{reisach2021beware} observed that the performance of continuously differentiable causal discovery models on synthetic datasets strongly correlates with \emph{varsortability}, which measures to what extent the marginal variance corresponds to the true causal order. Similar ideas were consequently developed to derive the causal order based on score matching \citep{rolland2022score,NEURIPS2023_93ed7493,pmlr-v213-montagna23a}.

%\paragraph{Differentiable sorting and ranking}

%\mathieu{TODO: similar to Intersort, review of main ideas for causal discovery, review the idea of learning only the causal order and talk about intersort.  Review of methods for differentiable ranking/sorting/permutation matrix learning. Review methods for causal discovery. }

%The concept of interventional faithfulness, where interventions on a node cause significant deviations in the distribution of downstream nodes, is discussed in various contexts. It aligns closely with the notion of "c-faithfulness" in causal graphs, where interventional distributions respect the causal structure and result in significant changes in downstream nodes \cite{shanmugam2015learning,squires2020permutation}. Assumption 4.4 in \cite{yang2018characterizing} explicitly states that interventions on upstream nodes should affect downstream nodes. The causal Markov condition and faithfulness for both observational and interventional distributions also support this idea\cite{addanki2021collaborative}. The use of interventional data to improve the identifiability of causal models is another related concept, emphasizing how interventions can reveal causal relationships not identifiable from observational data alone\cite{hauser2014two}.

\section{Method}
\label{sec:method}

%In the following sections, we begin by outlining the necessary definitions and assumptions. Next, we present our differentiable reformulation of the Intersort score. Finally, we leverage this score to develop a full-fledged causal order discovery algorithm.

\subsection{Definitions and Assumptions}
\label{sec:definitions}

In this section, we introduce notations and definitions that are used throughout the paper inspired by \citep{pearl2009causality, peters2017elements}.

% \paragraph{Statistical Metric}

Let $(\mathcal{M}, d)$ be a metric space, and let $\mathcal{P}(\mathcal{M})$ denote the set of probability measures over $\mathcal{M}$. We define $D$ to be a  statistical distance function $D: \mathcal{P}(\mathcal{M}) \times \mathcal{P}(\mathcal{M}) \rightarrow [0, \infty)$ that measures the divergence between probability distributions on $\mathcal{M}$.
% \paragraph{Random Variables and Distributions}
Consider a set of $d$ random variables $\mathbf{X} = (X_1, X_2, \dots, X_d)$ indexed by $V = \{1, 2, \dots, d\}$, with joint distribution $P_{\mathbf{X}}$. We denote the marginal distribution of each variable as $P_{X_i}$ for $i \in V$.
% \paragraph{Causal Graph}
A causal graph is a tuple $\mathcal{G} = (V, E)$ of nodes and edges that form a Directed Acyclic Graph (DAG), where $V$ is the set of nodes (variables), and $E \subseteq V \times V$ is the set of directed edges representing causal relationships. An edge $(i, j) \in E$ indicates that variable $X_i$ is a direct cause of variable $X_j$.
Let $\mathbf{A}^{\mathcal{G}}$ be the adjacency matrix of $\mathcal{G}$, where $\mathbf{A}^{\mathcal{G}}_{ij} = 1$ if $(i, j) \in E$, and $\mathbf{A}^{\mathcal{G}}_{ij} = 0$ otherwise. For each node $j \in V$, the set of parents $\mathrm{Pa}(j)$ consists of all nodes with edges pointing to $j$, i.e., $\mathrm{Pa}(j) = \{ i \in V \mid (i, j) \in E \}$.
We denote the set of descendants of node $i$ as $\mathrm{De}_{\mathcal{G}}(i)$, which includes all nodes reachable from $i$ via directed paths. Similarly, the set of ancestors of $i$ is denoted as $\mathrm{An}_{\mathcal{G}}(i)$.

% \paragraph{Structural Causal Models (SCMs) and Interventions}

An SCM $\mathcal{C} = (\mathbf{S}, P_N)$ consists of a set of structural assignments $\mathbf{S}$ and a joint distribution over exogenous noise variables $P_N$. Each variable $X_j$ is assigned via a structural equation:
\[
X_j = f_j\left( \mathbf{X}_{\mathrm{Pa}(j)}, N_j \right),
\]
where $N_j$ is an exogenous noise variable, and $\mathbf{X}_{\mathrm{Pa}(j)}$ are the parent variables of $X_j$.

In our work, we focus on interventions that modify the structural assignments of certain variables. Specifically, we consider interventions where the structural assignment of a variable $X_k$ is replaced by a new exogenous variable $\tilde{N}_k$, independent of its parents 
$
X_k = \tilde{N}_k.
$

\begin{definition}
A \emph{causal order} is a permutation $\pi: V \rightarrow \{1, 2, \dots, d\}$ such that for any edge $(i, j) \in E$, we have $\pi(i) < \pi(j)$. This ensures that causes precede their effects in the ordering \citep{peters2017elements}.

\end{definition}

Since $\mathcal{G}$ is acyclic, at least one causal order exists, though it may not be unique. We denote the set of all causal orders consistent with $\mathcal{G}$ as $\Pi^*$.

% \paragraph{Causal Order}

% \paragraph{Top Order Divergence $D_{\text{top}}$}

\begin{definition}
\label{def:dtop}
To measure the discrepancy between a proposed permutation $\pi$ and the true causal graph $\mathcal{G}$, we use the \emph{top order divergence} \citep{rolland2022score}, defined as:
\begin{equation*}
D_{\text{top}}(\mathcal{G}, \pi) = \sum_{\pi(i) > \pi(j)} \mathbf{A}^{\mathcal{G}}_{ij}.
\end{equation*}
\end{definition}

This divergence counts the number of edges that are inconsistent with the ordering $\pi$, i.e., edges where the cause appears after the effect in the proposed ordering. For any causal order $\pi^* \in \Pi^*$, we have $D_{\text{top}}(\mathcal{G}, \pi^*) = 0$.

\begin{assumption}[Interventional Faithfulness]
    Interventional faithfulness \citep{chevalley2024deriving} assumes that all directed paths in the causal graph manifest as significant changes in the distribution under interventions as measured by a statistical distance.
Specifically, if intervening on variable $X_i$ leads to a detectable change in the distribution of variable $X_j$, then there must be a directed path from $X_i$ to $X_j$ in the causal graph $\mathcal{G}$. Conversely, if there is no directed path from $X_i$ to $X_j$, then intervening on $X_i$ does not affect the distribution of $X_j$ beyond a significance threshold $\epsilon$.
\end{assumption}

Interventional faithfulness  allows us to use statistical divergences between observational and interventional distributions to infer the causal ordering of variables. By assuming interventional faithfulness, we can relate changes observed under interventions to the underlying causal structure. More formally, it is defined as:

\begin{definition}[\citet{chevalley2024deriving}]
\label{def:inter}
\looseness=-1 Given the distributions $P_X^{\mathcal{C}, (\emptyset)}$ and $P_X^{\mathcal{C}, do(X_k := \Tilde{N}_k)}, \forall k \in \mathcal{I}$, we say that the tuple $(\Tilde{N}, \mathcal{C})$ is $\epsilon$\emph{-interventionally faithful} to the graph $\mathcal{G}$ associated to $\mathcal{C}$ if for all $i \neq j, i \in \mathcal{I}, j \in V$, $D \left ( P_{X_j}^{\mathcal{C}, (\emptyset)}, P_{X_j}^{\mathcal{C}, do(X_i := \Tilde{N}_i)} \right ) > \epsilon$ if and only if there is a directed path from $i$ to $j$ in $\mathcal{G}$.
\end{definition}

% \subsection{Differentiable Intersort}
% \label{sec:diffintersort}
% \arash{This part needs a rewrite. first intersort has to be introduced to the extent that is needed and then build the rest of the method upon that.}
%  especially the second part of the algorithm, LOCALSEARCH. To do so, we first need a continuous representation of our solution space. Given this continuous vector representation of a causal order, we can then apply techniques from the differentiable sorting and ranking literature. More specifically, we can differentiably find the permutation matrix corresponding to the candidate causal order. We then rewrite the original Intersort score such that it is a function of the permutation matrix. Beyond being able to solve the Intersort objective through gradient descent at scale, this differentiable parameterization allows us to regularize continuously differentiable models such that they fall within the solution space of the Intersort maximizers. We thus now have a way to induce the interventional faithfulness inductive bias into many downstream problems. As a proof of concept, we describe a causal discovery algorithm, which optimizes over the permuthaedron, such that we can add the DiffIntersort loss as a regularizer.

\subsection{Differentiable score}
While Intersort demonstrates cutting-edge results in discerning causal order among variables, its primary drawback is the substantial computational cost, which restricts its application to small-scale problems. The authors of the original paper acknowledged this limitation, confining their evaluation to a mere 30 nodes \cite{chevalley2024deriving}. A covariate set of this size is prohibitively small for many real-world problems, such as those in genomics and climate change, where tens of thousands of variables are considered. We aim to enhance the scalability of Intersort through a differentiable objective function. This not only facilitates scaling to a considerably larger number of variables but also enables the integration of this algorithm in end-to-end gradient-based model training. In the subsequent sections, we  initially revisit the fundamental score that underpins Intersort. Following this, we proceed to present a differentiable formulation, DiffIntersort, that addresses these shortcomings.

\textit{Intersort score--} Given an observational distribution $P_X^{\mathcal{C}, (\emptyset)}$ and a set of interventional distributions $\mathcal{P}_{int} = \{P_X^{\mathcal{C}, do(X_k := \Tilde{N}_k)}, k \in \mathcal{I}\}$, $\mathcal{I} \subseteq V$, \citet{chevalley2024deriving} define the following score for a permutation $\pi$, for some statistical distance $D: \mathcal{P}(M) \times \mathcal{P}(M) \rightarrow [0, \infty)$, $\epsilon > 0$, $c \geq 0$:

\begin{equation}
\label{eq:score}
\begin{split}
    S(\pi, \epsilon, D, \mathcal{I}, P_X^{\mathcal{C}, (\emptyset)}, \mathcal{P}_{int}, c) = 
    & \sum_{\pi(i) < \pi(j), i \in \mathcal{I}, j \in V} \left( D  \left( P_{X_j}^{\mathcal{C}, (\emptyset)}, P_{X_j}^{\mathcal{C}, do(X_i := \Tilde{N}_i)} \right) - \epsilon \right) \\
    & + c \cdot d \cdot \1_{D  \left( P_{X_j}^{\mathcal{C}, (\emptyset)}, P_{X_j}^{\mathcal{C}, do(X_i := \Tilde{N}_i)} \right) > \epsilon}
\end{split}
\end{equation}

We parameterize an ordering of the variable as determined by a permutation of the variables $\pi$ via a potential $\vp \in \R^d$ such that $\pi(i) < \pi(j) \iff p_i > p_j$. We write the permutation matrix associated to $\vp$ as $\perm(\vp)$, which is a $d \times d$ binary matrix. Let $s: \R^d \rightarrow \R^d$ be the sorting function, which sorts a vector of $d$ numbers in descending order. The Jacobian of $s(\vp)$ with respect to $\vp$ is equal to the permutation matrix $\perm(\vp)$. We define $(\grad(\vp))_{ij} = p_i - p_j$, which is non-negative if and only if $\pi(i) < \pi(j)$ in the associated topological order. Applying the element-wise $\step$ function produces $(\step(\grad(\vp)))_{ij} = \mathrm{1}_{p_i - p_j > 0}$ which is a matrix of the possible edges according to the potential $\vp$.

We aim to rewrite the score such that it is parameterized by a potential $\vp$. By building the matrix $\rmD \in \R^{d \times d}$ as 

\begin{equation}
\label{eq:diff_score_compact}
    \rmD_{ij} = \left\{ 
    \begin{array}{lcl}
          D  \left ( \left ( P_{X_j}^{\mathcal{C}, (\emptyset)}, P_{X_j}^{\mathcal{C}, do(X_i := \Tilde{N}_i)} \right ) - \epsilon \right ) + c \cdot d \cdot \1_{D  \left ( P_{X_j}^{\mathcal{C}, (\emptyset)}, P_{X_j}^{\mathcal{C}, do(X_i := \Tilde{N}_i)} \right ) > \epsilon}& \mbox{if} 
         &  i \in \mathcal{I}\\ 
         0  & \mbox{if} & i \notin \mathcal{I}
    \end{array}\right.
\end{equation}

we can write the score in terms of the potential instead of permutation as follows:

\begin{equation}
\label{eq:score_cont}
    S(\vp, \epsilon, D, \mathcal{I}, P_X^{\mathcal{C}, (\emptyset)}, \mathcal{P}_{int}, c) = \sum_{i, j}
    \left ( \rmD \odot \step(\grad(\vp)) \right )_{ij}.
\end{equation}

The relationship between the potential and permutation is clarified through the following theoretical result.

\begin{theorem}\label{thm:1}
    Let $\sP = \argmax_{\vp}  S(\vp, \epsilon, D, \mathcal{I}, P_X^{\mathcal{C}, (\emptyset)}, \mathcal{P}_{int}, c) \text{ s.t. } \vp_i \neq \vp_j \forall i, j \in \{1, \cdots, d\}$ be the set of potentials that maximize the score, such that no two potentials are equal. $\Pi = \argmax_{\pi}  S(\pi, \epsilon, D, \mathcal{I}, P_X^{\mathcal{C}, (\emptyset)}, \mathcal{P}_{int}, c)$ be the set of permutations that maximize the Intersort score. For all $\pi \in \Pi$, there is a $\vp \in \sP$ such that $\pi(i) < \pi(j) \iff p_i > p_j$.
\end{theorem}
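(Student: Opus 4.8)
The plan is to show that the continuous score in \cref{eq:score_cont} and the permutation score in \cref{eq:score} are, after passing through the potential-to-permutation correspondence, the same number, and then to lift this pointwise equality to the level of their $\argmax$ sets.

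First I would record the elementary identity that does all the work. For a potential $\vp$ with pairwise distinct entries, let $\pi_\vp$ denote the permutation it induces, i.e. $\pi_\vp(i) < \pi_\vp(j) \iff p_i > p_j$. Since $(\step(\grad(\vp)))_{ij} = \1_{p_i - p_j > 0} = \1_{p_i > p_j} = \1_{\pi_\vp(i) < \pi_\vp(j)}$, expanding the Hadamard product in \cref{eq:score_cont} gives $S(\vp) = \sum_{i,j} \rmD_{ij}\,\1_{\pi_\vp(i) < \pi_\vp(j)} = \sum_{\pi_\vp(i) < \pi_\vp(j)} \rmD_{ij}$. Because $\rmD_{ij} = 0$ whenever $i \notin \mathcal{I}$, the surviving terms are exactly those with $i \in \mathcal{I}$, $j \in V$, and $\pi_\vp(i) < \pi_\vp(j)$, so by the definition of $\rmD_{ij}$ in \cref{eq:diff_score_compact} this sum is precisely the permutation score $S(\pi_\vp)$ of \cref{eq:score}. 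Hence $S(\vp) = S(\pi_\vp)$ for every $\vp$ with distinct entries.

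Next I would establish that $\vp \mapsto \pi_\vp$ is surjective onto the set of all permutations: given any $\pi$, the assignment $p_i = -\pi(i)$ (or any strictly order-reversing one) produces a potential with distinct entries and $\pi_\vp = \pi$. Combined with the identity above, this shows that $S(\cdot)$, restricted to potentials with distinct entries, attains exactly the same set of values as the permutation score; in particular $\max_{\vp} S(\vp) = \max_{\pi} S(\pi)$ (the left-hand maximum ranging over distinct potentials). The left-hand maximum is attained because $S$ is locally constant on the open dense set of distinct potentials and takes at most $d!$ values, so $\sP \neq \emptyset$. Finally I would conclude: fix $\pi \in \Pi$, so $S(\pi) = \max_{\pi} S(\pi)$; build $\vp$ with distinct entries and $\pi_\vp = \pi$ as above; then $S(\vp) = S(\pi_\vp) = S(\pi) = \max_{\pi} S(\pi) = \max_{\vp} S(\vp)$, so $\vp \in \sP$, and by construction $\pi(i) < \pi(j) \iff p_i > p_j$, which is the claim.

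I do not expect a genuine obstacle; the content is the bookkeeping in the first paragraph, where one matches the $\step$-function indicators to the summation constraint of \cref{eq:score} and uses the vanishing of $\rmD_{ij}$ off $\mathcal{I}$ to recover the exact index set. The one point that needs care is the role of the distinctness constraint $p_i \neq p_j$: it guarantees that $\pi_\vp$ is a genuine total order, since ties would make $\step$ drop both $(i,j)$ and $(j,i)$ and break the correspondence, which is exactly why the theorem restricts the $\argmax$ to potentials with distinct entries.
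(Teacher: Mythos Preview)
Your proposal is correct and follows essentially the same approach as the paper's proof: both construct the potential $\vp$ from an optimal permutation $\pi$ via $p_i = -\pi(i)$, and both verify that the indicator $(\step(\grad(\vp)))_{ij}$ picks out exactly the index pairs $\pi(i) < \pi(j)$ appearing in the permutation score, so that $S(\vp) = S(\pi)$ and hence $\vp \in \sP$. Your write-up is somewhat more detailed than the paper's---you explicitly isolate the identity $S(\vp) = S(\pi_\vp)$ for arbitrary distinct-entry $\vp$, note that the vanishing of $\rmD_{ij}$ for $i \notin \mathcal{I}$ recovers the exact summation range, and remark on attainment of the maximum---but the underlying argument is the same.
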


The proof can be found in the appendix in \cref{sec:proof}. This score is still not practically useful as it provides non-informative gradients for $\vp$. To remedy this, inspired by \citet{annadani2023bayesdag} we define $\mL\in \binaryset$ as a matrix with upper triangular part to be $1$, and vector $\vo=[1,\ldots, d]$. They propose the formulation
\begin{align}
&\step(\grad(\vp)) = \perm(\vp)\mL \perm(\vp)^T \
&\text{where}\; \perm(\vp) = \argmax_{\perm'\in\bm{\Sigma}_d} \vp^T(\perm' \vo)
\label{eq: alternative formulation}
\end{align}
where $\bm{\Sigma}_d$ represents the space of all $d$ dimensional permutation matrices. This formulation opens the door to use tools from the differentiable permutation optimization literature. More specifically, we need to build a smooth approximation to the argmax operator in the definition of $\perm(\vp)$. Fortunately, theoretical results for this transformation is already known in the optimization literature which is built upon the concept of Sinkhorn Operator that we briefly discuss in the following. We refer readers to the original paper \citep{sinkhorn1964relationship} and further applications \citep{adams2011ranking} of Sinkhorn operator for detailed presentation.

The Sinkhorn operator, $\Sinkhorn(\mM)$, on a matrix $\mM$, involves a sequence of alternating row and column normalizations, known as Sinkhorn iterations. \citet{mena2018learning} demonstrated that the non-differentiable $\argmax$ problem
\begin{equation}
\label{eq:permutation_argmax}
\perm = \argmax_{\perm'\in\Sigma_d}\left\langle\perm', \mM\right\rangle_F
\end{equation}
can be relaxed using an entropy regularizer, where the solution is given by $\Sinkhorn(\mM/t)$. Specifically, they showed that $\Sinkhorn(\mM/t) = \argmax_{\perm'\in \polytope}\left\langle\perm',\mM\right\rangle + th(\perm')$, where $h(\cdot)$ denotes the entropy function. This regularized solution converges to the solution of \cref{eq:permutation_argmax} as $t \rightarrow 0$, shown by $\lim_{t\rightarrow 0}\Sinkhorn(\mM/t)$. The implication for our setting is that we can write \cref{eq: alternative formulation} as

\begin{equation}
    \argmax_{\perm'\in\bm{\Sigma}_d} \vp^T(\perm'\vo) = \argmax_{\perm'\in\bm{\Sigma}_d}\langle\perm',\vp\vo^T\rangle_F = \lim_{t\rightarrow 0}\Sinkhorn \left (\frac{\vp\vo^T}{t} \right ).
    \label{eq: Sinkhorn solution}
\end{equation}

In practice, we approximate the limit with a value of $t > 0$ and a certain number of iterations $T$, which results in a differentiable and doubly stochastic matrix in the $d$-dimensional 
 Birkhoff polytope $\polytope$. The parameter $t > 0$ acts as a temperature controlling the smoothness of the approximation. In our experiments, we use $t = 0.05$ and $T = 500$. We then apply the Hungarian algorithm \cite{kuhn1955hungarian} to obtain a binary matrix, and use the straight-through estimator in the backward pass. The resulting binary matrix is denoted as $\Sinkhorn^T_{\rm bin} (\vp\vo^T/t )$ with "bin" emphasizing a binary-valued matrix. As a result, the score becomes differentiable and can be differentiated through the iterations of the Sinkhorn operator. By replacing the non-differentiable part of \cref{eq:diff_score_compact} with this matrix, the complete form of the differentiable score (we call it DiffIntersort) is derived as

\begin{equation}
\label{eq:score_diffintersort}
    S(\vp, \epsilon, D, \mathcal{I}, P_X^{\mathcal{C}, (\emptyset)}, \mathcal{P}_{int}, t, T) = 
    \sum_{i, j}
    \left ( \rmD \odot \left ( \Sinkhorn^T_\textrm{bin} \left (\frac{\vp\vo^T}{t} \right )\mL \Sinkhorn^T_\textrm{bin} \left (\frac{\vp\vo^T}{t} \right )^T \right ) \right )_{ij}.
\end{equation}

For the rest of the paper, we drop in subscript "bin" and use $S(\vp)$ for conciseness.The maximizers of the DiffIntersort score and the Intersort score are equal for $t \rightarrow 0$ and $T \rightarrow \infty$ (\Cref{thm:1}). The DiffIntersort score $S(\mathbf{p})$ can be maximized with respect to the potential vector $\mathbf{p}$ using gradient descent algorithms. This allows us to find the ordering of variables that best aligns with the interventional data, according to the statistical distances captured in $\mathbf{D}$. 
% By parameterizing variable orderings through a potential vector and employing the Sinkhorn operator to relax the permutation matrices, we have derived a differentiable version of the Intersort score. This formulation enables efficient and scalable optimization of causal orderings in a differentiable framework, fully leveraging interventional data.

\subsection{Causal Discovery algorithm}
After deriving the differentiable score in the previous section, we now proceed to use the score in a causal discovery algorithm.
 Let's consider a dataset $\mathbf{X} \in \mathbb{R}^{n \times d}$ consisting of $n$ observations of $d$ variables $\{X_1, X_2, \dots, X_n\}$. We assume the data to be generated from an unknown Structural Equation Model (SEM), which can be described by a Directed Acyclic Graph (DAG) $\mathcal{G}$ representing the causal relationships among variables. Our goal is to recover the causal structure and ordering of the variables from both observational and interventional data.  We introduce a potential vector $\mathbf{p} \in \mathbb{R}^d$ that induces a permutation representing the causal ordering of variables. Let $S(\mathbf{p})$ be the DiffIntersort score, which measures the consistency of the ordering induced by $\mathbf{p}$ with the interventional data. The causal discovery can then be formulated as a constrained optimization problem

\begin{equation}
    \min_{\theta, \mathbf{p}} \ \mathcal{L}_{\text{fit}}(\theta, \mathbf{p}) \quad \text{subject to} \quad \mathbf{p} \in \arg\min_{\mathbf{p}'} S(\mathbf{p}')
    \label{eq:constrained_obj}
\end{equation}

where $\theta$ represents the parameters of the causal mechanisms (e.g., weight matrices in linear models), and $\mathcal{L}_{\text{fit}}(\theta, \mathbf{p})$ is the fitting loss that measures how well the model with parameters $\theta$ explains the observed data. The constraint ensures that the potential vector $\mathbf{p}$ minimizes the DiffIntersort score, thus enforcing a causal ordering consistent with the interventional data. We transform the constrained optimization into an unconstrained penalized problem

\begin{equation}
    \min_{\theta, \mathbf{p}} \quad \mathcal{L}_{\text{fit}}(\theta, \mathbf{p}) + \lambda S(\mathbf{p}),
    \label{eq:penalized_obj}
\end{equation}

where $\lambda > 0$ is a regularization parameter controlling the trade-off between fitting the data and enforcing the causal ordering through the DiffIntersort score.

As an example, a linear causal model can be constructed as
\begin{equation}
    X_j = \sum_{i=1}^{d} W_{ji} X_i + b_j + N_j,
    \label{eq:linear_model}
\end{equation}

where $W_{ji}$ are the entries of the weight matrix $\mathbf{W} \in \mathbb{R}^{d \times d}$, $b_j$ is the bias term, and $N_j$ is a noise term. To enforce the causal ordering induced by $\mathbf{p}$, we use the permuted upper-triangular matrix $\mM_{\mathbf{p}} = \Sinkhorn^T_\textrm{bin} (\vp\vo^T/t )\mL \Sinkhorn^T_\textrm{bin} (\vp\vo^T/t)^T$, which is a $d \times d$ matrix with $d (d - 1)/2$ entries equal to $1$. The matrix represents the possible locations of edges in the graph according to the causal ordering $\mathbf{p}$. By element-wise multiplication $\tilde{\mathbf{W}} = \mathbf{W} \circ \mM_{\mathbf{p}}^T$, matrix $\mM_{\mathbf{p}}$ acts as a mask to ensure that variable $X_j$ can only depend on variables preceding it in the causal ordering. The predicted values can be written in terms of the entries of $\tilde{\mathbf{W}}$ as $\hat{X_j} = \sum_{i=1}^{d} \tilde{\mathbf{W}}_{ji} X_i + b_j$.

Inspired by the fitting loss in \citet{shen2023causality}, we define the fitting loss $\mathcal{L}_{\text{fit}}(\theta)$ as:

\begin{equation}
    \mathcal{L}_{\text{fit}}(\theta, \mathbf{p}) = \frac{1}{n^0} \sum_{i=1}^{n^0} \ell(\mathbf{x}_i, \hat{\mathbf{x}}_i; \theta, \mathbf{p}) + \gamma \sum_{e \in \mathcal{E}} \omega^e \left(\frac{1}{n^e} \sum_{i=1}^{n^e}  \ell^e(\mathbf{x}_i, \hat{\mathbf{x}}_i; \theta, \mathbf{p}) - \frac{1}{n^0} \sum_{i=1}^{n^0} \ell^0( \mathbf{x}_i, \hat{\mathbf{x}}_i; \theta, \mathbf{p}) \right) ,
    \label{eq:fitting_loss}
\end{equation}

where $\ell(\mathbf{x}_i, \hat{\mathbf{x}}_i; \theta)$ is the mean absolute error (MAE) loss function for observational sample $i$, $\ell^e(\mathbf{x}_i, \hat{\mathbf{x}}_i; \theta)$ is the loss for samples in environment $e$. In our case, an environment corresponds to an intervention on one variable. $\gamma \geq 0$ is a parameter controlling the emphasis on invariance across environments. We use $\gamma = 0.5$. $\omega^e$ are weights for each environment. We set $\omega^e = \frac{1}{|\mathcal{E}|}$. $\mathcal{E}$ is the set of environments, with $0 \in \mathcal{E}$ denoting the reference observational environment. $n^e$ is the number of samples in environment $e \in \mathcal{E}$. The loss encourages the model to fit the data in the reference environment while penalizing deviations in performance across different environments, promoting robustness to interventions. This should encourage the weights to corresponds to the true causal weights, as the equivalence between robustness and causality is well established \citep{meinshausen2018causality}.

Combining the fitting loss and the regularization terms, the final loss function is:

\begin{equation}
\begin{split}
    \mathcal{L}(\theta, \mathbf{p}) = & \frac{1}{n^0} \sum_{i=1}^{n^0} \ell(\mathbf{x}_i, \hat{\mathbf{x}}_i; \theta, \mathbf{p}) \\
    & + \gamma \sum_{e \in \mathcal{E}} \omega^e \left(\frac{1}{n^e} \sum_{i=1}^{n^e} \ell^e(\mathbf{x}_i, \hat{\mathbf{x}}_i; \theta, \mathbf{p}) - \frac{1}{n^0} \sum_{i=1}^{n^0} \ell^0(\mathbf{x}_i, \hat{\mathbf{x}}_i; \theta, \mathbf{p}) \right) \\
    & + \lambda_1 \left\| \mathbf{W} \right\|_1 + \lambda_2 S(\mathbf{p}).
\end{split}
\label{eq:final_loss}
\end{equation}

This loss function includes all the components:
\begin{inparaenum}[(1)]
    \item \emph{Data Fitting Loss} Measures how well the model  predicts the observed data, adjusted for interventions,
    \item \emph{Environment Invariance Penalty} Encourages the model to have consistent performance across different environments,
    \item \emph{$L_1$ Regularization}: Promotes sparsity in the weight matrix $\mathbf{W}$,
    \item \emph{DiffIntersort Regularization}: Incorporates interventional faithfulness by penalizing with the DiffIntersort score $S(\mathbf{p})$ \cref{eq:score_diffintersort}
\end{inparaenum}. We also note that no acyclicity constraint is needed as the weight matrix is enforced to be acyclic through the masking based on the causal order $\mathbf{p}$.

\section{Empirical Results}

We next evaluate the proposed DiffIntersort differentiable score both in its effectiveness in deriving the causal order of a system, as well as it usefulness as a differentiable constraint in a causal discovery model.% We here closely follow the experimental settings of \citep{chevalley2024deriving}.

%\subsection{Simulations}

We first evaluate the DiffIntersort score in it ability to recover the causal order in simulated graphs and distance matrices. We here reproduce the experiment of \citep{chevalley2024deriving}. We compare the top order divergence of DiffIntersort to SORTRANKING, and to Intersort for $5$ and $30$ variables, and the upper-bounds of Thm 2 and Thm 4 derived in \citep{chevalley2024deriving}. Intersort does not scale beyond $100$ variables. The upper-bounds serve as a sanity check to assess how far to the true optimum of the score the approximate solution is. We evaluate on both Erdős-Rényi distribution \citep{erdHos1960evolution} and scale-free network modelled by the Barabasi-Albert distribution \citet{albert2002statistical}, with varying edge densities and intervention coverage. The results are reported in \cref{fig:sim} for $2000$ variables and in \cref{fig:er-app,fig:sf-app} for $5$, $30$, $100$ and $1000$ variables. It is crucial that our score is optimizable up to at least $2000$ variables as it is a common scale in real world datasets such as single-cell transcriptomics \citep{replogle2022mapping}. As observed, DiffIntersort fulfills the upper-bounds for all settings, even at large scale. At large scale, it also outperforms SORTRANKING in almost all settings. Those results validate our proposed approach of solving the Intersort problem in a continuous and differentiable framework, and guarantees that it is not limited by scale. %, which is an issue that was only recently resolved for continuous acyclicity constraints \citep{nazaret2023stable}.

\begin{figure}[t]
    \centering
    \begin{subfigure}{0.49\textwidth}
        \includegraphics[width=\linewidth]{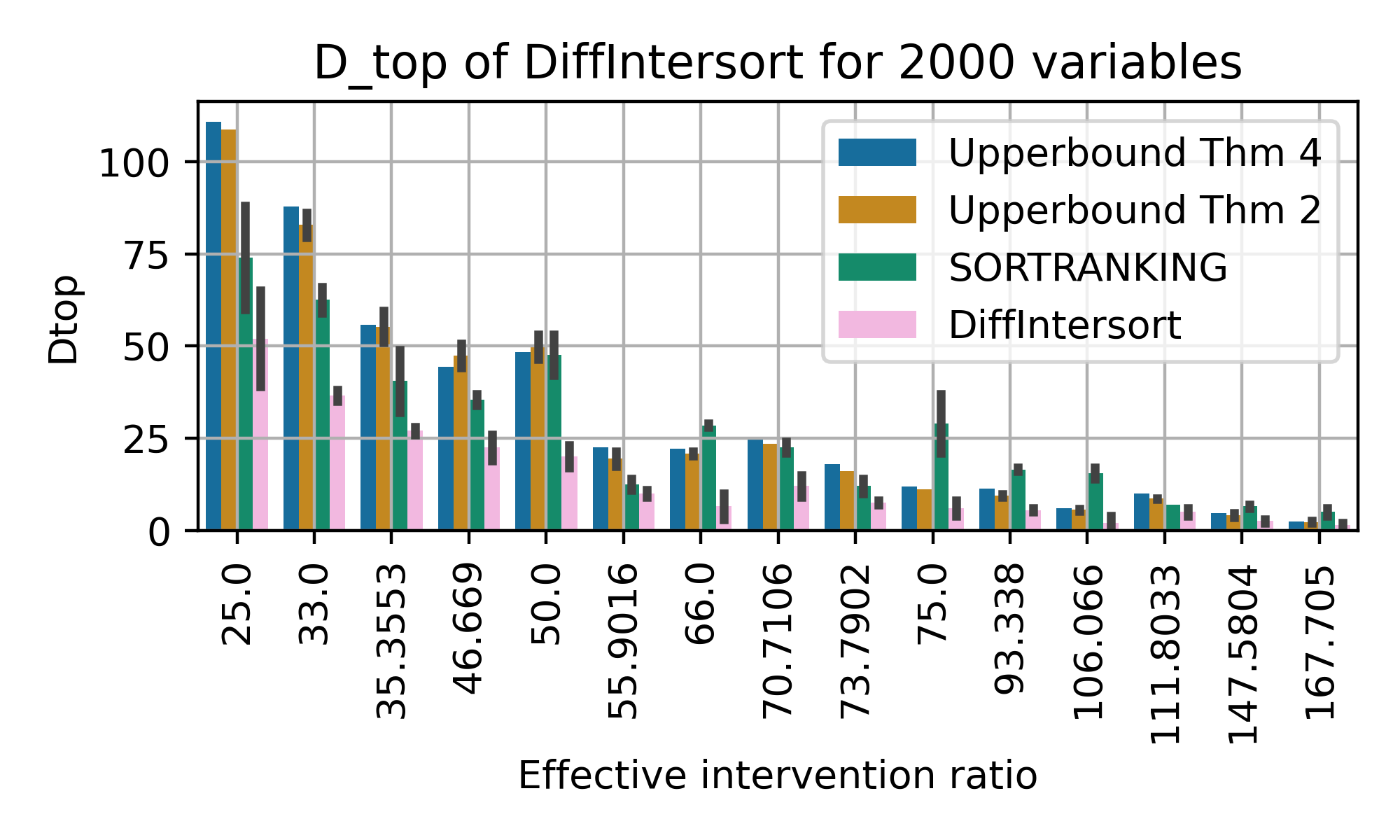}
        \caption{Simulation ER with 2000 variables}
        \label{fig:sim5}
    \end{subfigure}
    \begin{subfigure}{0.49\textwidth}
        \includegraphics[width=\linewidth]{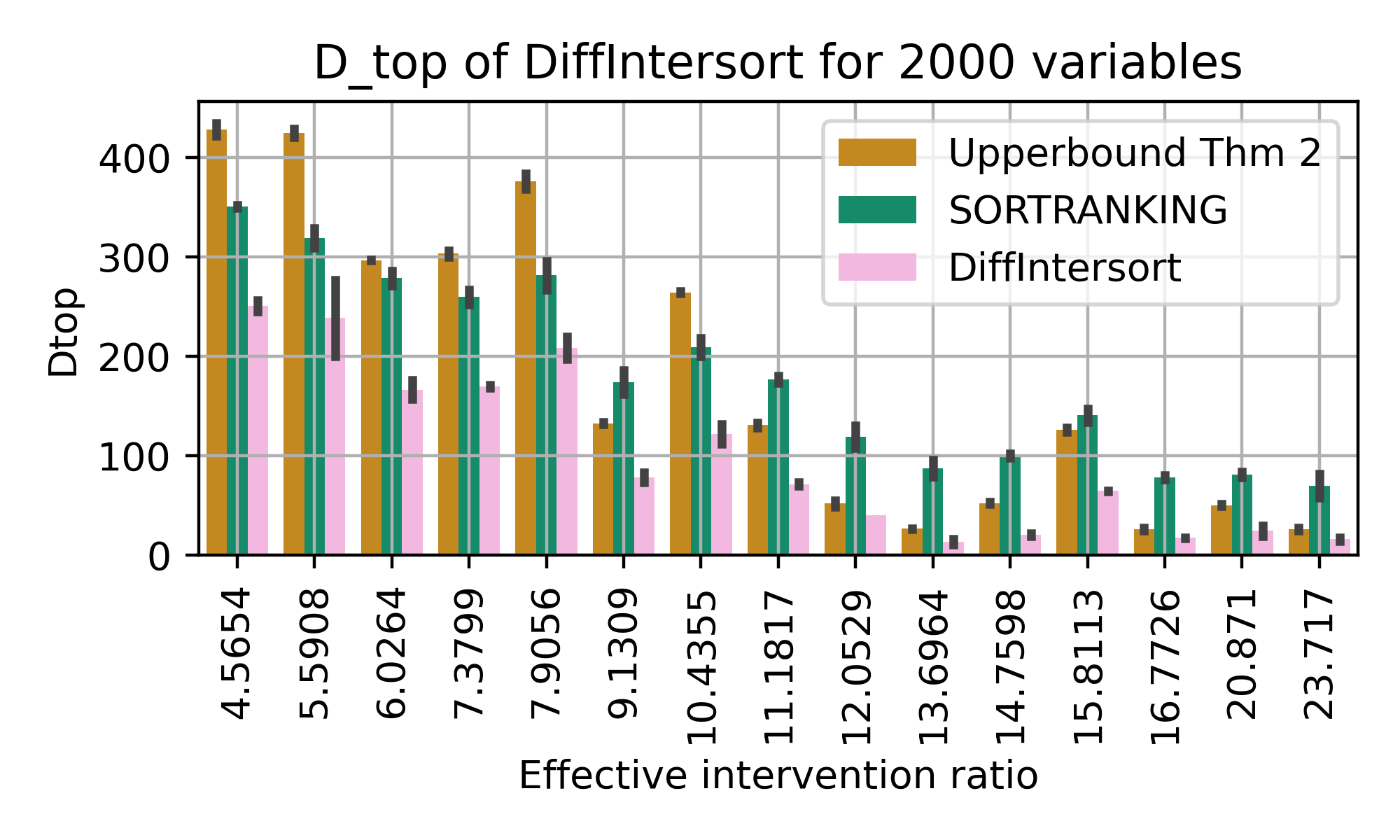}
        \caption{Simulation SF with 2000 variables}
        \label{fig:sim5}
    \end{subfigure}
    \caption{Simulation and comparison between the bounds of Thm 2 and 4 of \citet{chevalley2024deriving} for Erdős-Rényi (ER, left) and scale-free networks (SF, right) for $2000$ variables. We compare the causal order obtained by maximizing our proposed DiffIntersort score and the output of SORTRANKING. For each setting, we draw $1$ graphs per setting, following a ER distribution with a probability of edges per variable $p_{e}$ in $\{ 0.0001, 0.00005, 0.00002 \}$ and following a Barabasi-Albert SF distribution, with average edge per variable in $\{1, 2, 3\}$. A setting is the tuple $(p_{int}, p_{e})$, where $p_{e} = \frac{2\mathrm{E}(\#edges)}{d(d-1)}$ for the SF distribution. Then, for each graph, we run the algorithm on $1$ configuration, where each configuration corresponds to a draw of the targeted variables following $p_{int}$. We have $p_{int} \in \{0.25, 0.33, 0.5, 0.66, 0.75\}$.  The settings are ordered on the x-axis following what is called the effective intervention ratio $\frac{p_{int}}{\sqrt{p_e}}$ \citep{chevalley2024deriving}. }
    \label{fig:sim}
\end{figure}

%\subsection{Synthetic data}

We now evaluate our method, DiffIntersort, on simulated data and compare its performance to various baseline methods. We follow the experimental setup of \citet{chevalley2024deriving} to ensure a fair and consistent evaluation across different domains. Specifically, we generate graphs from an Erdős-Rényi distribution \citep{erdHos1960evolution} with an expected number of edges per variable $c \in \{1, 2\}$. Data is simulated using both linear relationships and random Fourier features (RFF) additive functions to capture non-linear dependencies. In addition to these synthetic datasets, we apply our models to simulated single-cell RNA sequencing data generated using the SERGIO tool \citep{dibaeinia2020sergio}, utilizing the code provided by \citet{lorch2022amortized} (MIT License, v1.0.5). We also test our method on neural network functional data following the setup of \citet{brouillard2020differentiable}, using the implementation from \citet{nazaret2023stable} (MIT License, v0.1.0). To assess the impact of interventions, we vary the ratio of intervened variables in the set ${25\%, 50\%, 75\%, 100\%}$. All datasets are standardized based on the mean and variance of the observational data to eliminate the Varsortability artifact identified by \citet{reisach2021beware}. For the linear and RFF domains, the noise distribution is chosen uniformly at random from the following options: uniform Gaussian (noise scale independent of the parents), heteroscedastic Gaussian (noise scale functionally dependent on the parents), and Laplace distribution. In the neural network domain, the noise distribution is Gaussian with a fixed variance. We conduct experiments on 10 simulated datasets for each domain and each ratio of intervened variables. The observational datasets contain 5,000 samples, and each intervention dataset comprises 100 samples, mirroring the sample sizes typically found in real single-cell transcriptomics studies \citep{replogle2022mapping}.

We compare the performance of DiffIntersort and SORTRANKING \citep{chevalley2024deriving} as measured by the top order divergence $D_{top}$ on $100$ variables in \cref{fig:ordering}. For the DiffIntersort score and the Intersort score, we use the same parameters as in \citet{chevalley2024deriving}: $\epsilon = 0.3$ for linear, RFF and NN data, and $\epsilon = 0.5$ for GRN data, and $c = 0.5$. We use the Wasserstein distance \citep{villani2009optimal} for the statistical metric. Results for $10$ and $30$ variables, additionally compared to Intersort, can be found in the appendix in \cref{fig:ordering-app}. As can be observed, the performance of the two algorithms is close. This demonstrates that the optimizing DiffIntersort can be solved at scale using continuously differentiable optimization also on realistic synthetic data. 

\begin{figure}[t]
    \centering
    \begin{subfigure}{0.49\textwidth}
        \includegraphics[width=0.9\linewidth]{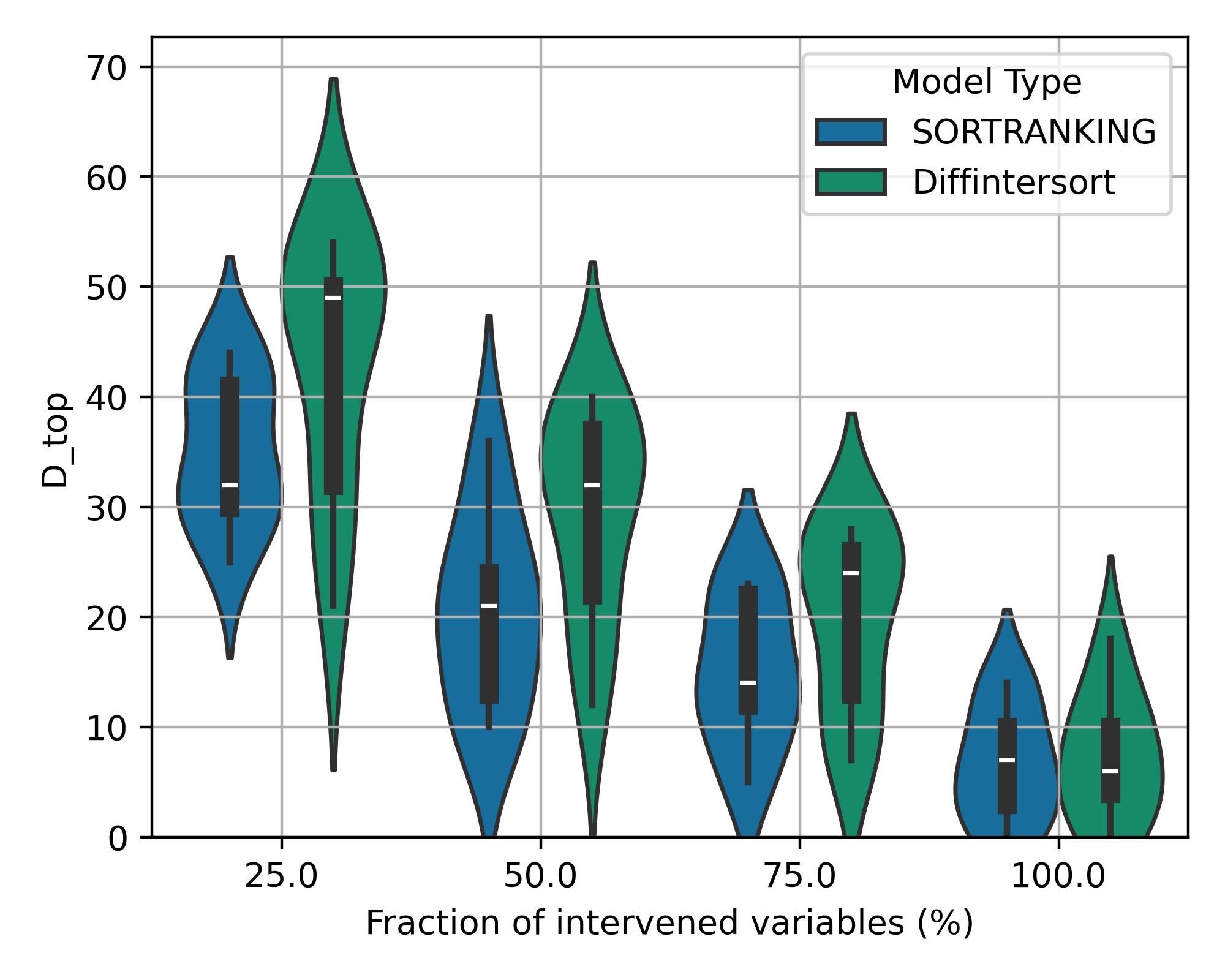}
        \caption{Linear 100 variables}
        \label{fig:rff-30}
    \end{subfigure}
    \hfill
    \begin{subfigure}{0.49\textwidth}
        \includegraphics[width=0.9\linewidth]{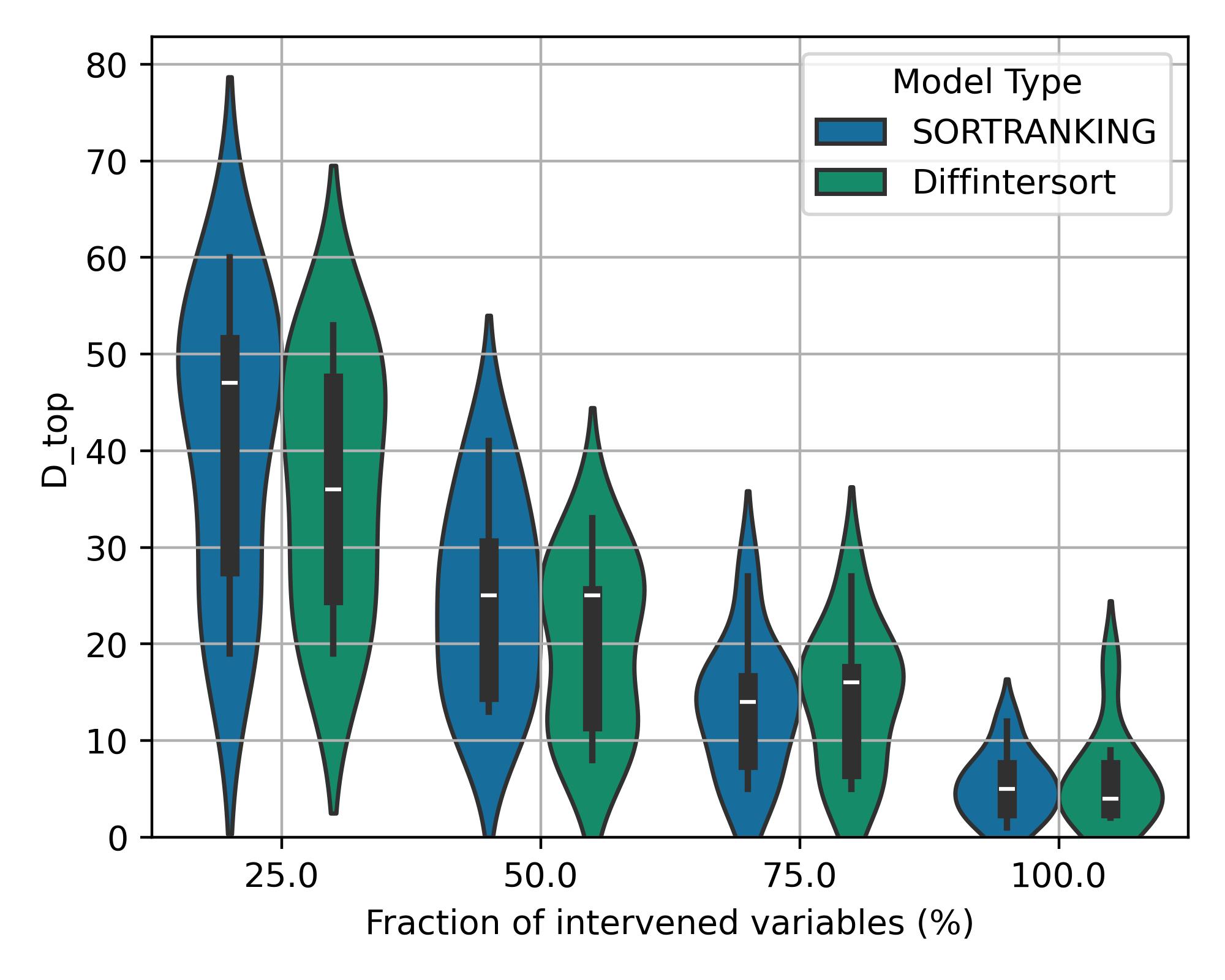}
        \caption{RFF 100 variables}
        \label{fig:grn-30}
    \end{subfigure}
    %\hfill
    \begin{subfigure}{0.49\textwidth}
        \includegraphics[width=0.9\linewidth]{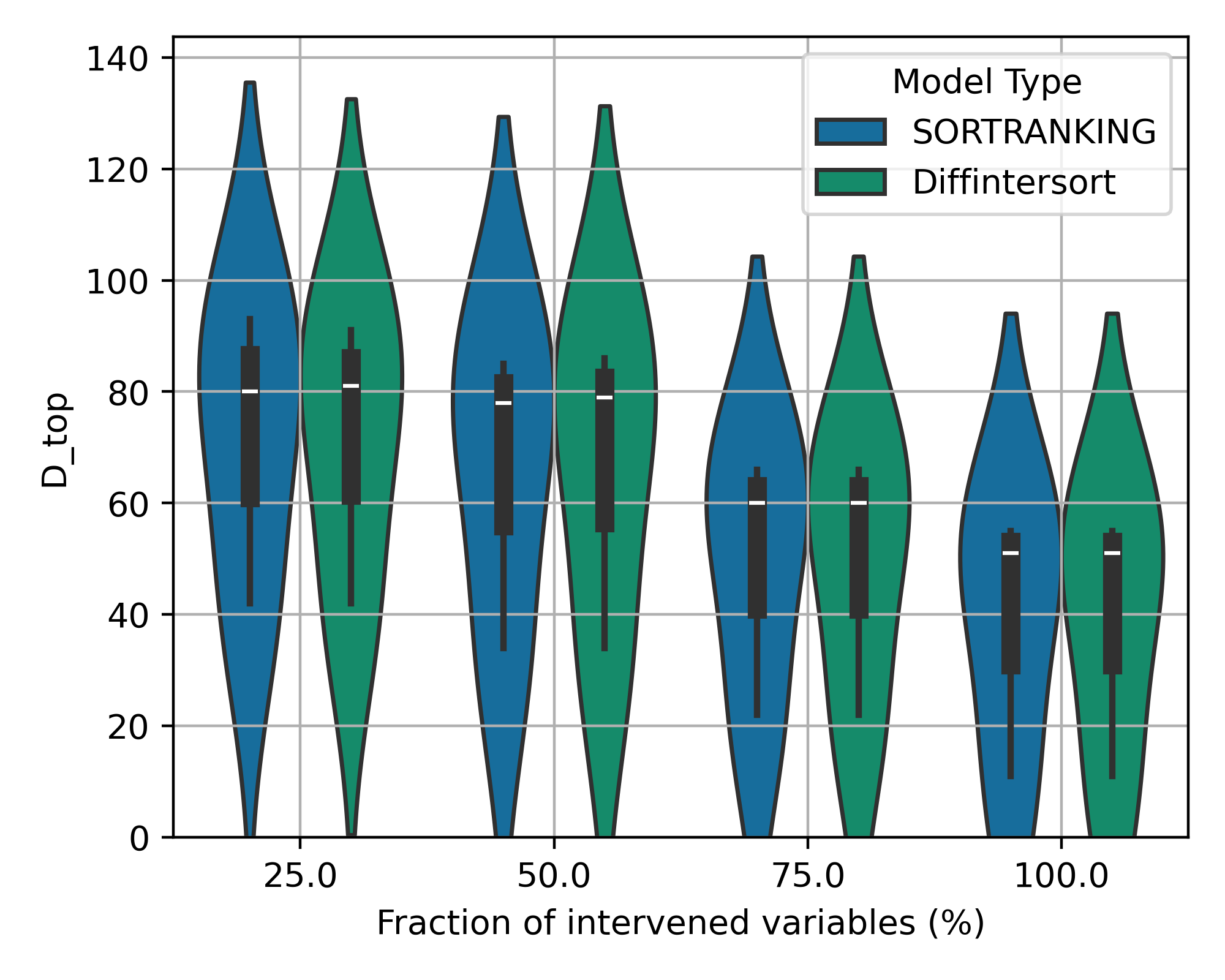}
        \caption{GRN 100 variables}
        \label{fig:grn-30}
    \end{subfigure}
    \hfill
    \begin{subfigure}{0.49\textwidth}
        \includegraphics[width=0.9\linewidth]{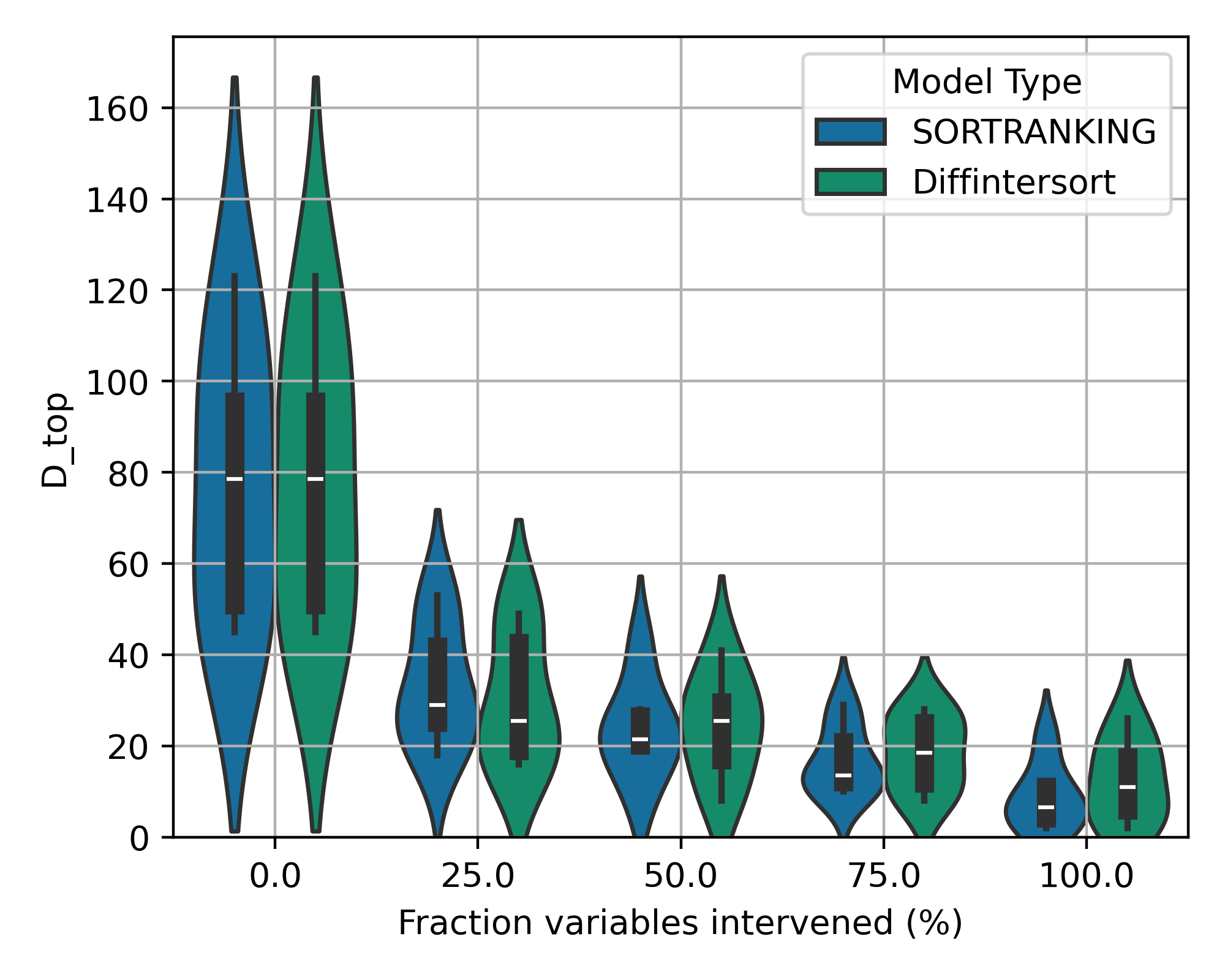}
        \caption{NN 100 variables}
        \label{fig:grn-30}
    \end{subfigure}
    \hfill
    \caption{Top order diverge scores (lower is better) assessing the quality of the derived causal order, comparing our method based on the DiffIntersort score to SORTRANKING on $100$ variables, for various types of data.  }
    \label{fig:ordering}
\end{figure}

%\section{Causal discovery with the DiffIntersort regularization. }

We now evaluate our causal discovery method on synthetic datasets generated using linear structural equation models (SEMs), gene regulatory network (GRN) models, random Fourier features (RFF) models, and neural network (NN) models as presented previously. For each model type, we consider variable sizes of $10$, $30$, and $100$ to assess scalability and performance across different problem dimensions. We use two evaluation metrics: Structural Hamming Distance (SHD) \citep{tsamardinos2006max} and Structural Intervention Distance (SID) \citep{peters2015structural} to compare the inferred graphs with the true causal graphs. We compared to two baselines, namely GIES \citep{hauser2012characterization} and DCDI \citep{brouillard2020differentiable}. We note that those two baselines do not scale to $100$ variables. For our model, we compare the performance of our proposed causal discovery model with and without the DiffIntersort constraint. We present the results for the SHD metrics at $30$ variables in \cref{fig:shd}. The results for $10$ and $30$ variables for SHD can be found in the appendix in \cref{fig:shd-app}. The results for SID can be found in \cref{fig:sid-app} in the appendix.

As can be seen, the DiffIntersort constraint is consistently beneficial in terms of performance on both metrics, for all types of data and at all considered scales. This comparison validates the usefulness of inducing the interventional faithfulness inductive bias to a causal models via the DiffIntersort score. We expect that this approach may be applicable to other causal tasks of interest, in settings where a large set of single variable interventions are available. Compared to baselines, our model outperforms on the GRN and RFF data. GIES is the best model on linear data, and DCDI has a slightly better performance on NN data. GIES and DCDI do not scale to $100$ variables but we would expect the results to be the same, as our algorithm has an F1 score that is almost unaffected by the number of variables (see \cref{fig:f1}). The results on the F1 score also shows the robustness of our causal discovery model with the DiffIntersort constraint to the number of variables.

\begin{figure}[t]
    \centering
    % First row: Gene model with 10 variables
    
    % Second row: Gene model with 30 variables
    \begin{subfigure}{0.49\textwidth}
        \includegraphics[width=\linewidth]{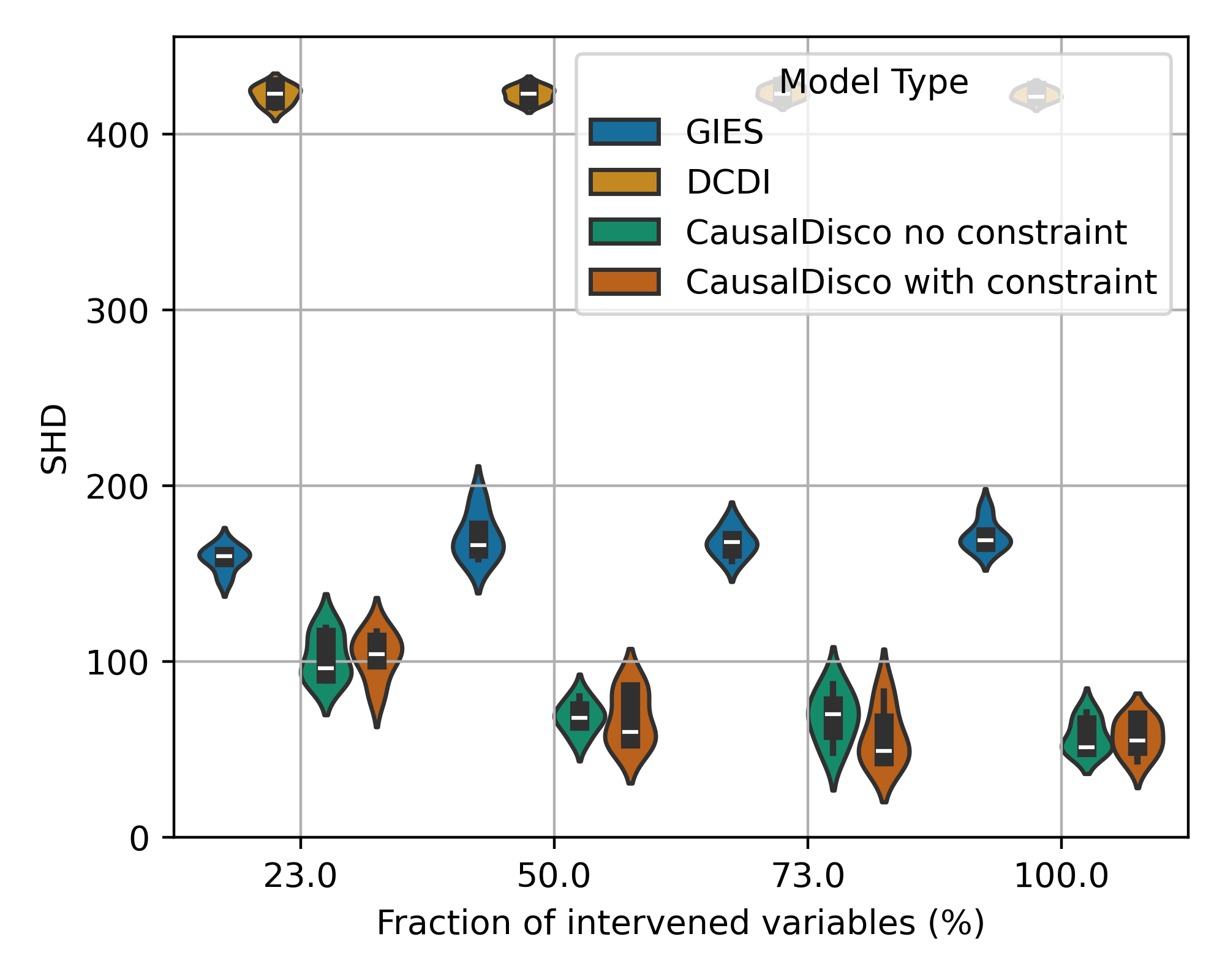}
        \caption{GRN, 30 vars, SHD}
        \label{fig:gene-30-shd}
    \end{subfigure}
    \hfill
    % Fifth row: Linear model with 30 variables
    \begin{subfigure}{0.49\textwidth}
        \includegraphics[width=\linewidth]{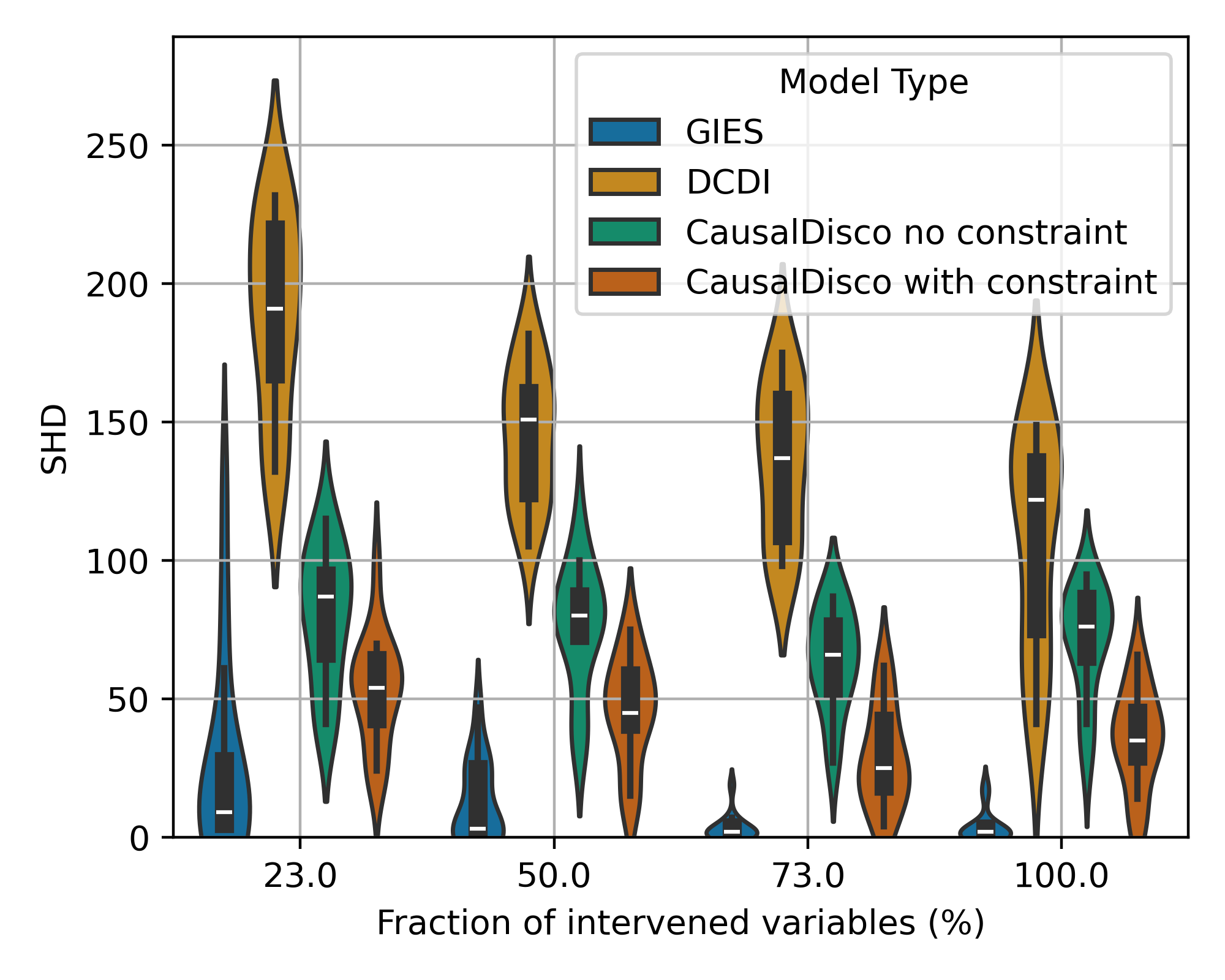}
        \caption{Linear, 30 vars, SHD}
        \label{fig:lin-30-shd}
    \end{subfigure}
    %\hfill
    % Seventh row: RFF model with 10 variables
    % Eighth row: RFF model with 30 variables
    \begin{subfigure}{0.49\textwidth}
        \includegraphics[width=\linewidth]{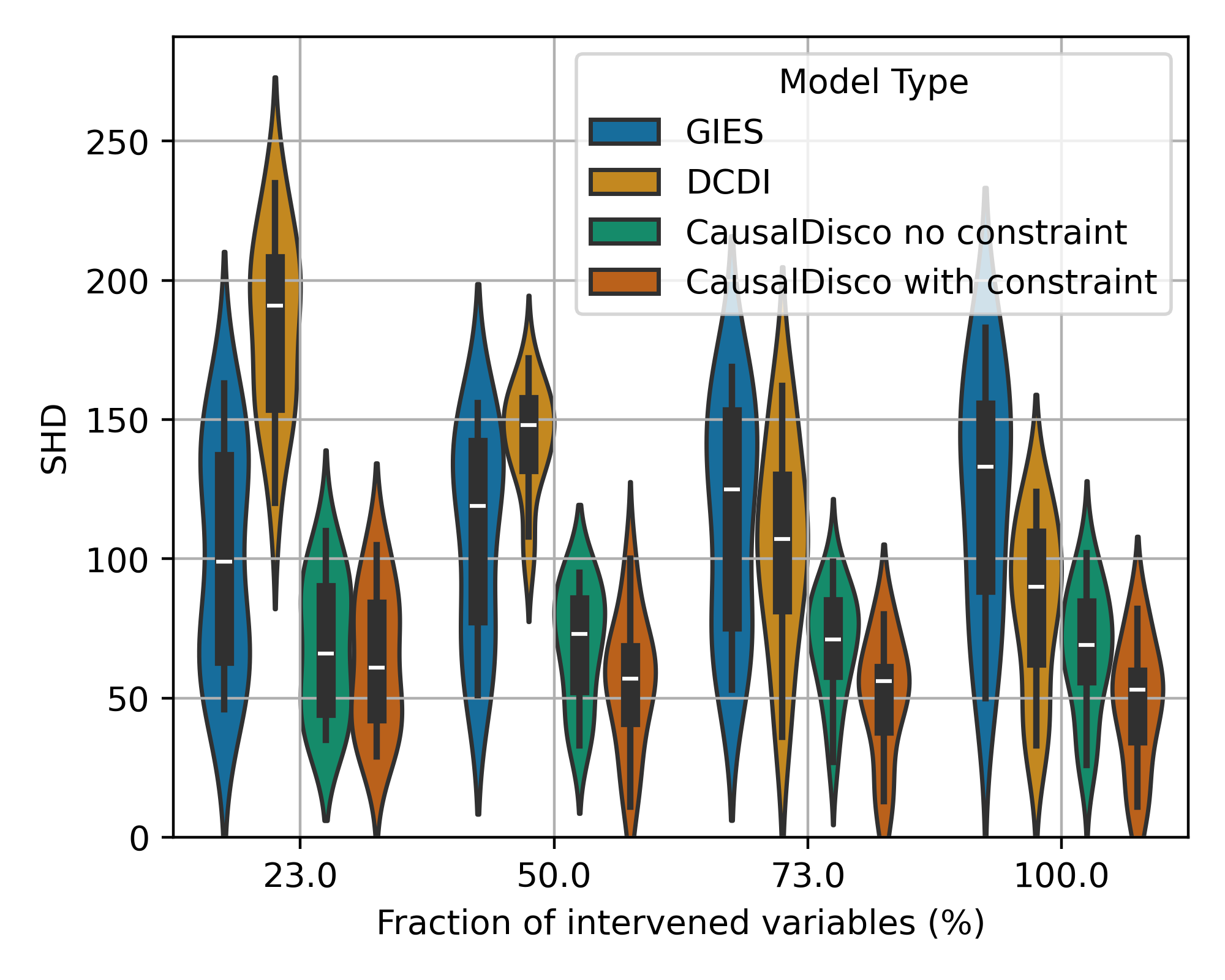}
        \caption{RFF, 30 vars, SHD}
        \label{fig:rff-30-shd}
    \end{subfigure}
    \hfill
    % Eleventh row: Neural Network model with 30 variables
    \begin{subfigure}{0.49\textwidth}
        \includegraphics[width=\linewidth]{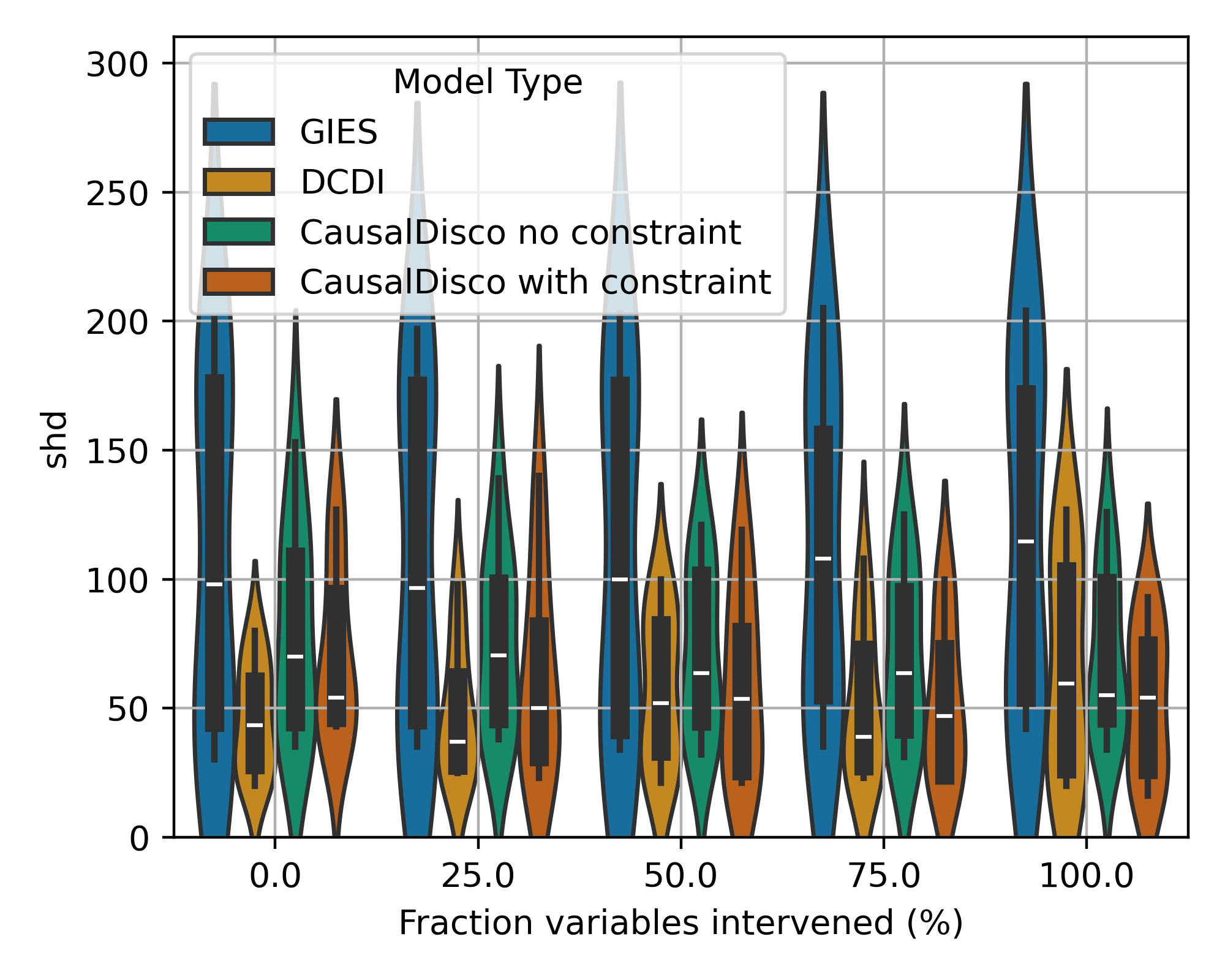}
        \caption{NN, 30 vars, SHD}
        \label{fig:nn-30-shd}
    \end{subfigure}
    \hfill
    \caption{Comparison of SHD (lower is better) for GRN, Linear, RFF, and Neural Network data with varying numbers for $30$ variables. Our method (CausalDisco with and without constraint) achieves lower SHD values compared to baseline methods on GRN and RFF data. GIES outperforms on the linear data and DCDI performs slightly better on NN data. }
    \label{fig:shd}
\end{figure}

\begin{figure}[t]
    \centering
    \begin{subfigure}{0.49\textwidth}
        \includegraphics[width=\linewidth]{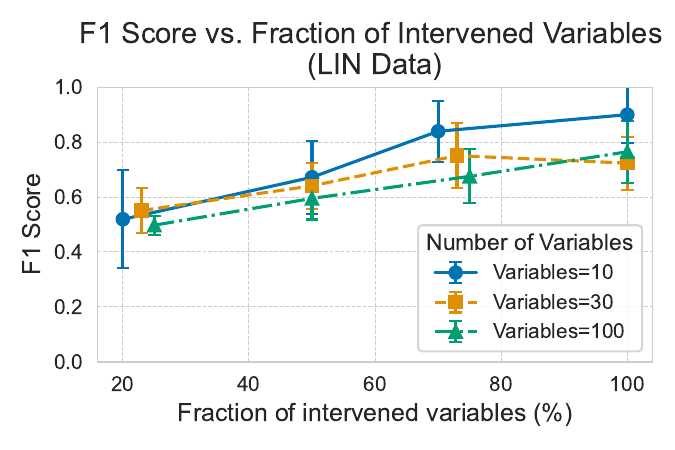}
        \caption{Linear}
        \label{fig:rff-30}
    \end{subfigure}
    \hfill
    \begin{subfigure}{0.49\textwidth}
        \includegraphics[width=\linewidth]{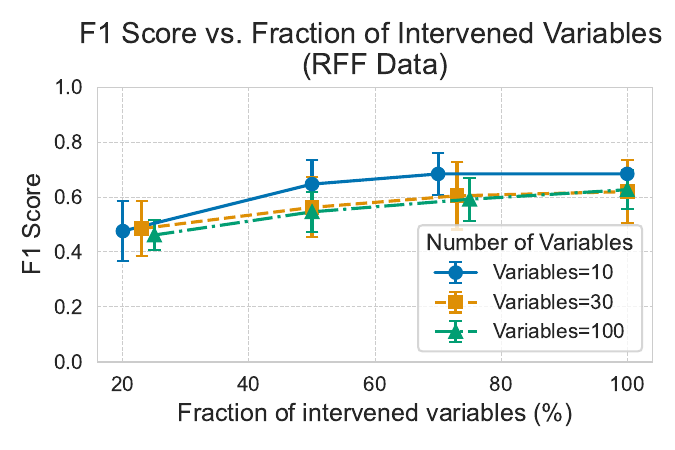}
        \caption{RFF}
        \label{fig:grn-30}
    \end{subfigure}
    %\hfill
    \begin{subfigure}{0.49\textwidth}
        \includegraphics[width=\linewidth]{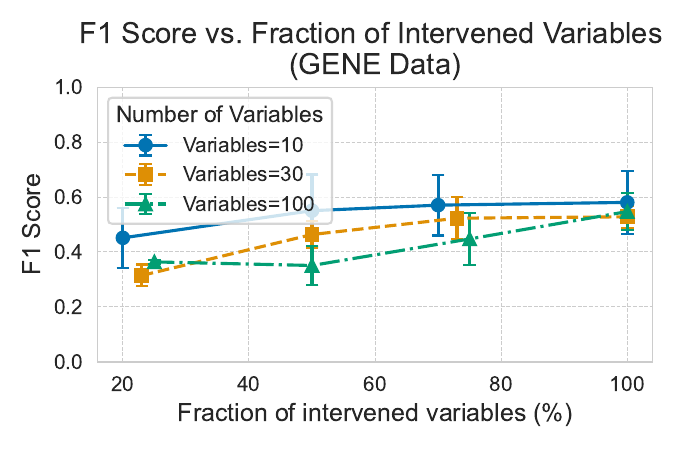}
        \caption{GRN}
        \label{fig:grn-30}
    \end{subfigure}
    \hfill
    \begin{subfigure}{0.49\textwidth}
        \includegraphics[width=\linewidth]{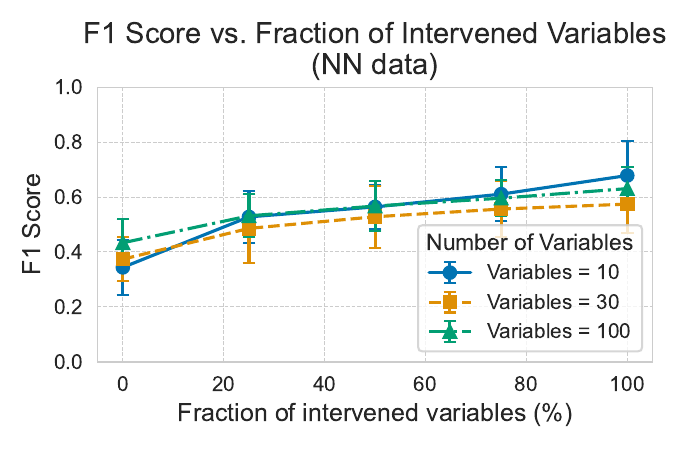}
        \caption{NN}
        \label{fig:grn-30}
    \end{subfigure}
    \hfill
    \caption{F1 score of our algorithm with DiffIntersort constraint for the four considered data types over the fraction of intervened variables for 10, 30 and 100 variables. As can be observed, the performance is consistent across the scale of the number of variables as there is no major drop in performance at $100$ variables compared to $10$ and $30$ variables.}
    \label{fig:f1}
\end{figure}

\section{Conclusion}

In this work, we addressed the scalability and differentiability limitations of Intersort, a score-based method for discovering causal orderings using interventional data. By reformulating the Intersort score through differentiable sorting and ranking techniques—specifically utilizing the Sinkhorn operator—we enabled the scalable and differentiable optimization of causal orderings. This reformulation allows the Intersort score to be integrated as a continuous regularizer in gradient-based learning frameworks, facilitating its use in downstream causal discovery tasks.
Our proposed approach not only preserves the theoretical advantages of Intersort but also significantly improves its practical applicability to large-scale problems. Empirical evaluations demonstrate that incorporating the differentiable Intersort score into a causal discovery algorithm leads to superior performance compared to existing methods, particularly in complex settings involving non-linear relationships and large numbers of variables. The algorithm exhibits robustness across various data distributions and noise types, effectively scaling with increasing data size without compromising performance.
By bridging the gap between interventional faithfulness and differentiable optimization, our work opens new avenues for integrating interventional data into modern causal machine learning pipelines. This advancement holds promise for a wide range of applications dealing with large covariate sets where understanding causal relationships is crucial, such as genomics, neuroscience, environmental, and social sciences.

While our approach enhances the scalability and differentiability of causal discovery using interventional data, several avenues for future research remain. One potential direction is to integrate our differentiable Intersort score into more complex models, such as deep neural networks, to further improve causal discovery in high-dimensional and highly non-linear settings.
Another promising area is the application of the differentiable Intersort approach to real-world datasets in domains like genomics or healthcare, where large-scale interventional data are increasingly available. 

\section*{Acknowledgements and disclosure}

MC, AM and PS are employees and shareholders of GSK plc.

\bibliographystyle{iclr2025_conference}
\bibliography{iclr2024_conference}

\newpage

\section{Appendix}

\subsection{Proofs}
\label{sec:proof}
\begin{proof}[Proof of Theorem \ref{thm:1}]
    First, let us recall that we have $\vp \in \R^d$, and $\pi \in \{0, 1, \dots, d \}^d$, where $\forall i, j \in \{0, 1, \dots, d \}, \pi_i \neq \pi_j$. We thus trivially have that any permutation $\pi$ can be represented by a potential $\vp$, by $\vp_i = -\pi_i \forall i \in \{0, 1, \dots, d \}$.
    We now have to prove that if $\pi \in \Pi$, then the corresponding potential $\vp_{\pi} \in \sP$. Let $s = \max_{\pi} S(\pi, \epsilon, D, \mathcal{I}, P_X^{\mathcal{C}, (\emptyset)}, \mathcal{P}_{int}, c)$ be the maximum achievable score. The sum of the score is over the elements of $\rmD_{ij}$ where $\pi_i < \pi_j$. For all these pairs of indices, we also have that $p_{\pi_i} > p_{\pi_j}$, and thus for all those pairs, we also have $(\step(\grad(\vp_{\pi})))_{ij} = 1$. This exactly corresponds to the elements that are non-zero and thus contribute to the sum in \cref{eq:score_cont}. Thus we have that $S(\vp_{\pi}) = s$, and as such $\vp_{\pi} \in \sP$, which concludes the proof.
\end{proof}

\subsection{Additional experiments}

\begin{figure}[t]
    \centering
    \begin{subfigure}{0.49\textwidth}
        \includegraphics[width=\linewidth]{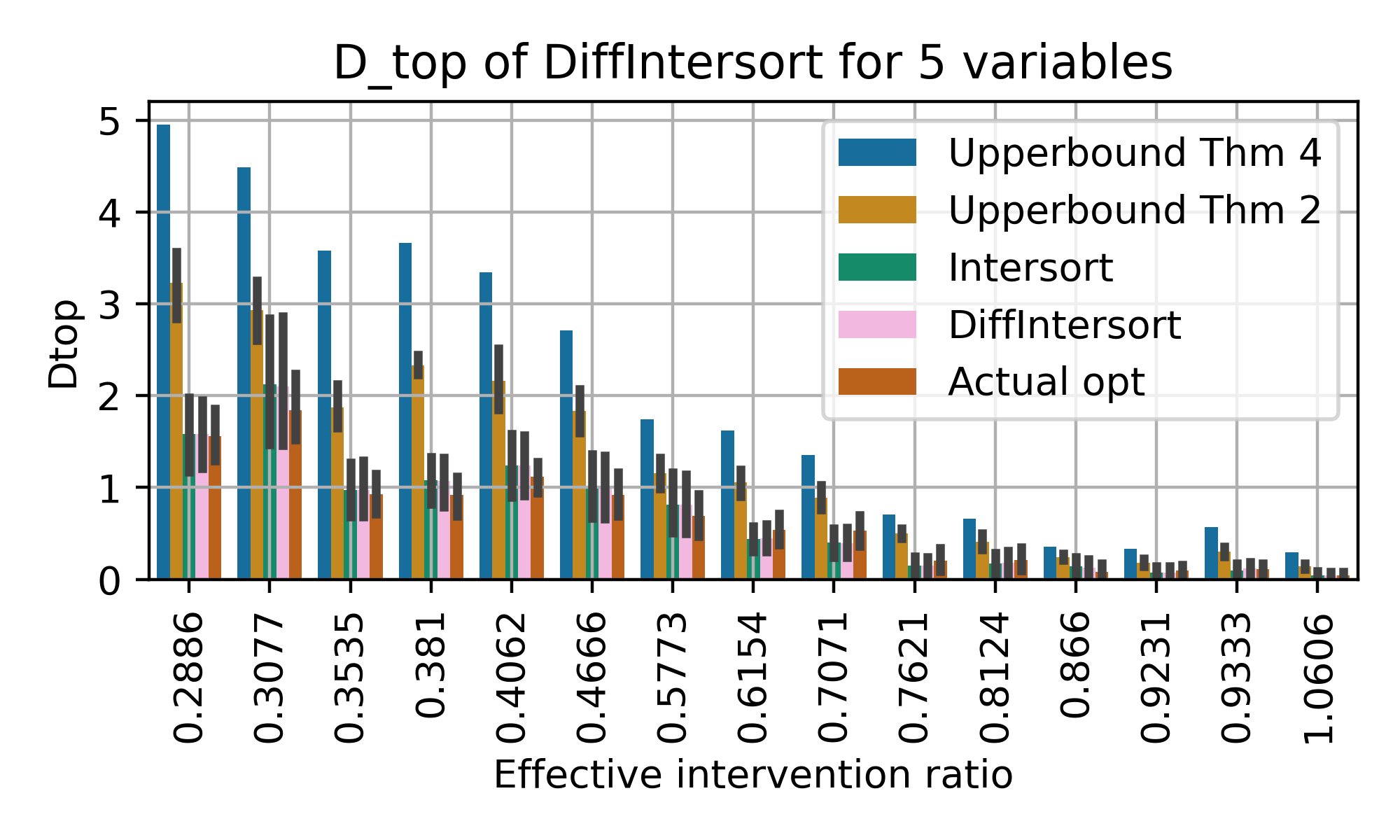}
        \caption{Simulation ER with 5 variables}
        \label{fig:sim5}
    \end{subfigure}
    \hfill
    \begin{subfigure}{0.49\textwidth}
        \includegraphics[width=\linewidth]{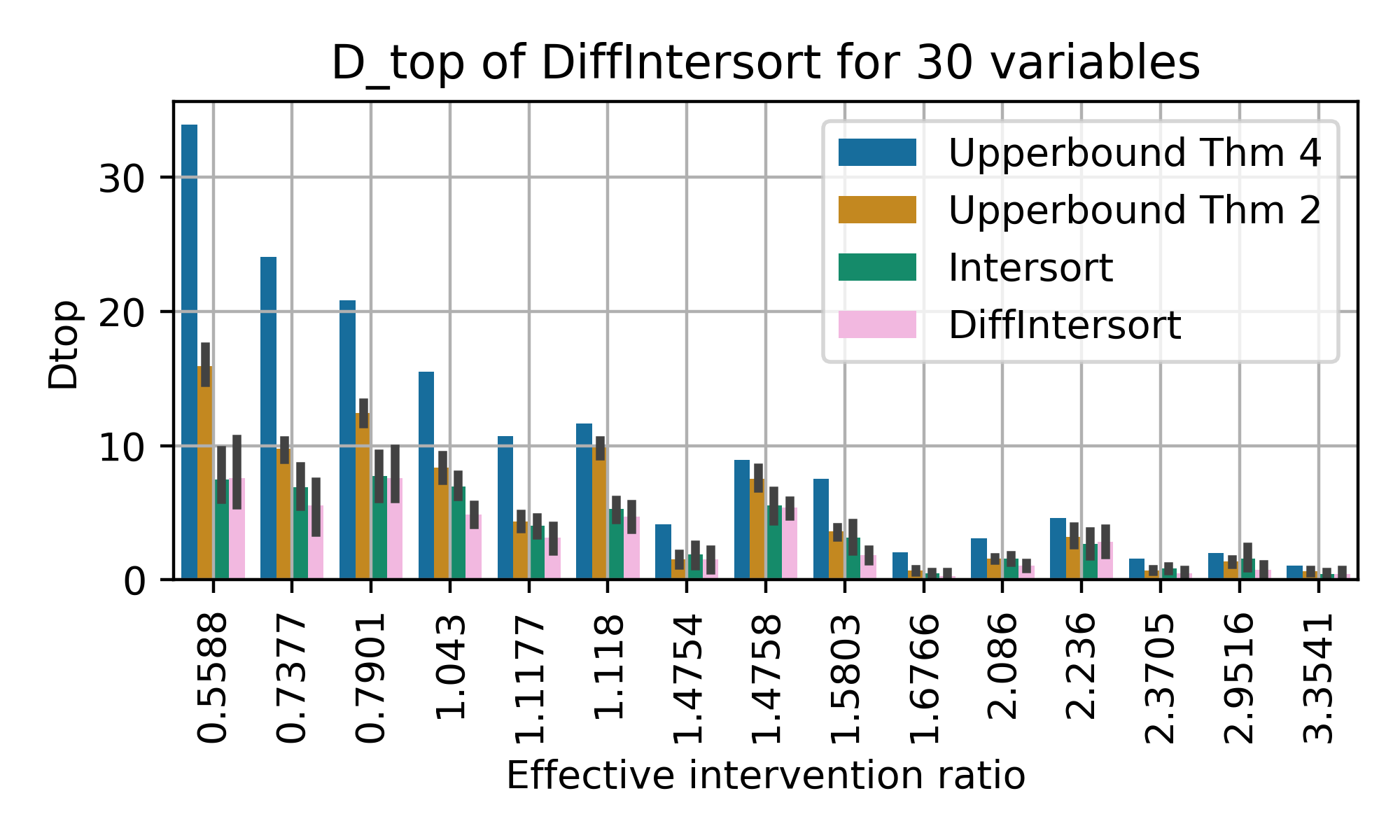}
        \caption{Simulation ER with 30 variables}
        \label{fig:sim30}
    \end{subfigure}
    \begin{subfigure}{0.49\textwidth}
        \includegraphics[width=\linewidth]{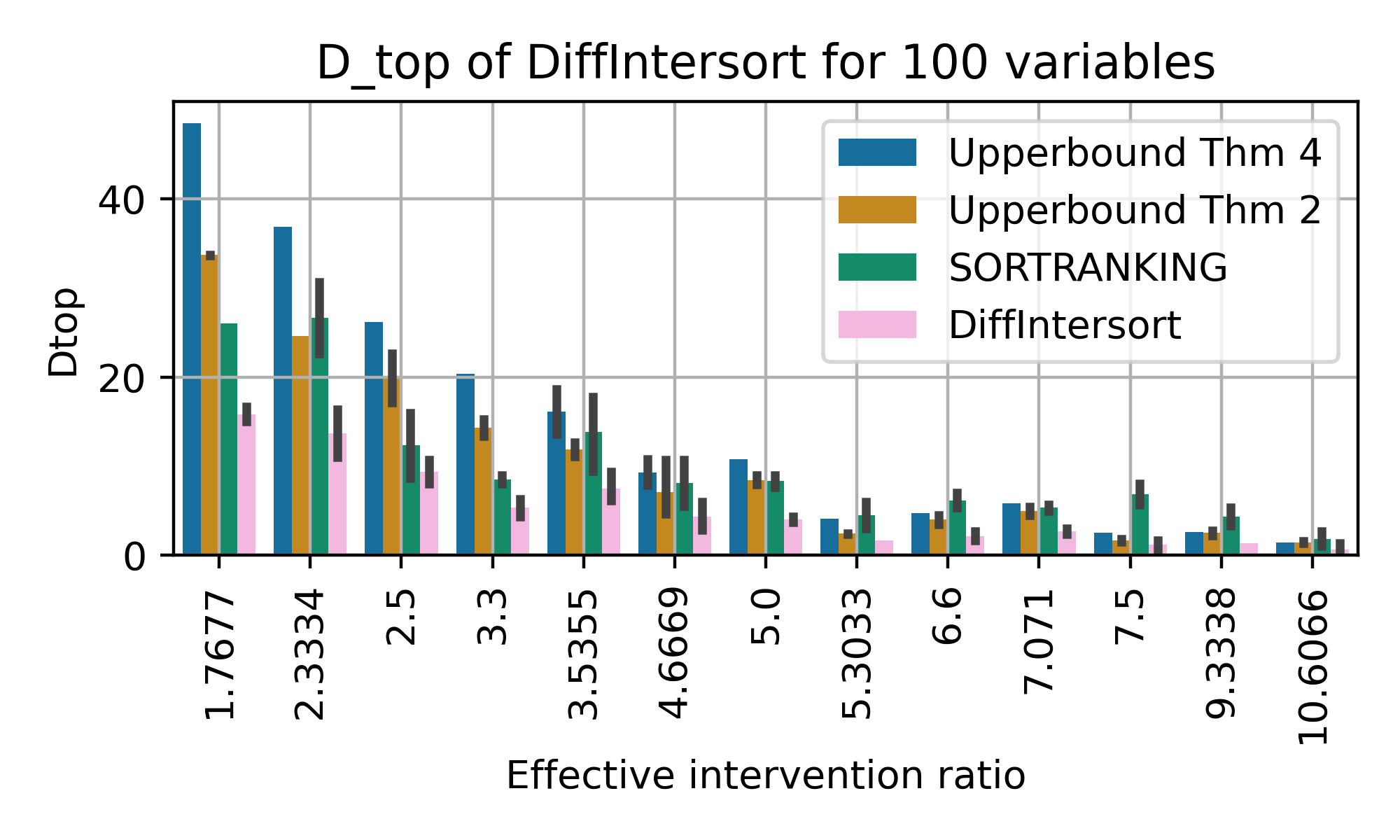}
        \caption{Simulation ER with 100 variables}
        \label{fig:sim5}
    \end{subfigure}
    \hfill
    \begin{subfigure}{0.49\textwidth}
        \includegraphics[width=\linewidth]{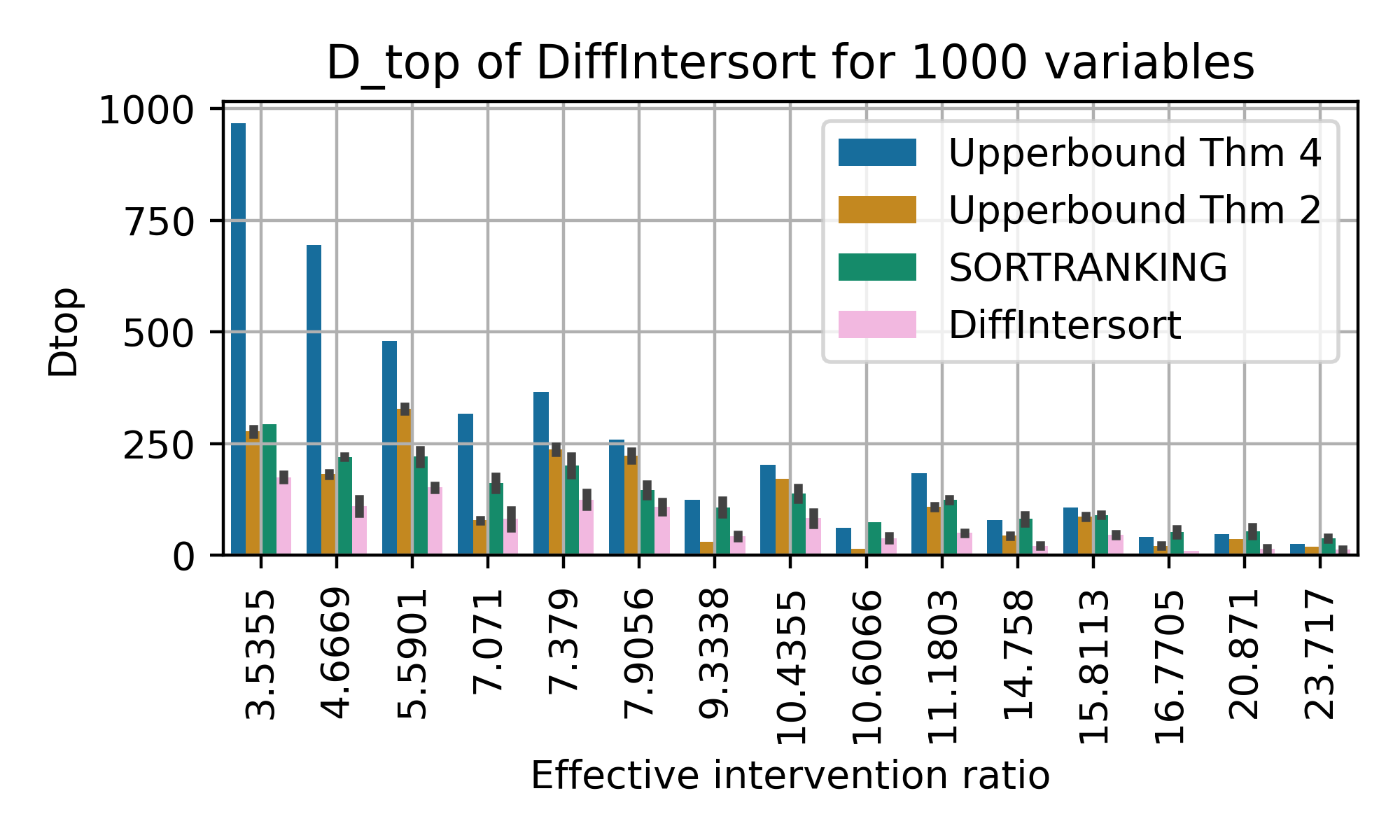}
        \caption{Simulation ER with 1000 variables}
        \label{fig:sim30}
    \end{subfigure}
    
    \caption{Comparison of performance on simulated ER graphs in terms of $D_{top}$ divergence between the two bounds of \citep{chevalley2024deriving}, DiffIntersort, Intersort and SORTRANKING. For each setting, we draw multiple graphs, where a setting is the tuple $(p_{int}, p_{e})$. Then, for each graph, we run the algorithm on multiple configurations, where a configuration corresponds to a set of intervened variables following $p_{int}$. We have $p_{int} \in \{0.25, 0.33, 0.5, 0.66, 0.75\}$ for all scales. For $5$ variables, we have $p_{e} \in \{0.5, 0.66, 0.75\}$. For $30$, we have $p_{e} \in \{0.05, 0.1, 0.2\}$. For $1000$ variables, we have $p_{e} \in \{0.005, 0.002, 0.001\}$. For $20000$ variables settings, we have $p_{e} \in \{0.0001, 0.00005, 0.00002\}$. Those edge probabilities approximately correspond to an average of $1$, $2$ or $3$ edges per variable. }
    \label{fig:er-app}
\end{figure}

\begin{figure}[H]
    \centering
    \begin{subfigure}{0.49\textwidth}
        \includegraphics[width=\linewidth]{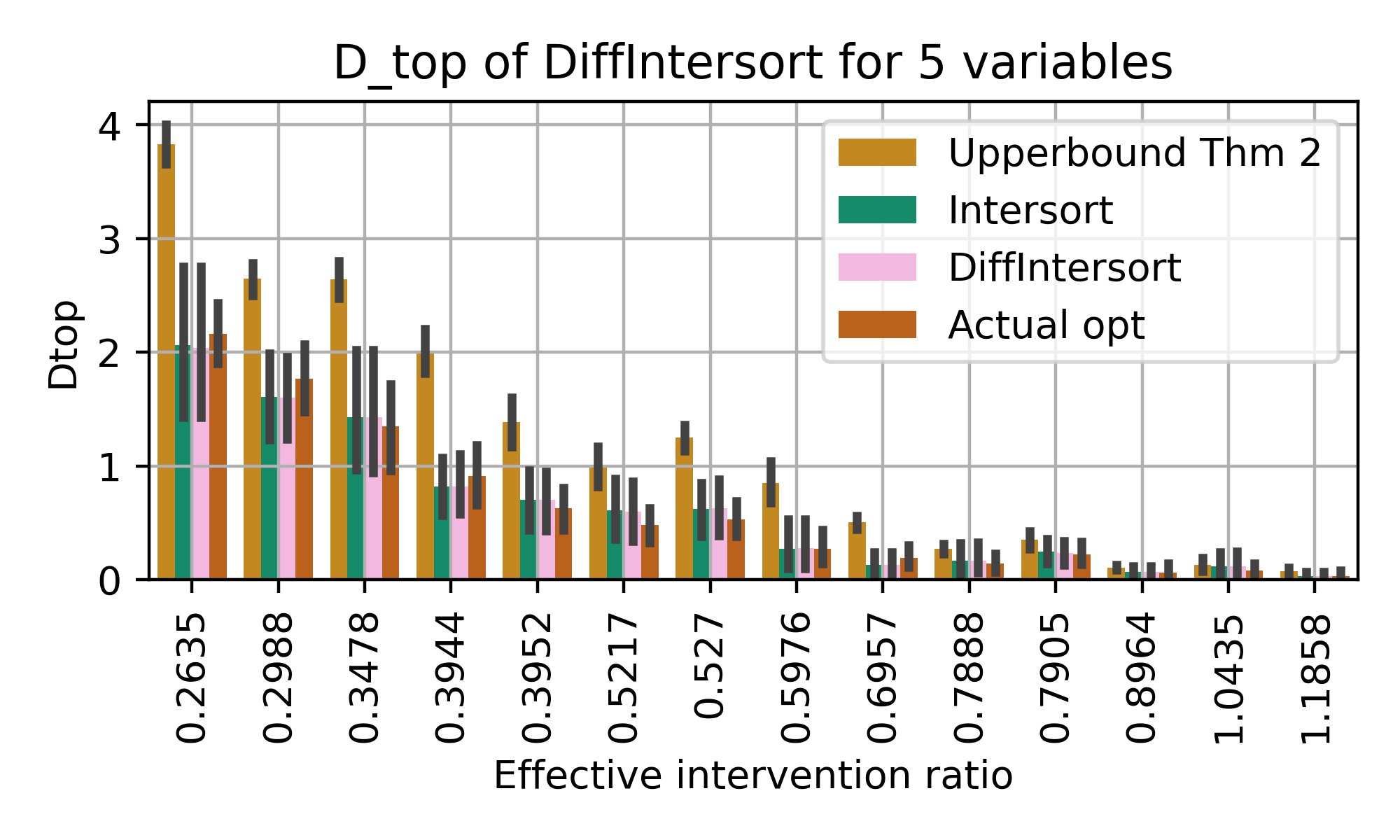}
        \caption{Simulation SF with 5 variables}
        \label{fig:sim5}
    \end{subfigure}
    \hfill
    \begin{subfigure}{0.49\textwidth}
        \includegraphics[width=\linewidth]{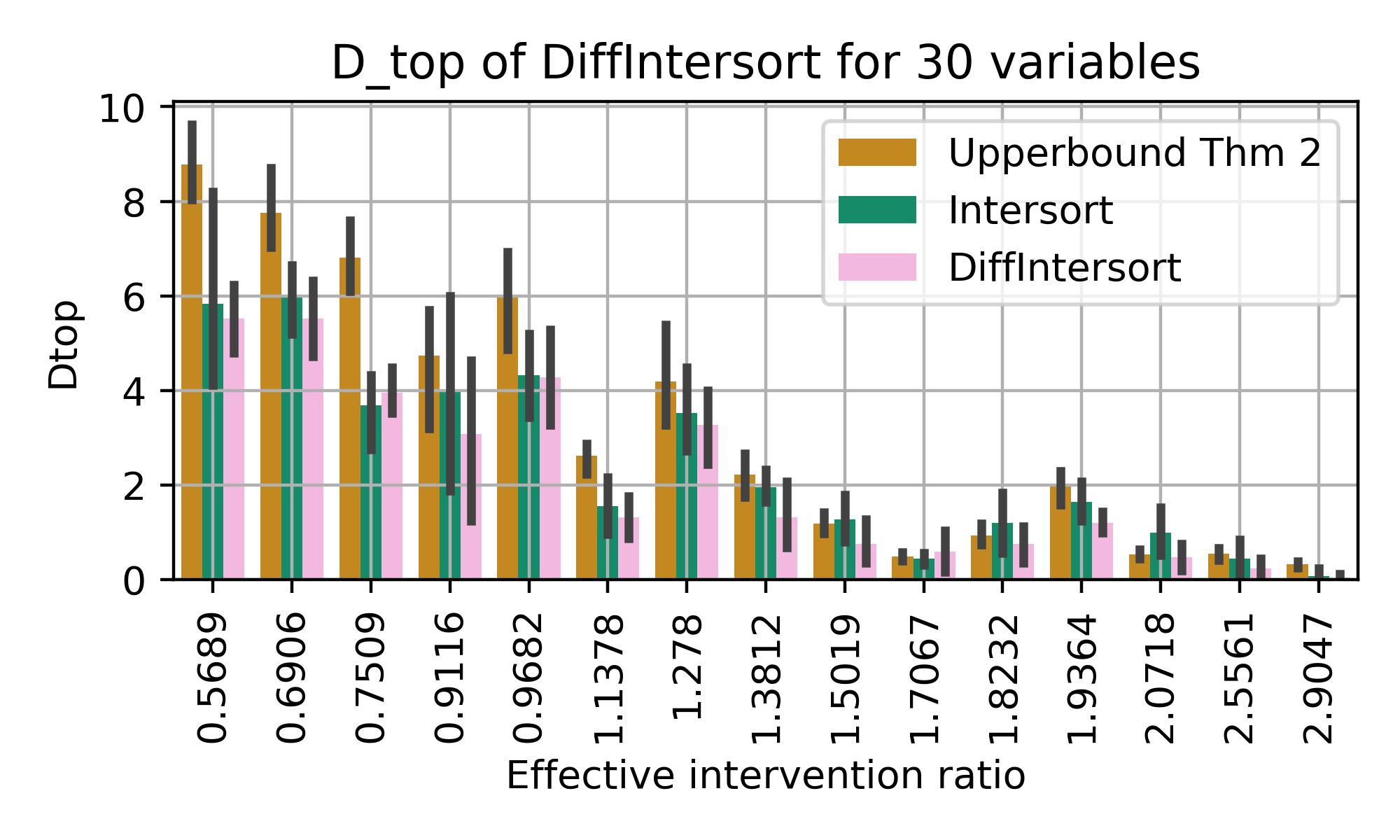}
        \caption{Simulation SF with 30 variables}
        \label{fig:sim30}
    \end{subfigure}
    \begin{subfigure}{0.49\textwidth}
        \includegraphics[width=\linewidth]{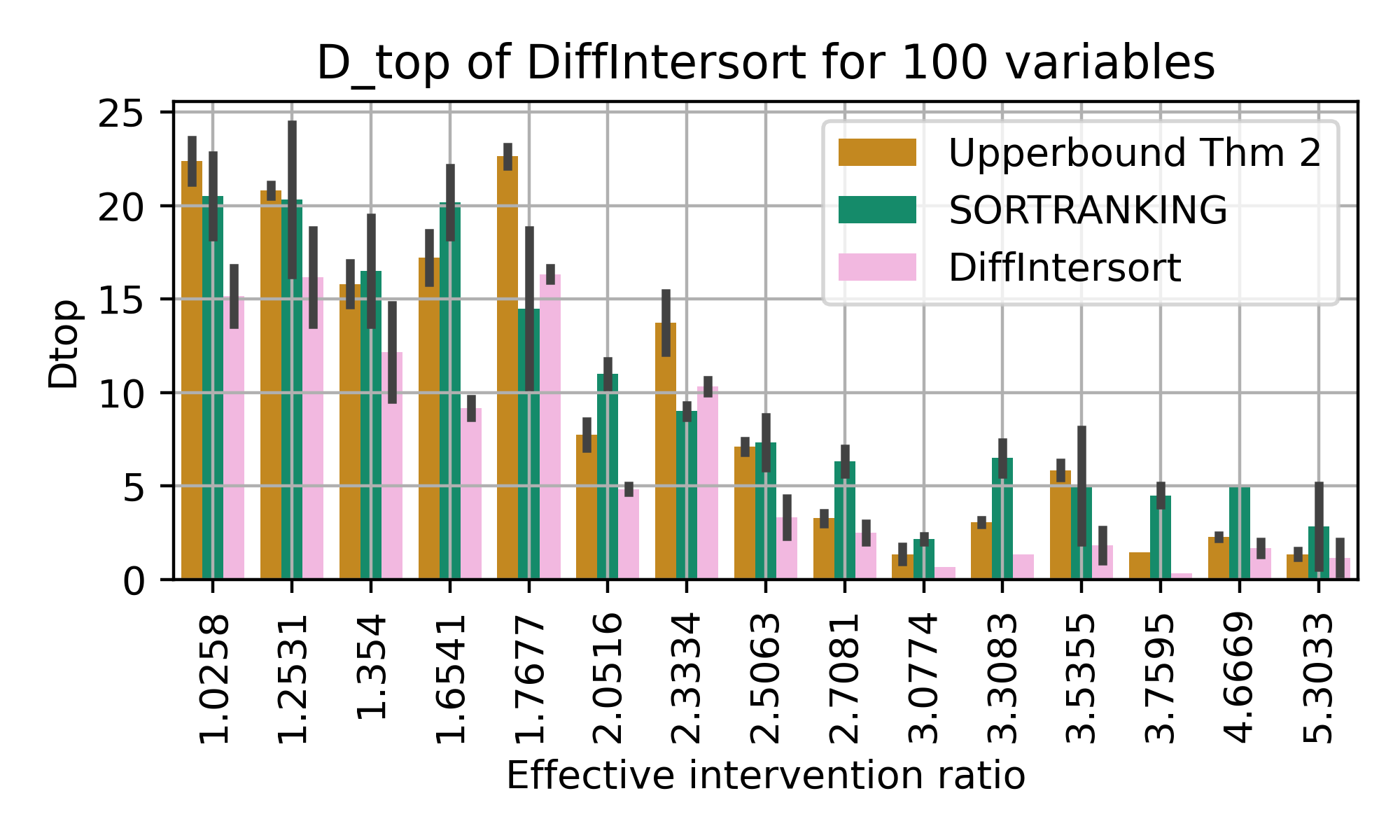}
        \caption{Simulation SF with 100 variables}
        \label{fig:sim5}
    \end{subfigure}
    \hfill
    \begin{subfigure}{0.49\textwidth}
        \includegraphics[width=\linewidth]{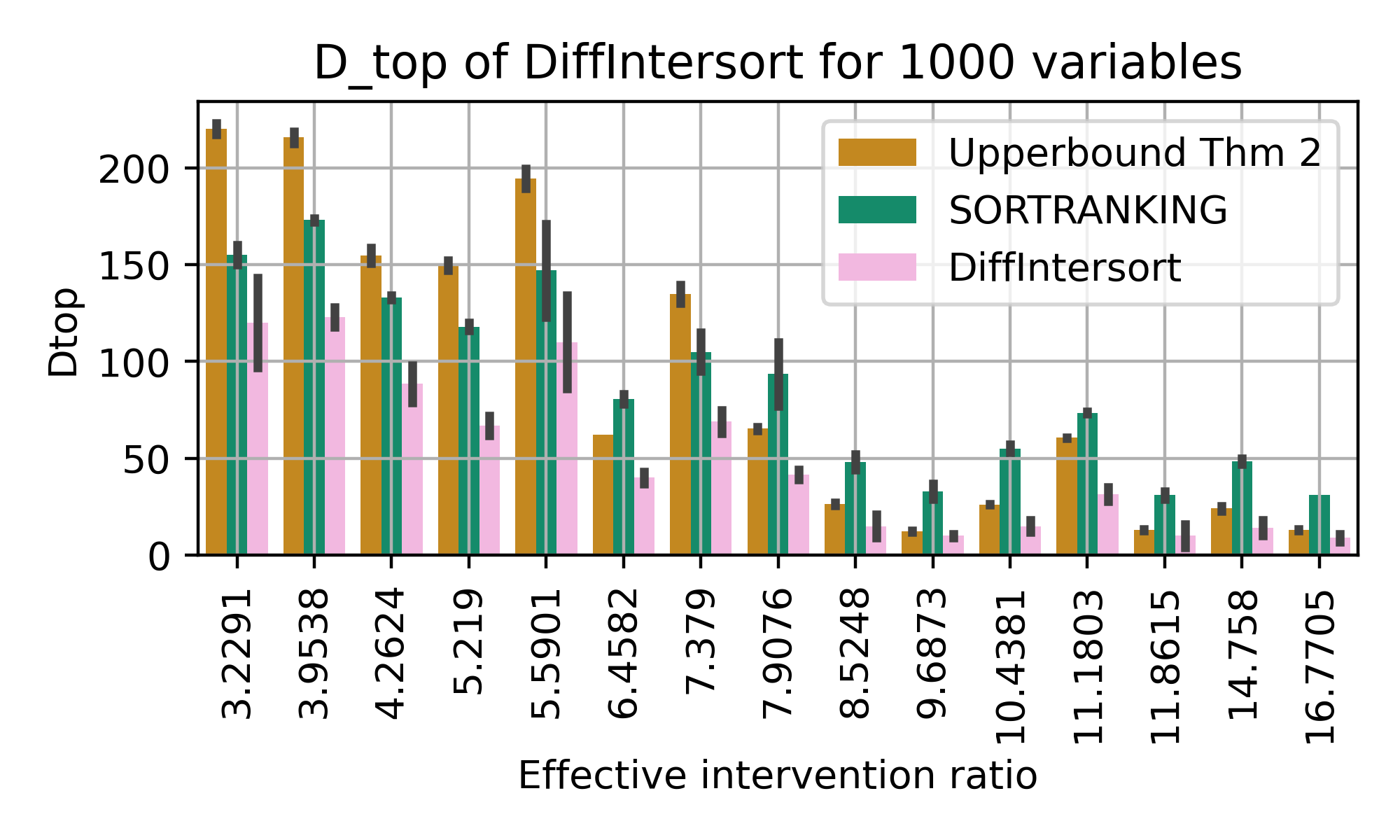}
        \caption{Simulation SF with 1000 variables}
        \label{fig:sim30}
    \end{subfigure}
    
    \caption{Comparison of performance on simulated SF graphs in terms of $D_{top}$ divergence between the two bounds of \citep{chevalley2024deriving}, DiffIntersort, Intersort and SORTRANKING. For each setting, we draw multiple graphs, where a setting is the tuple $(p_{int}, p_{e})$. The networks follow a Barabasi-Albert SF distribution, with average edge per variable in $\{1, 2, 3\}$. A setting is the tuple $(p_{int}, p_{e})$, where $p_{e} = \frac{2\mathrm{E}(\#edges)}{d(d-1)}$. Then, for each graph, we run the algorithm on multiple configurations, where a configuration corresponds to a set of intervened variables following $p_{int}$. We have $p_{int} \in \{0.25, 0.33, 0.5, 0.66, 0.75\}$ for all scales.  }
    \label{fig:sf-app}
\end{figure}

\begin{figure}[t]
    \centering
    \begin{subfigure}{0.32\textwidth}
        \includegraphics[width=0.9\linewidth]{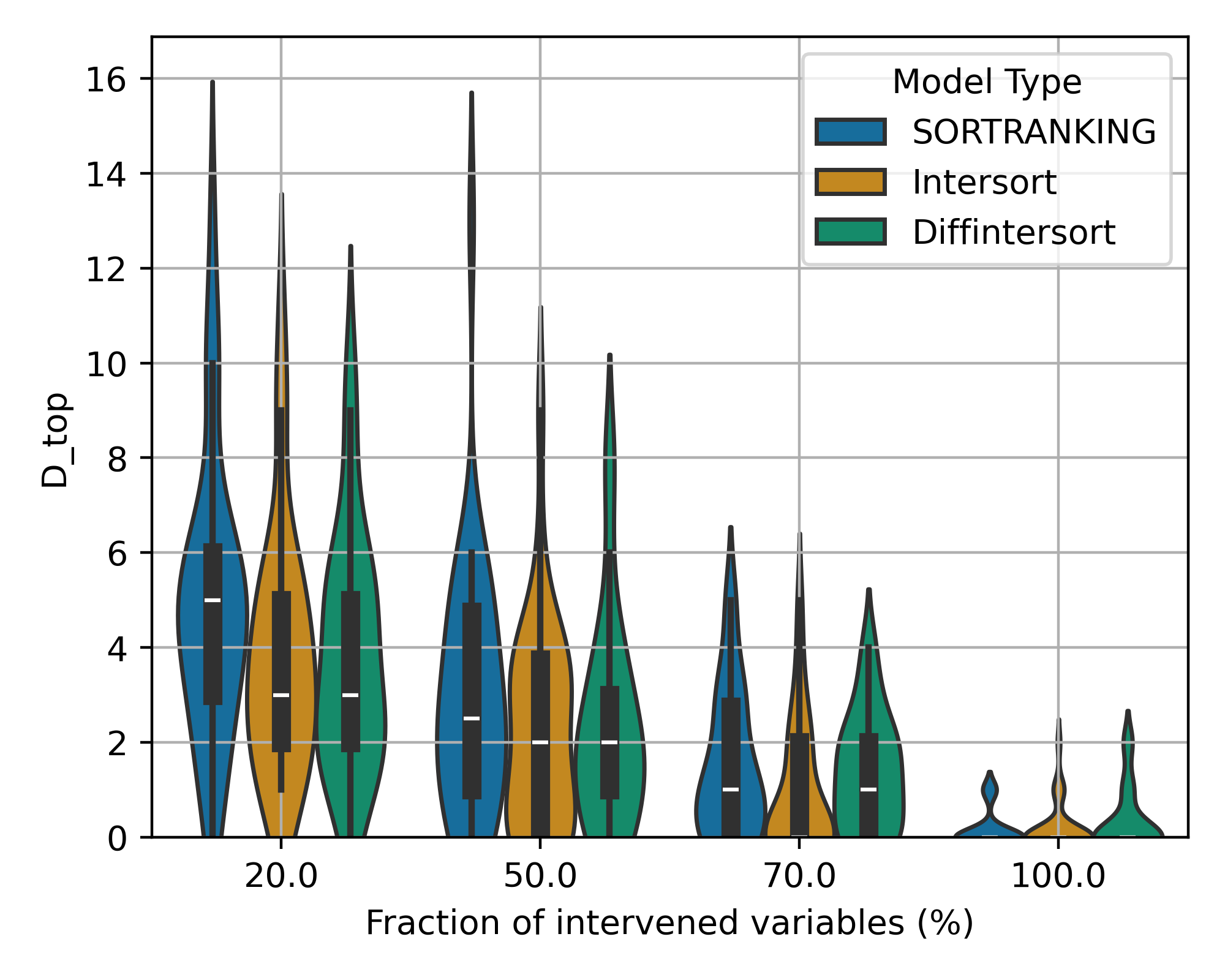}
        \caption{Linear 10 variables}
        \label{fig:lin-30}
    \end{subfigure}
    \hfill
    \begin{subfigure}{0.32\textwidth}
        \includegraphics[width=0.9\linewidth]{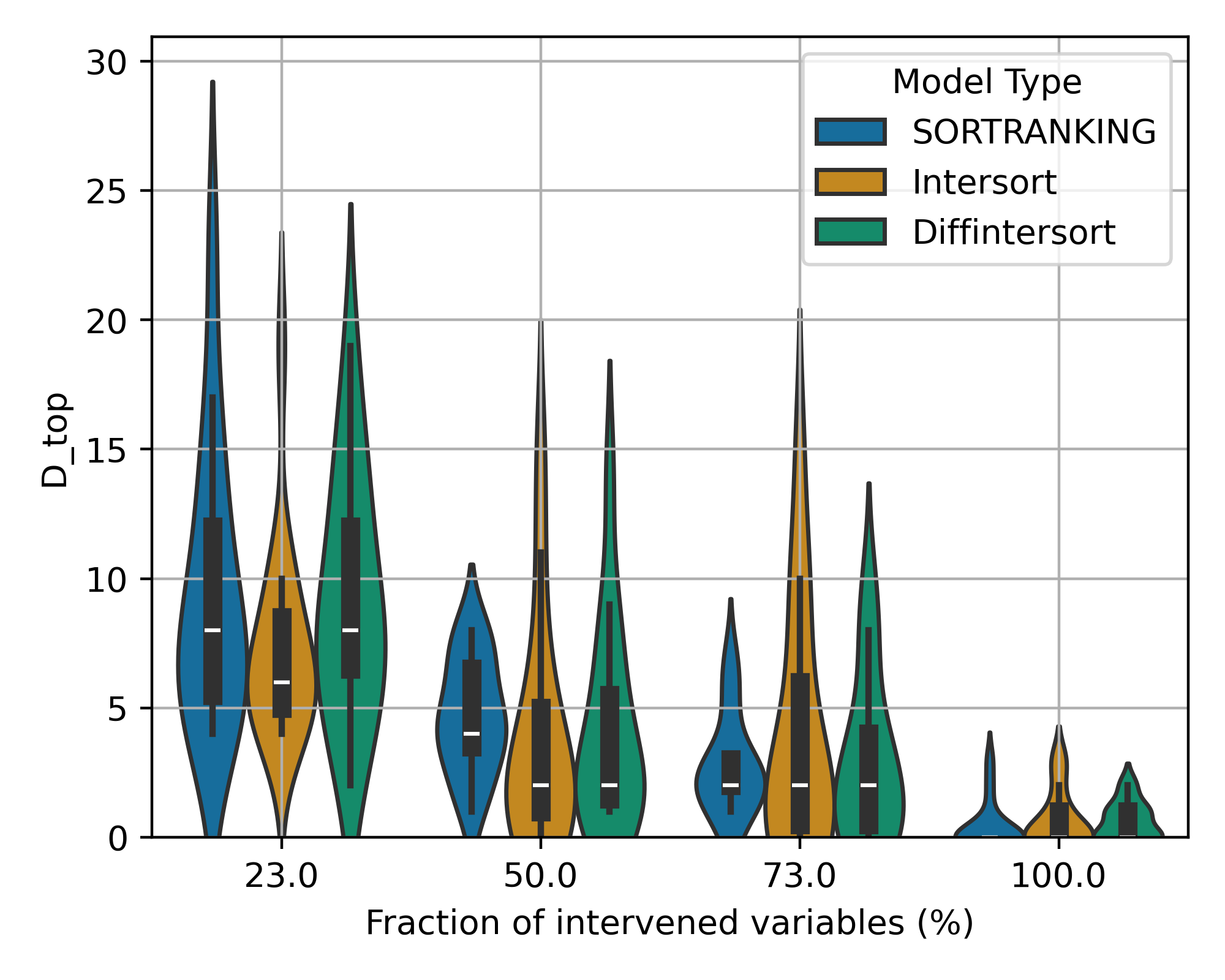}
        \caption{Linear 30 variables}
        \label{fig:rff-30}
    \end{subfigure}

    \begin{subfigure}{0.32\textwidth}
        \includegraphics[width=0.9\linewidth]{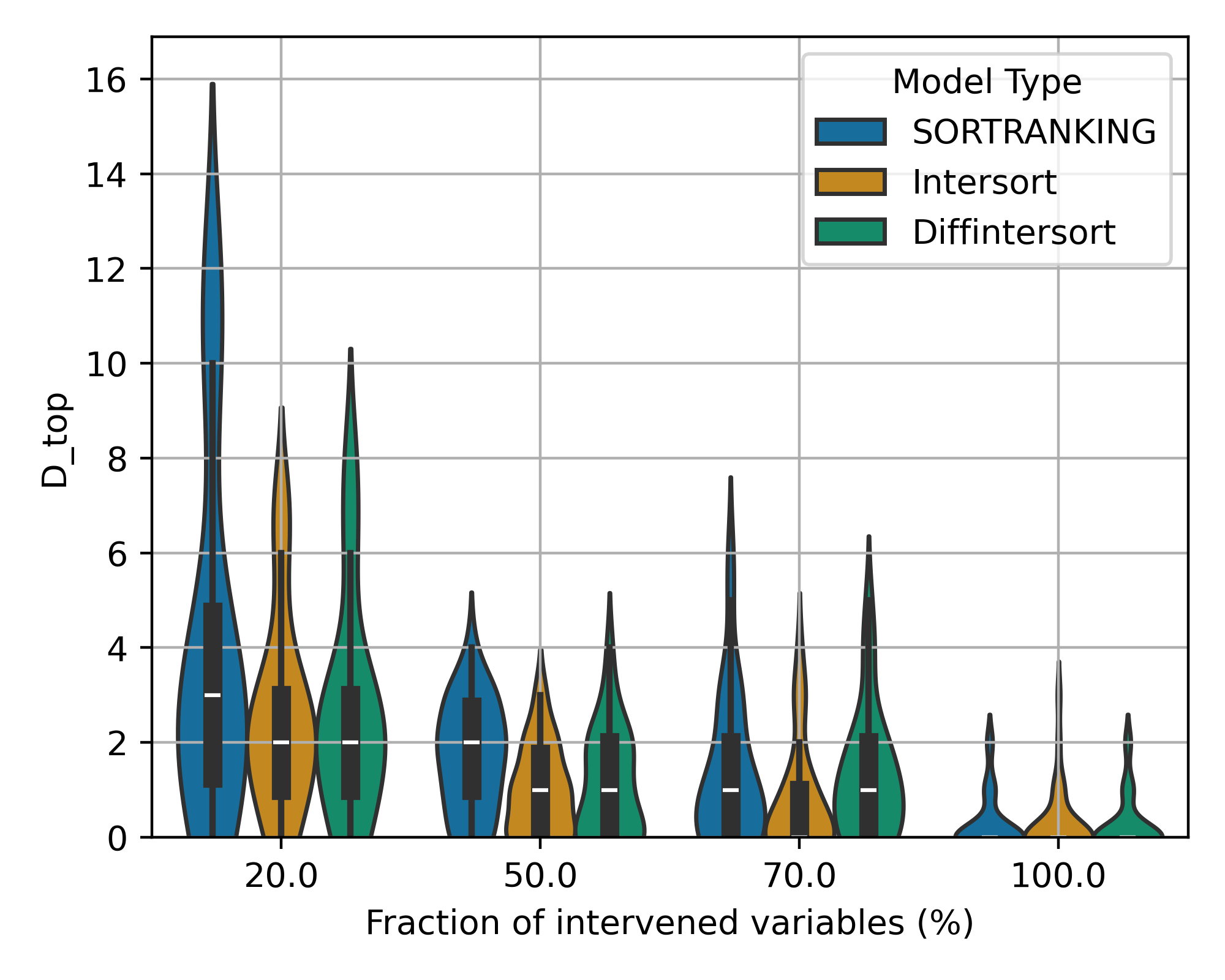}
        \caption{RFF 10 variables}
        \label{fig:nn-30}
    \end{subfigure}
    \hfill
    \begin{subfigure}{0.32\textwidth}
        \includegraphics[width=0.9\linewidth]{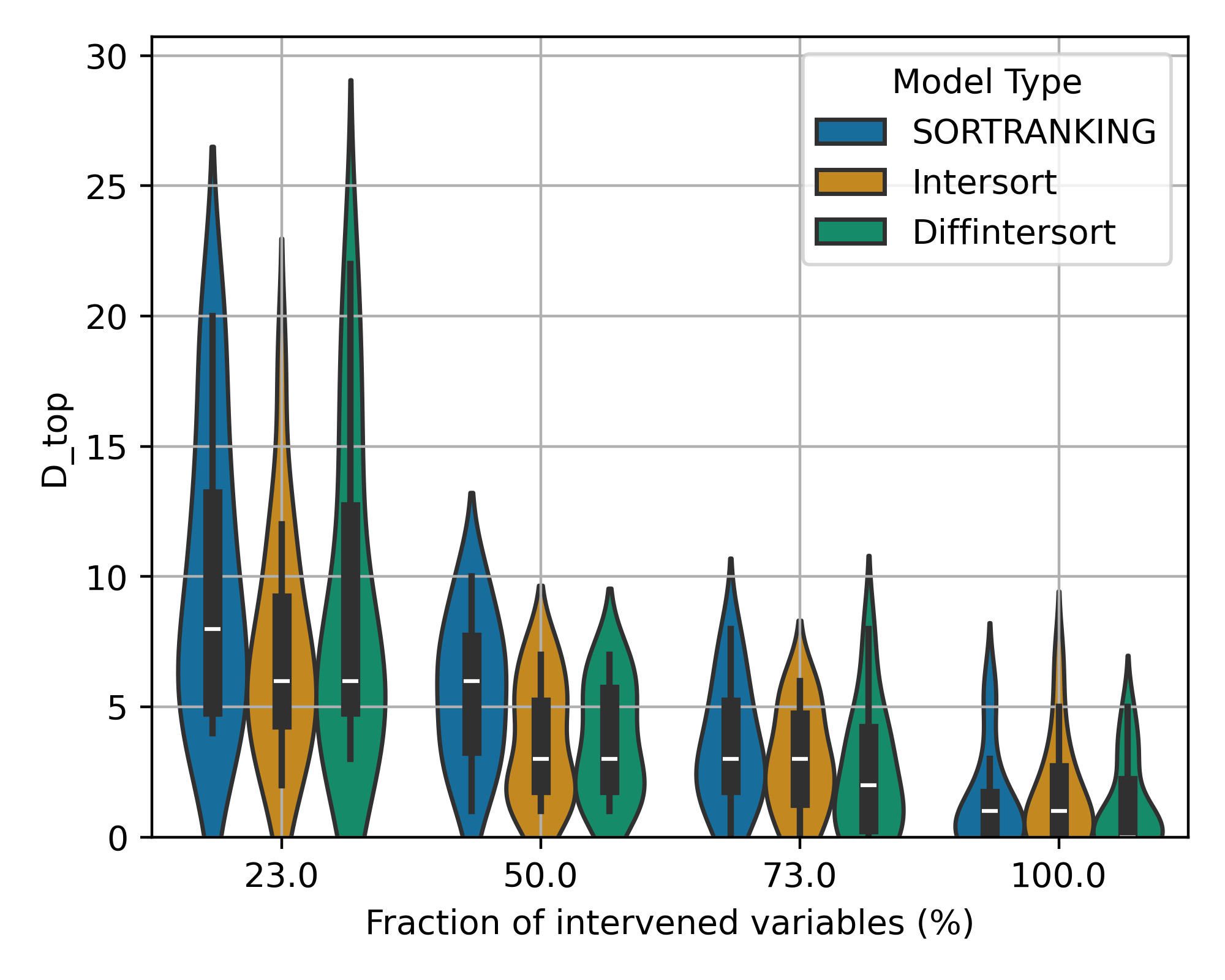}
        \caption{RFF 30 variables}
        \label{fig:grn-30}
    \end{subfigure}
    
    \begin{subfigure}{0.32\textwidth}
        \includegraphics[width=0.9\linewidth]{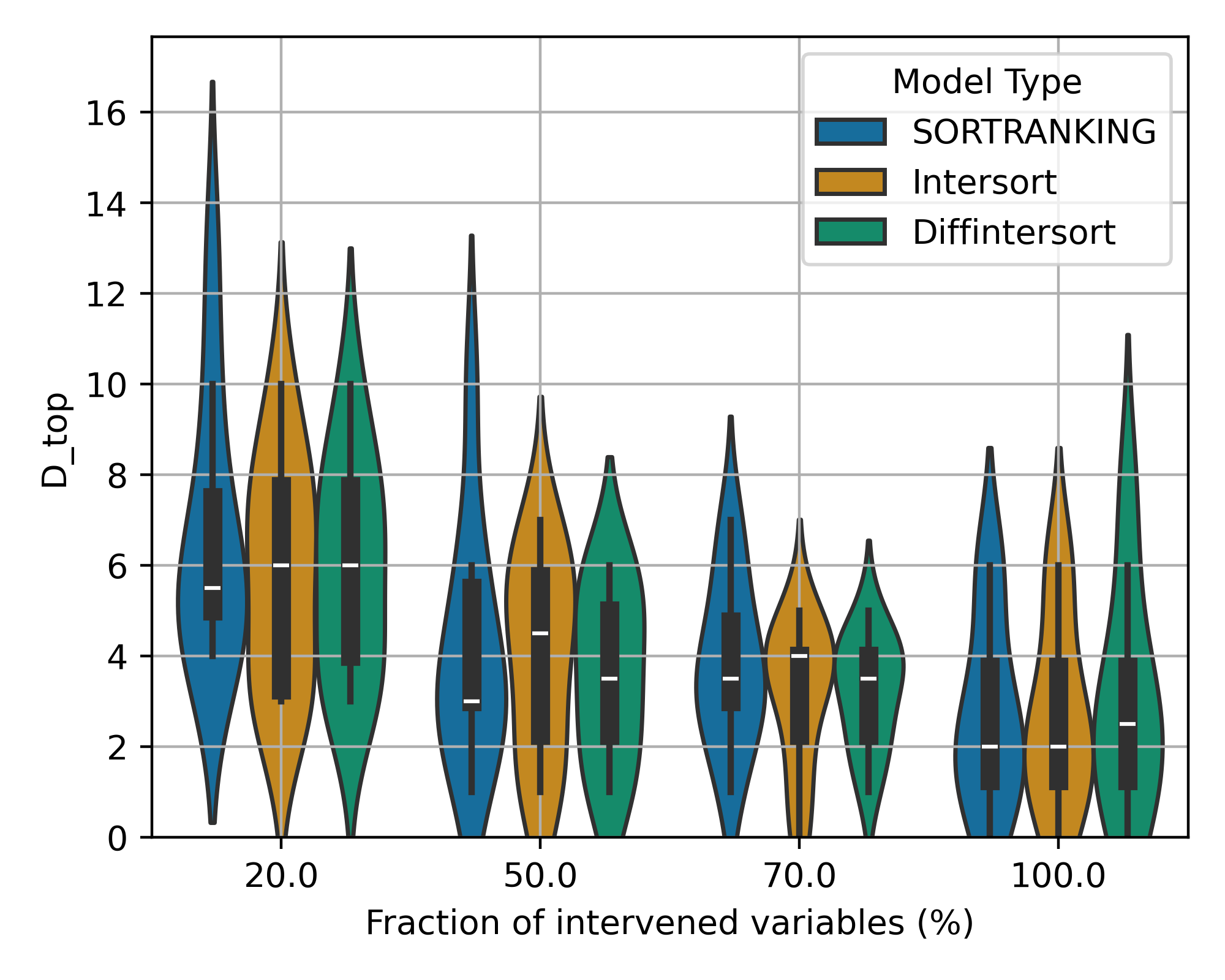}
        \caption{GRN 10 variables}
        \label{fig:nn-30}
    \end{subfigure}
    \hfill
    \begin{subfigure}{0.32\textwidth}
        \includegraphics[width=0.9\linewidth]{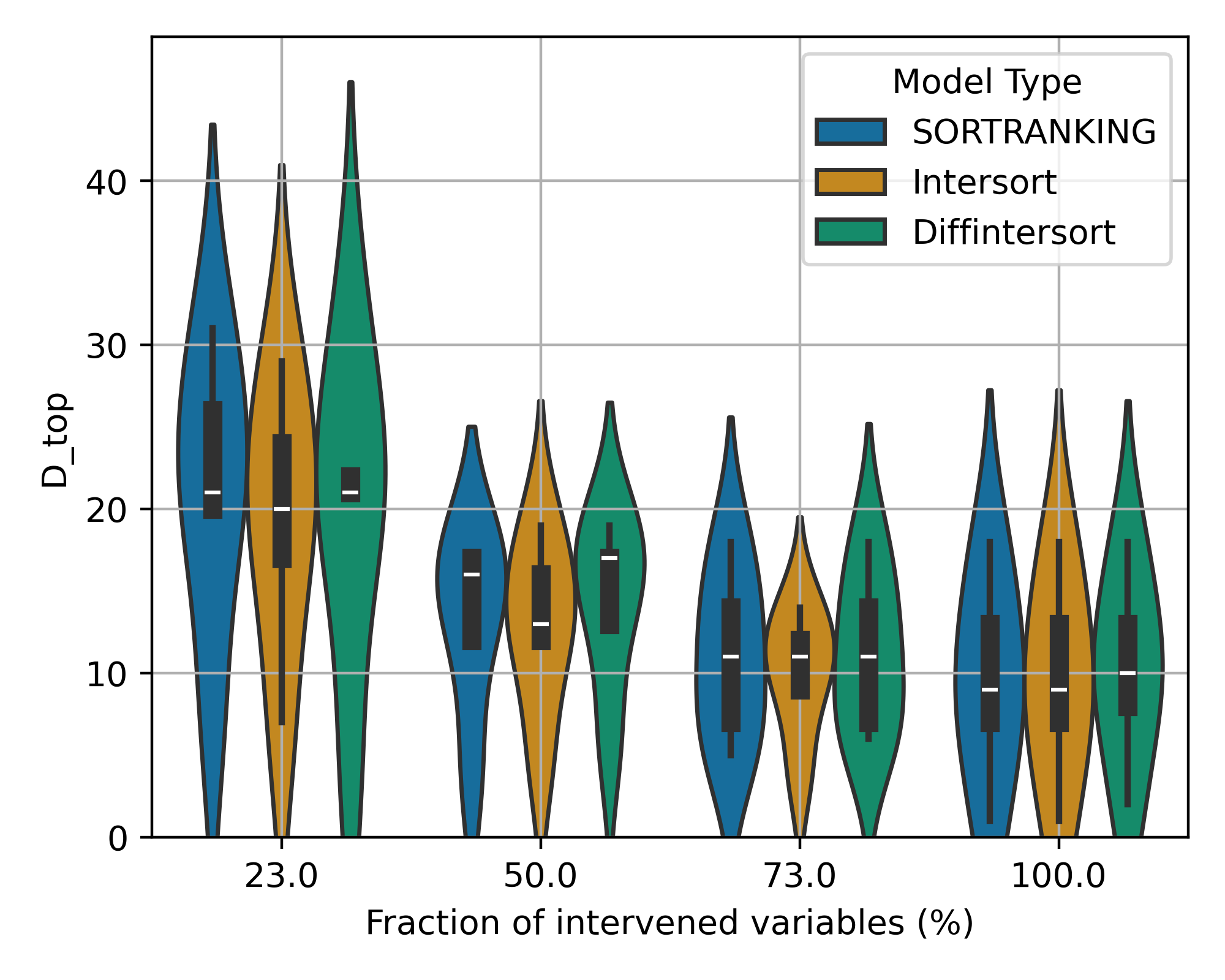}
        \caption{GRN 30 variables}
        \label{fig:grn-30}
    \end{subfigure}
   
    \begin{subfigure}{0.32\textwidth}
        \includegraphics[width=0.9\linewidth]{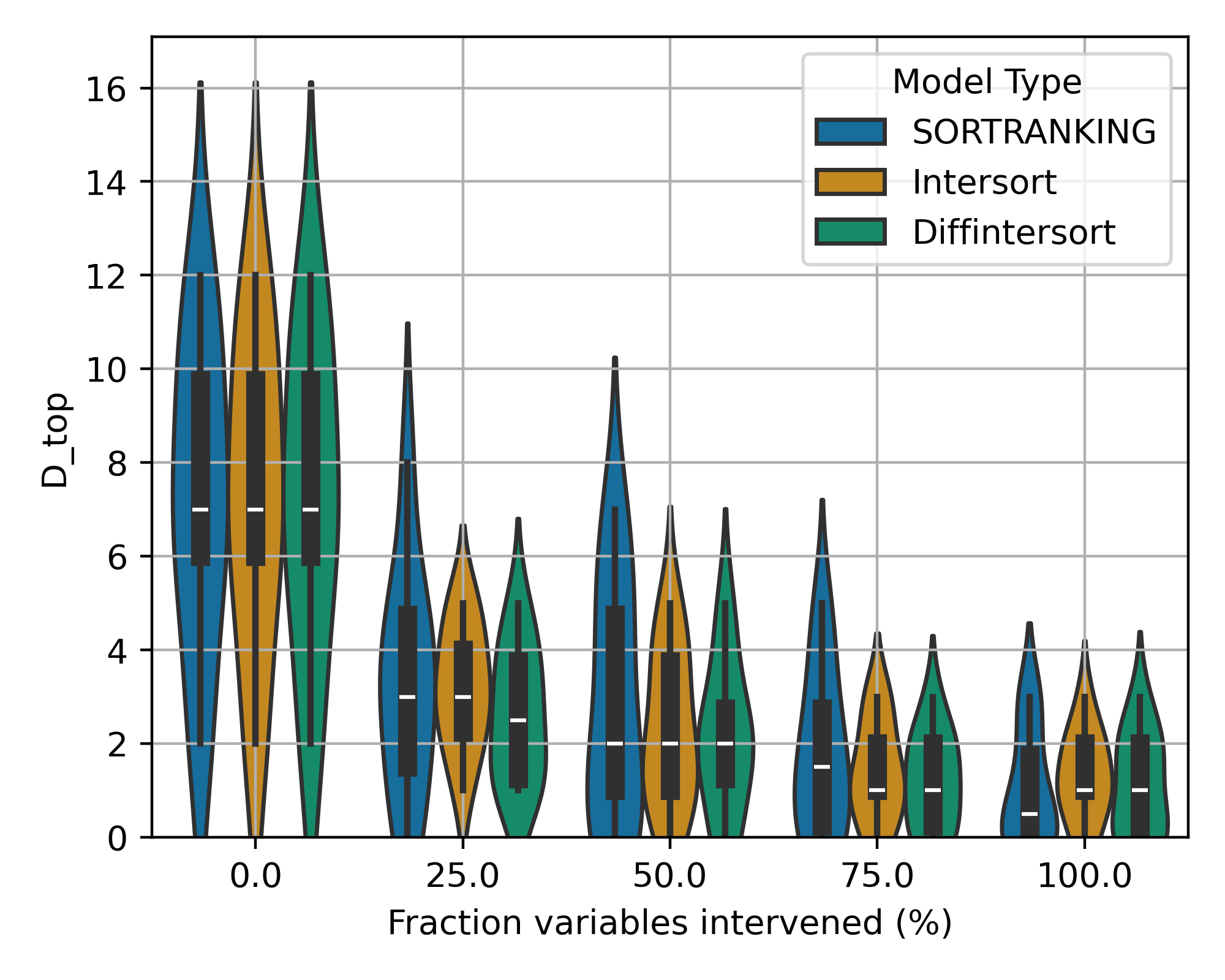}
        \caption{NN 10 variables}
        \label{fig:nn-30}
    \end{subfigure}
    \hfill
    \begin{subfigure}{0.32\textwidth}
        \includegraphics[width=0.9\linewidth]{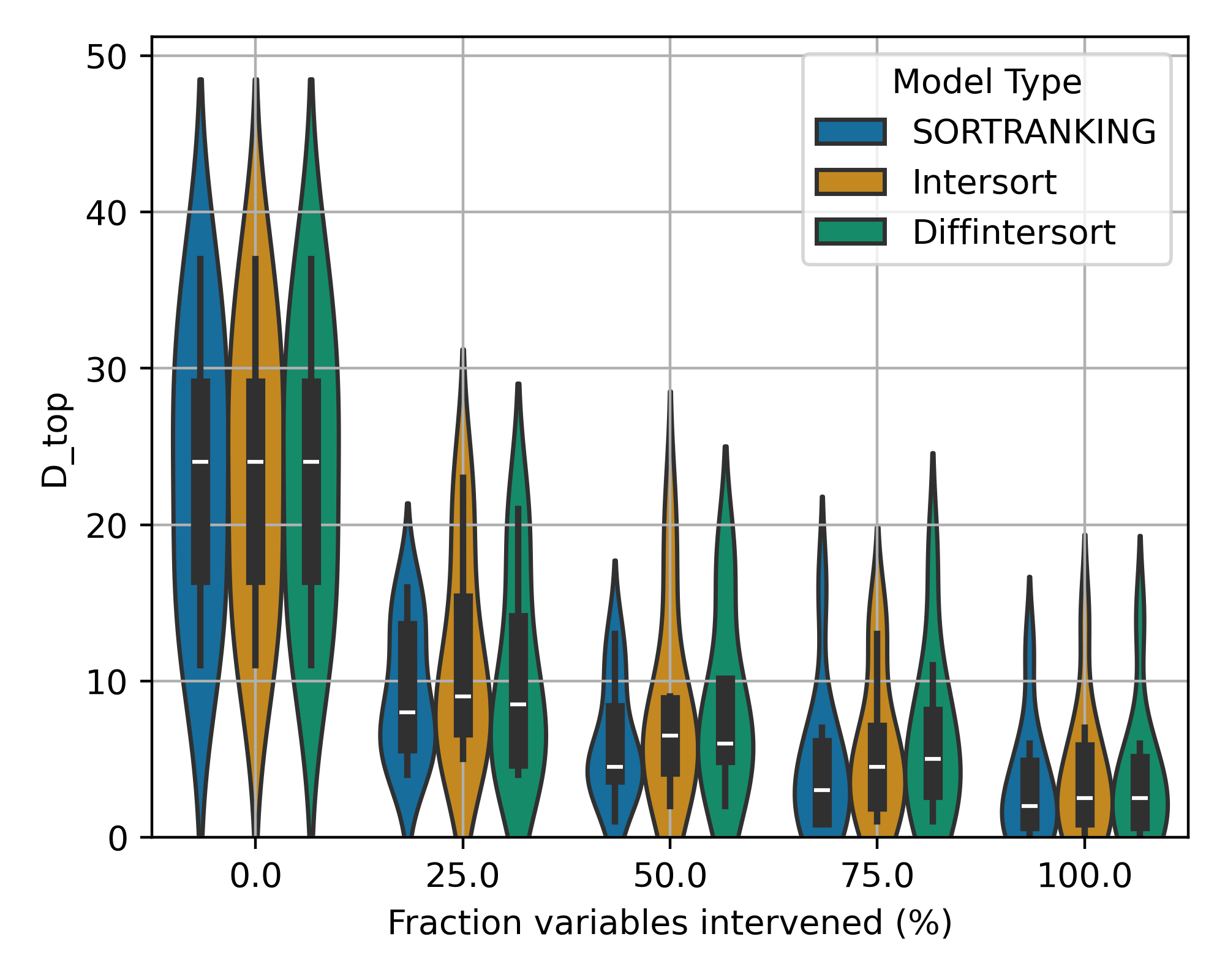}
        \caption{NN 30 variables}
        \label{fig:grn-30}
    \end{subfigure}

    \caption{Top order diverge scores (lower is better) assessing the quality of the derived causal order, comparing our method based on the DiffIntersort score to SORTRANKING and Intersort on $10$ and $30$ variables, for various types of data. }
    \label{fig:ordering-app}
\end{figure}

\begin{figure}[t]
    \centering
    \begin{subfigure}{0.32\textwidth}
        \includegraphics[width=0.9\linewidth]{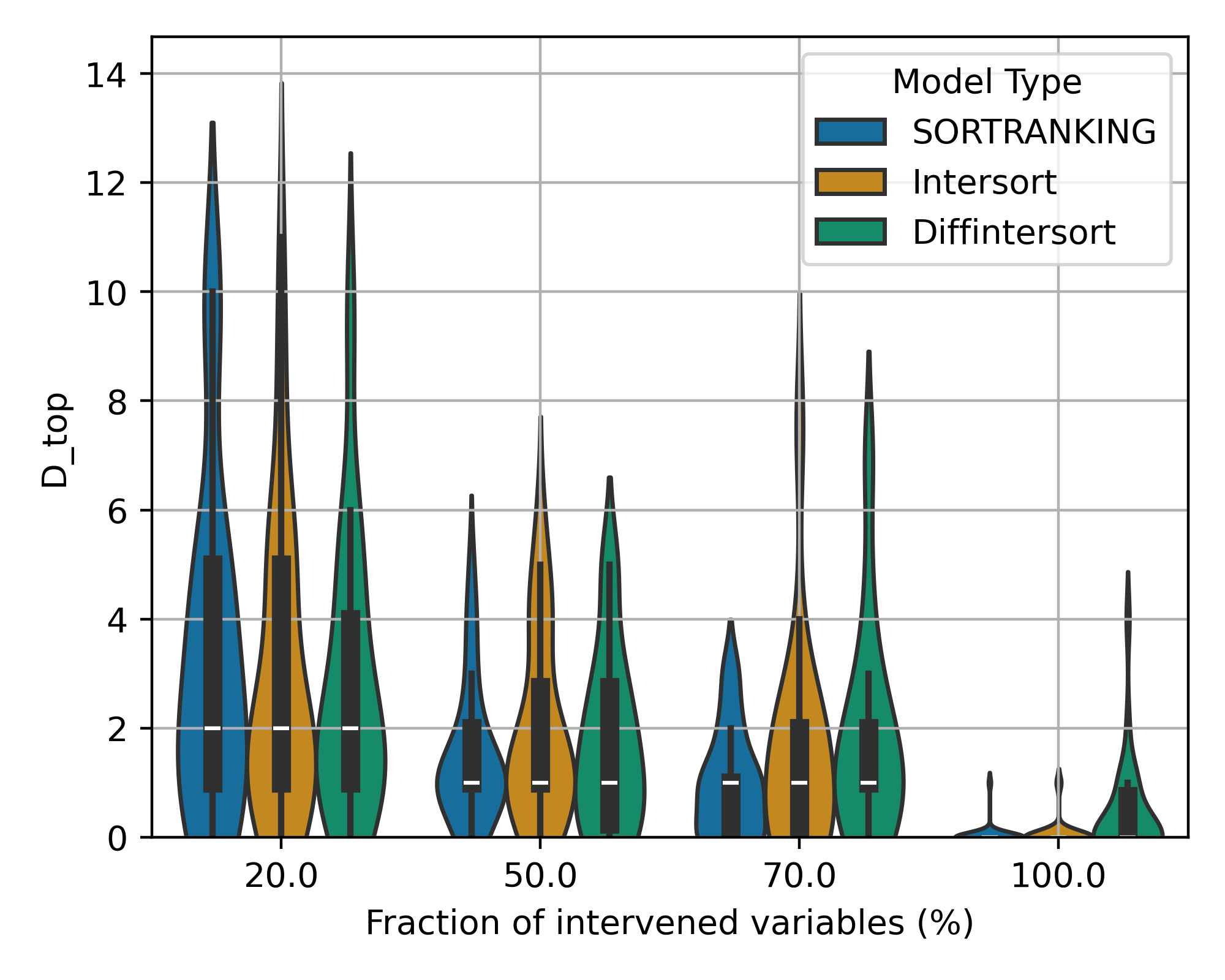}
        \caption{Linear 10 variables}
        \label{fig:lin-30}
    \end{subfigure}
    \hfill
    \begin{subfigure}{0.32\textwidth}
        \includegraphics[width=0.9\linewidth]{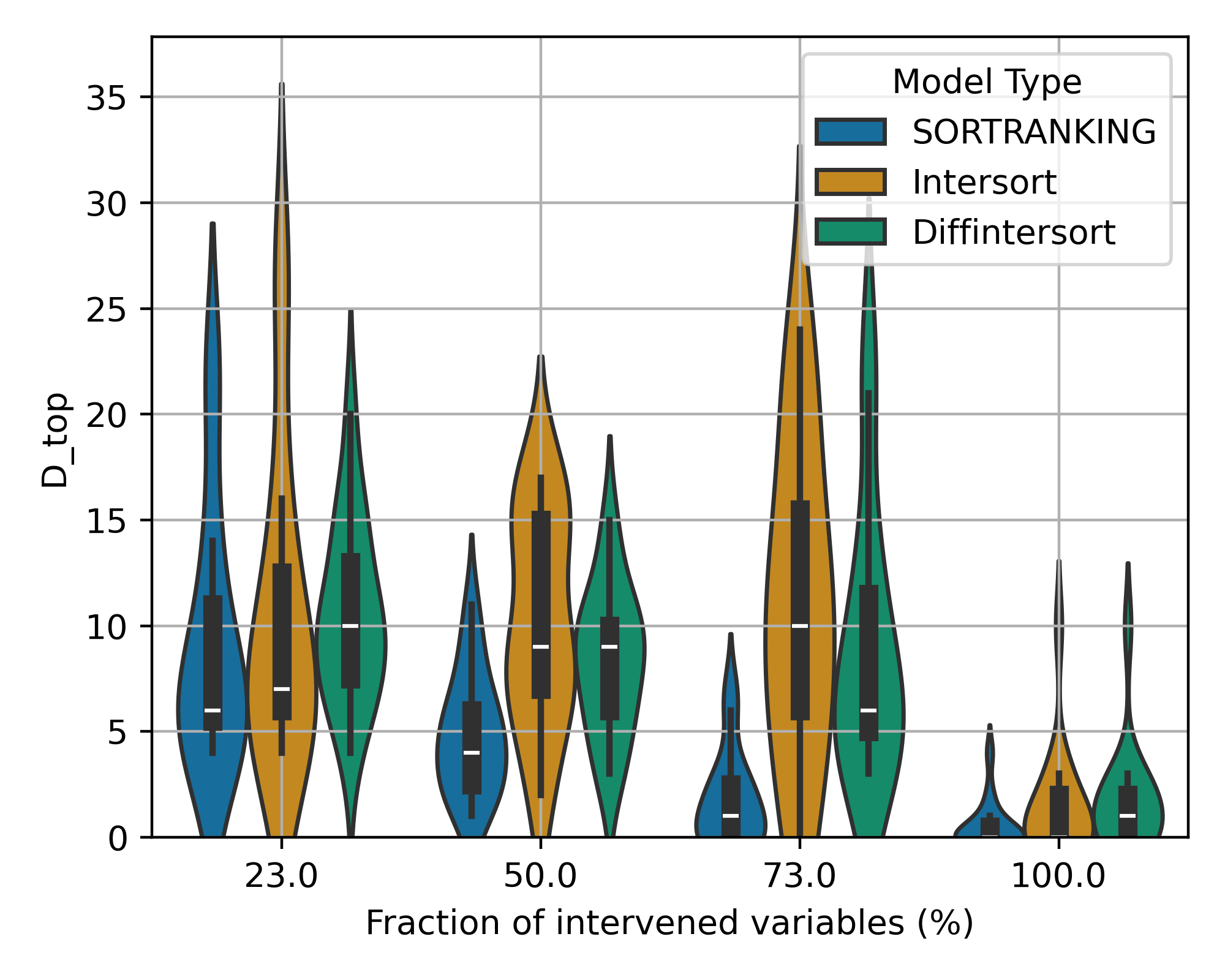}
        \caption{Linear 30 variables}
        \label{fig:rff-30}
    \end{subfigure}
    \hfill
    \begin{subfigure}{0.32\textwidth}
        \includegraphics[width=0.9\linewidth]{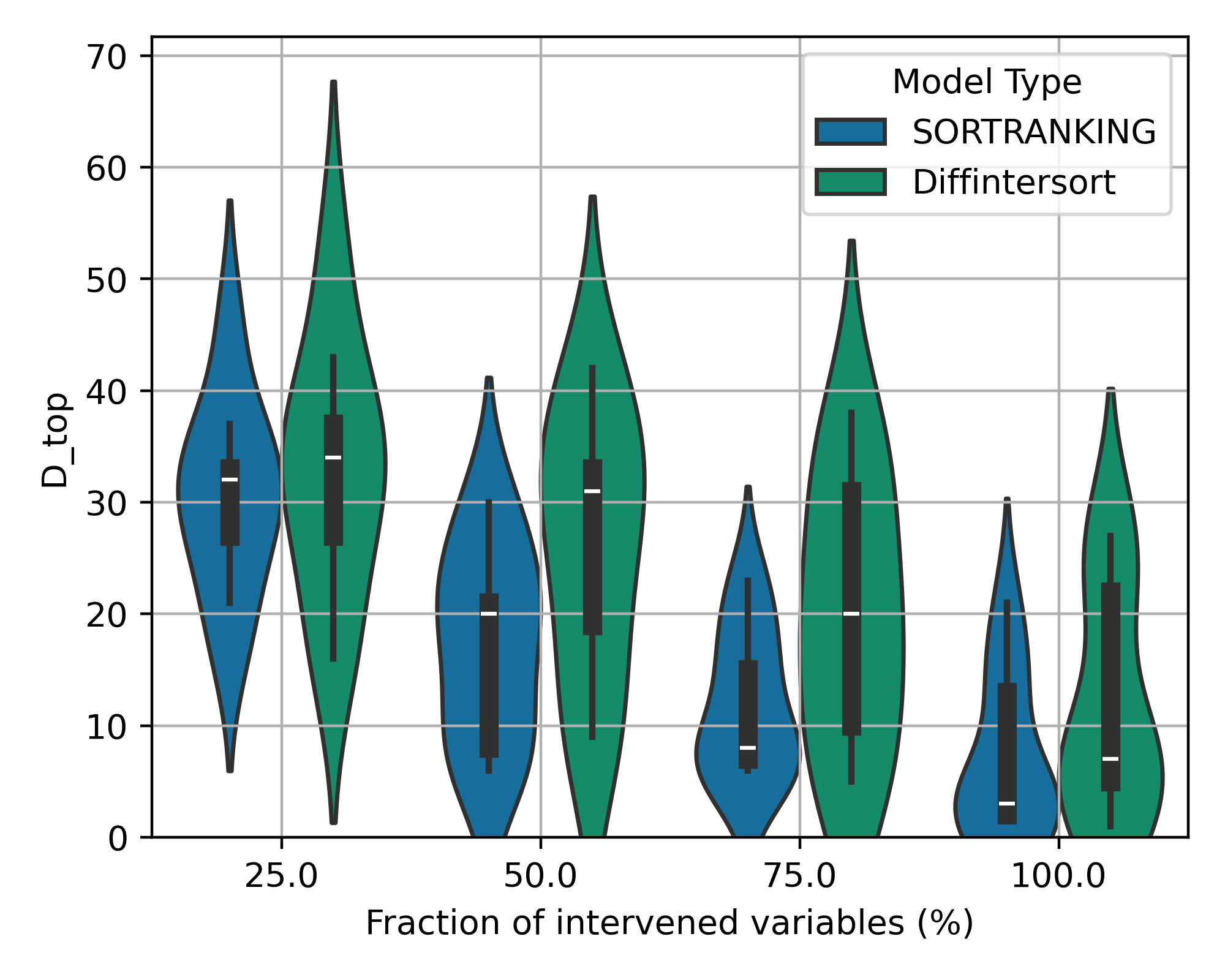}
        \caption{Linear 100 variables}
        \label{fig:rff-30}
    \end{subfigure}

    \begin{subfigure}{0.32\textwidth}
        \includegraphics[width=0.9\linewidth]{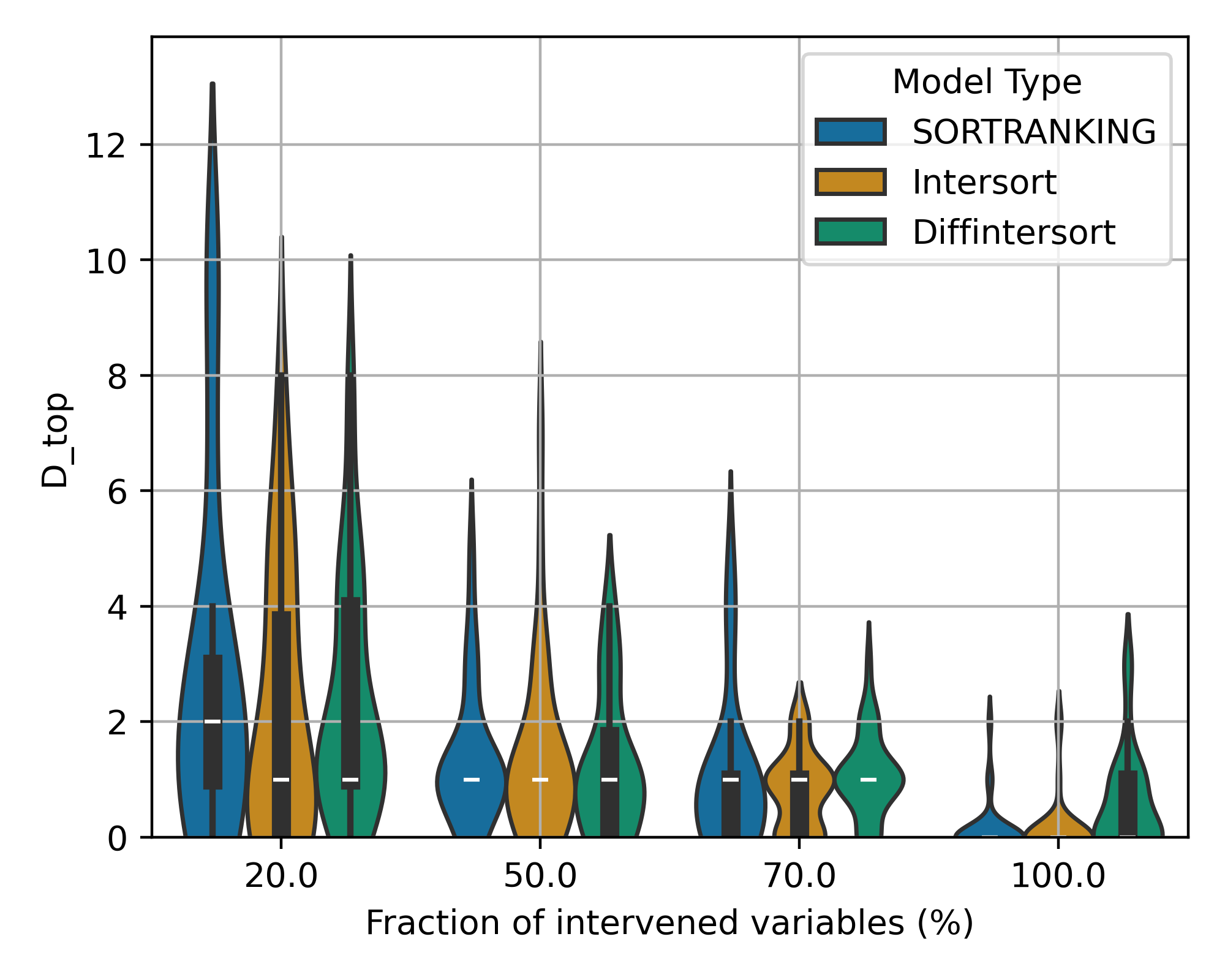}
        \caption{RFF 10 variables}
        \label{fig:nn-30}
    \end{subfigure}
    \hfill
    \begin{subfigure}{0.32\textwidth}
        \includegraphics[width=0.9\linewidth]{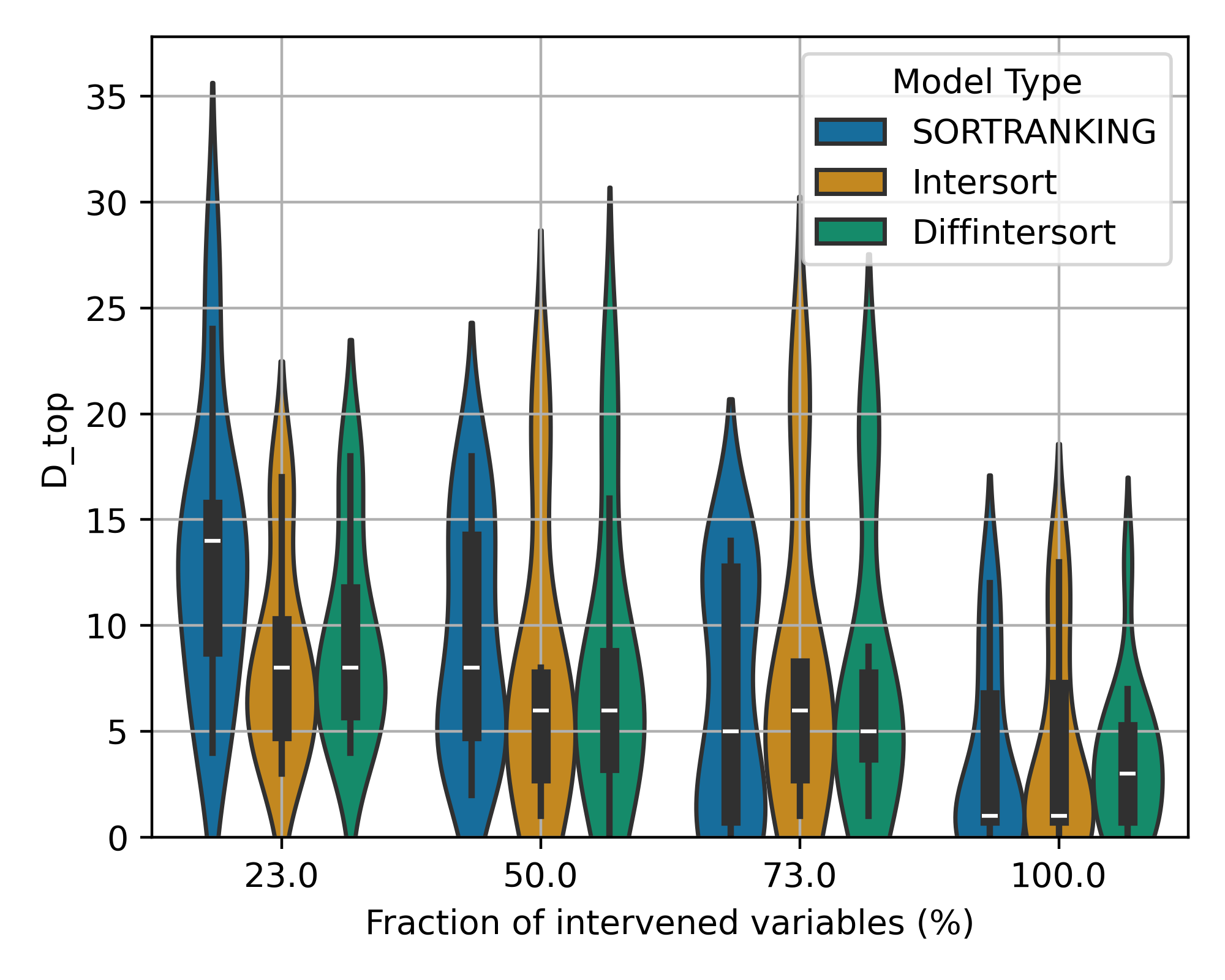}
        \caption{RFF 30 variables}
        \label{fig:grn-30}
    \end{subfigure}
    \hfill
    \begin{subfigure}{0.32\textwidth}
        \includegraphics[width=0.9\linewidth]{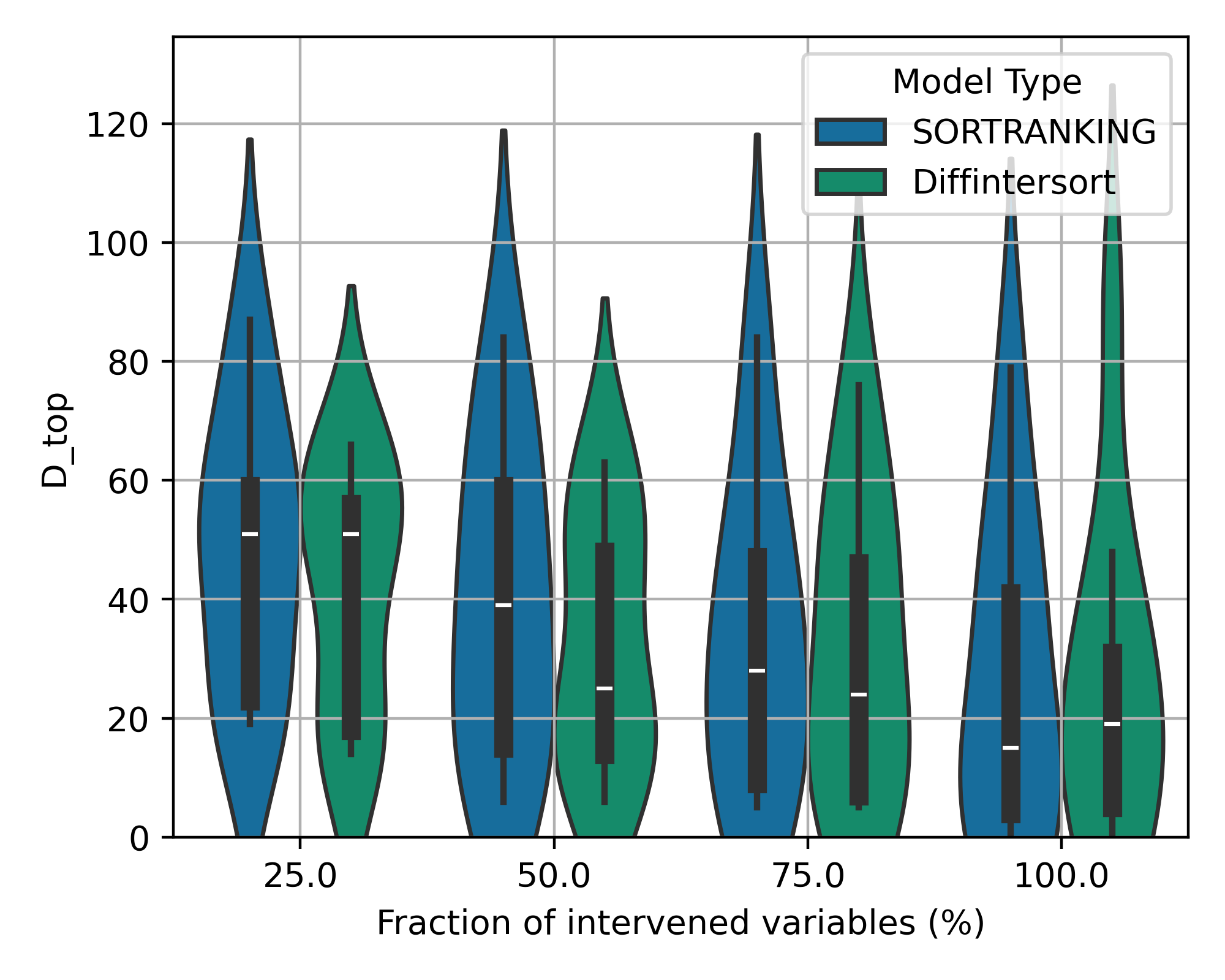}
        \caption{RFF 100 variables}
        \label{fig:grn-30}
    \end{subfigure}
    
    \begin{subfigure}{0.32\textwidth}
        \includegraphics[width=0.9\linewidth]{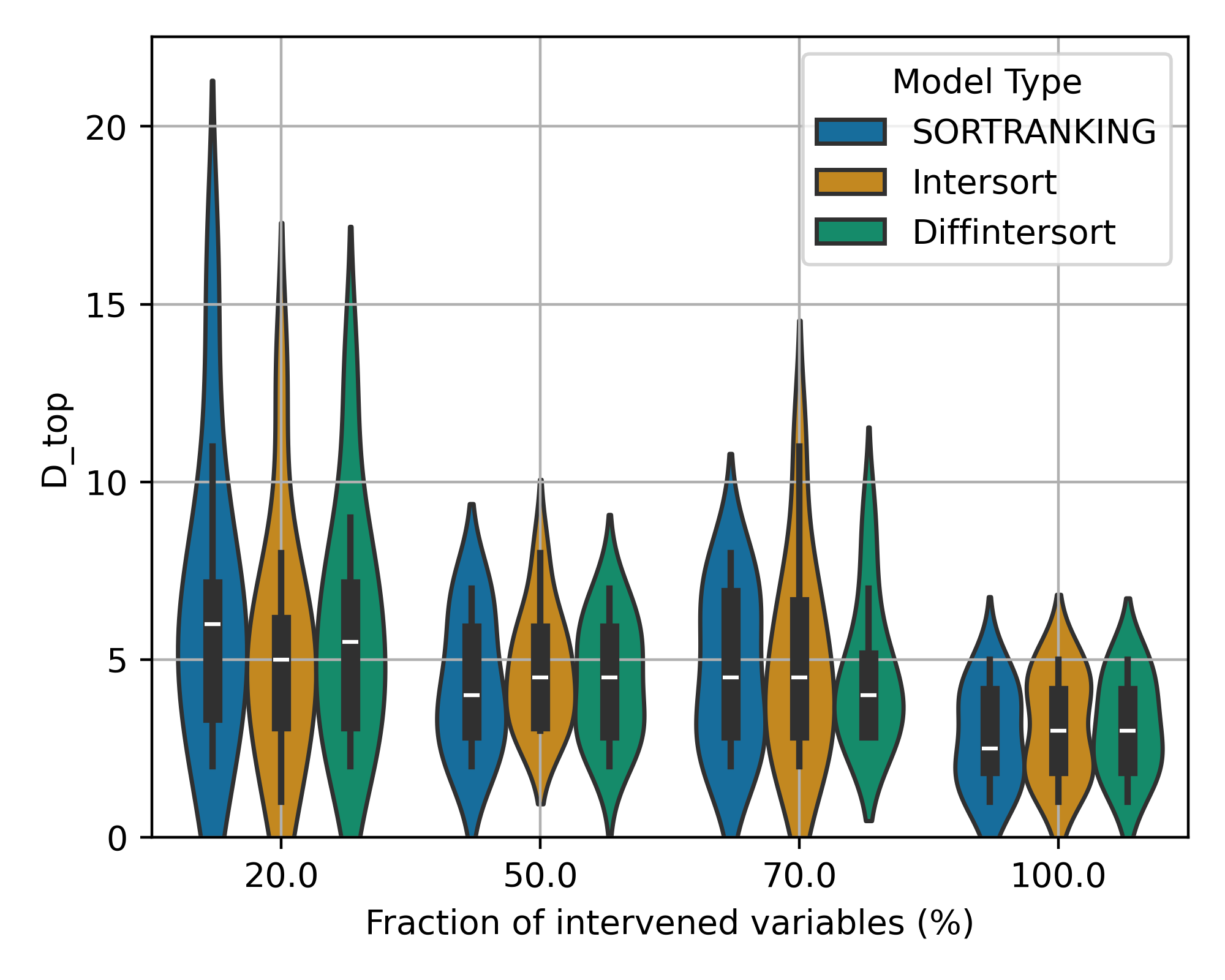}
        \caption{GRN 10 variables}
        \label{fig:nn-30}
    \end{subfigure}
    \hfill
    \begin{subfigure}{0.32\textwidth}
        \includegraphics[width=0.9\linewidth]{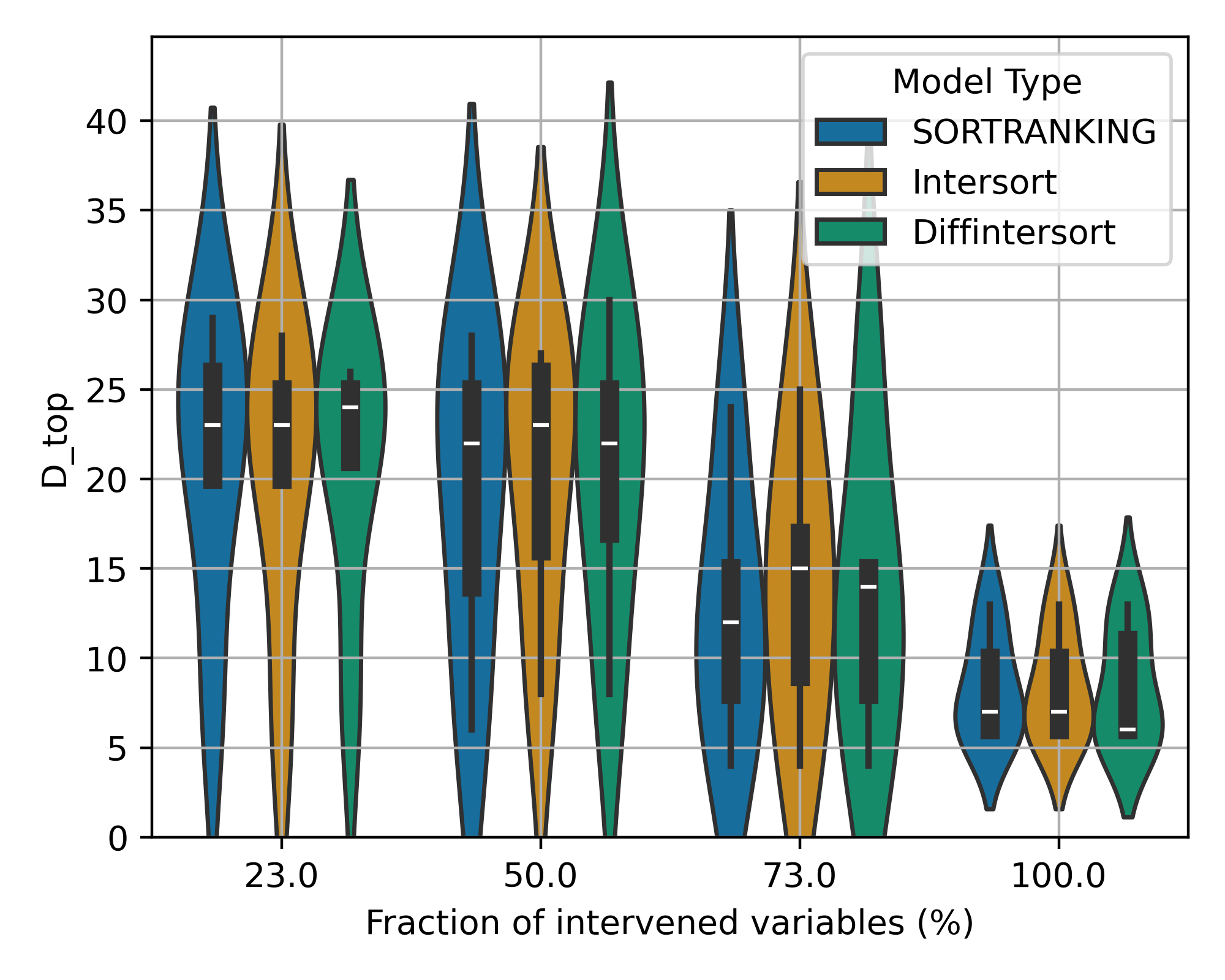}
        \caption{GRN 30 variables}
        \label{fig:grn-30}
    \end{subfigure}
    \hfill
    \begin{subfigure}{0.32\textwidth}
        \includegraphics[width=0.9\linewidth]{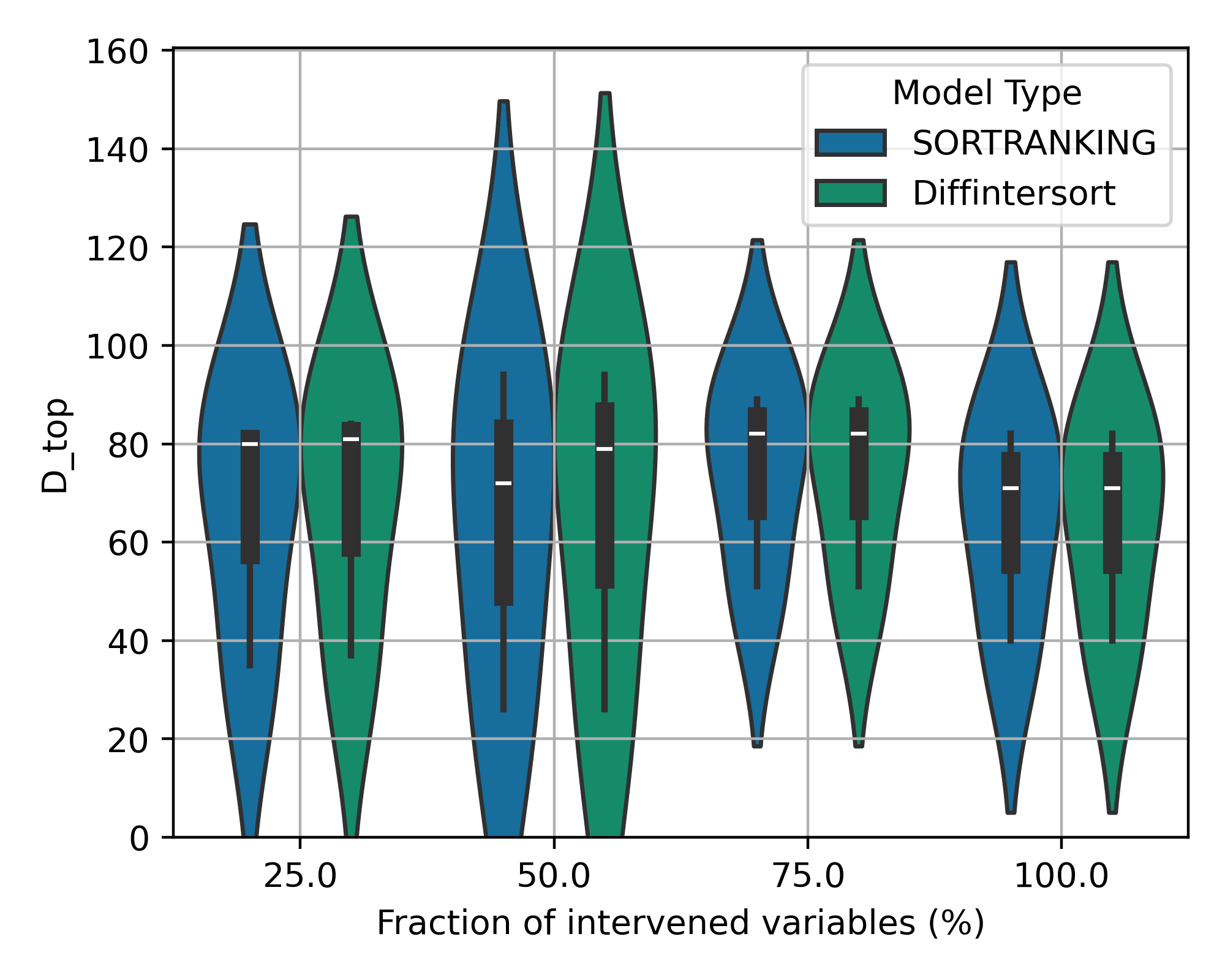}
        \caption{GRN 100 variables}
        \label{fig:grn-30}
    \end{subfigure}
   
    \begin{subfigure}{0.32\textwidth}
        \includegraphics[width=0.9\linewidth]{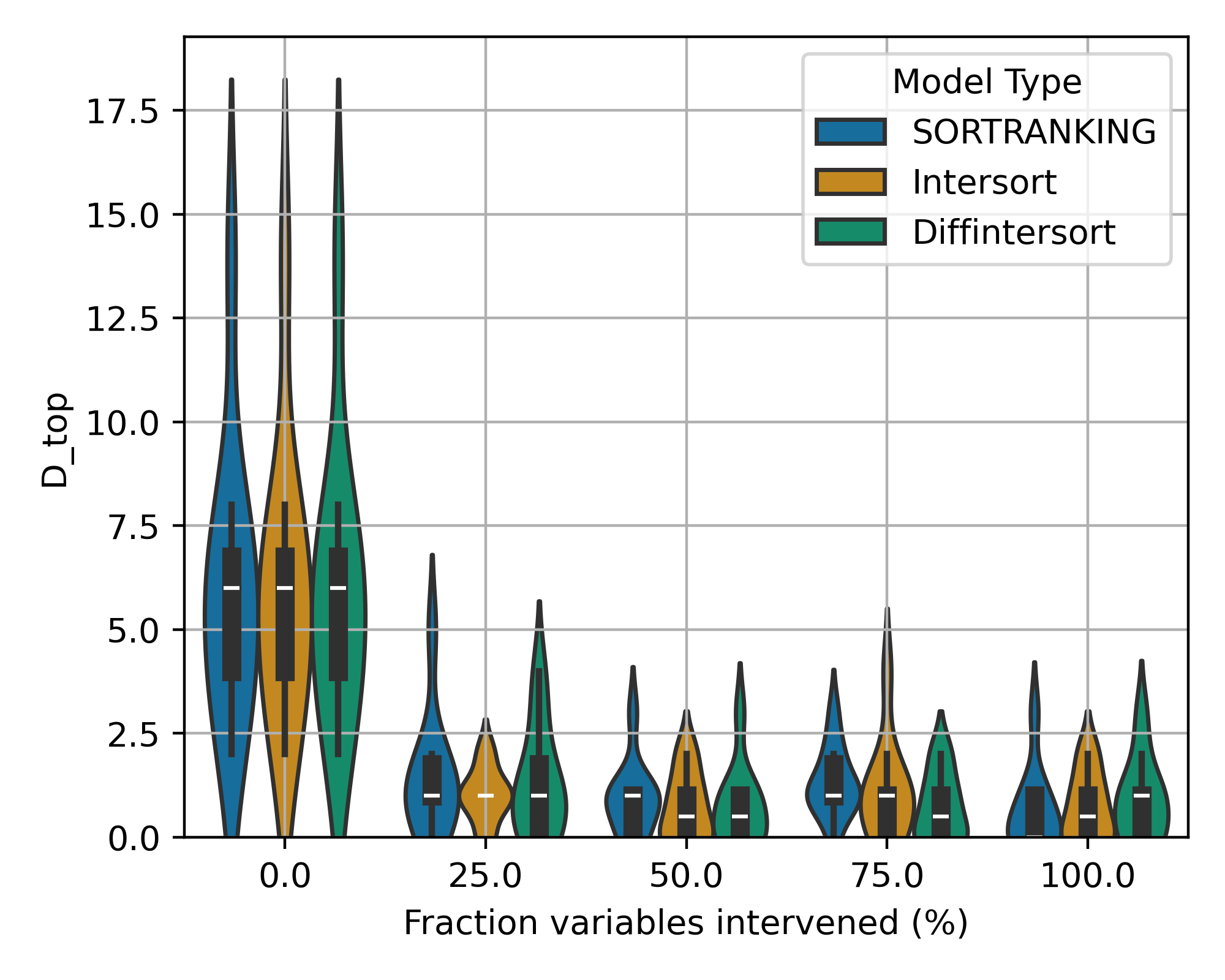}
        \caption{NN 10 variables}
        \label{fig:nn-30}
    \end{subfigure}
    \hfill
    \begin{subfigure}{0.32\textwidth}
        \includegraphics[width=0.9\linewidth]{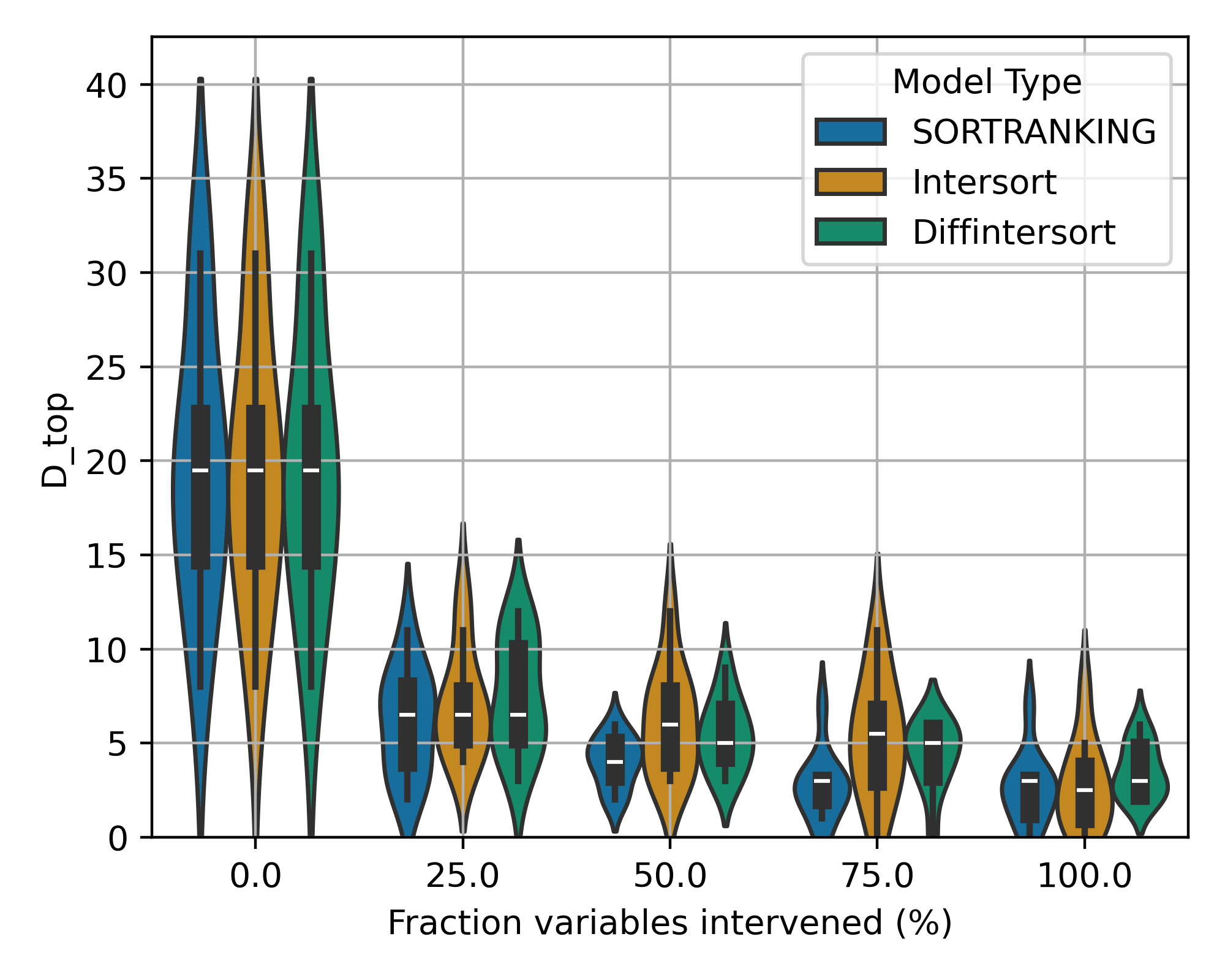}
        \caption{NN 30 variables}
        \label{fig:grn-30}
    \end{subfigure}
    \hfill
    \begin{subfigure}{0.32\textwidth}
        \includegraphics[width=0.9\linewidth]{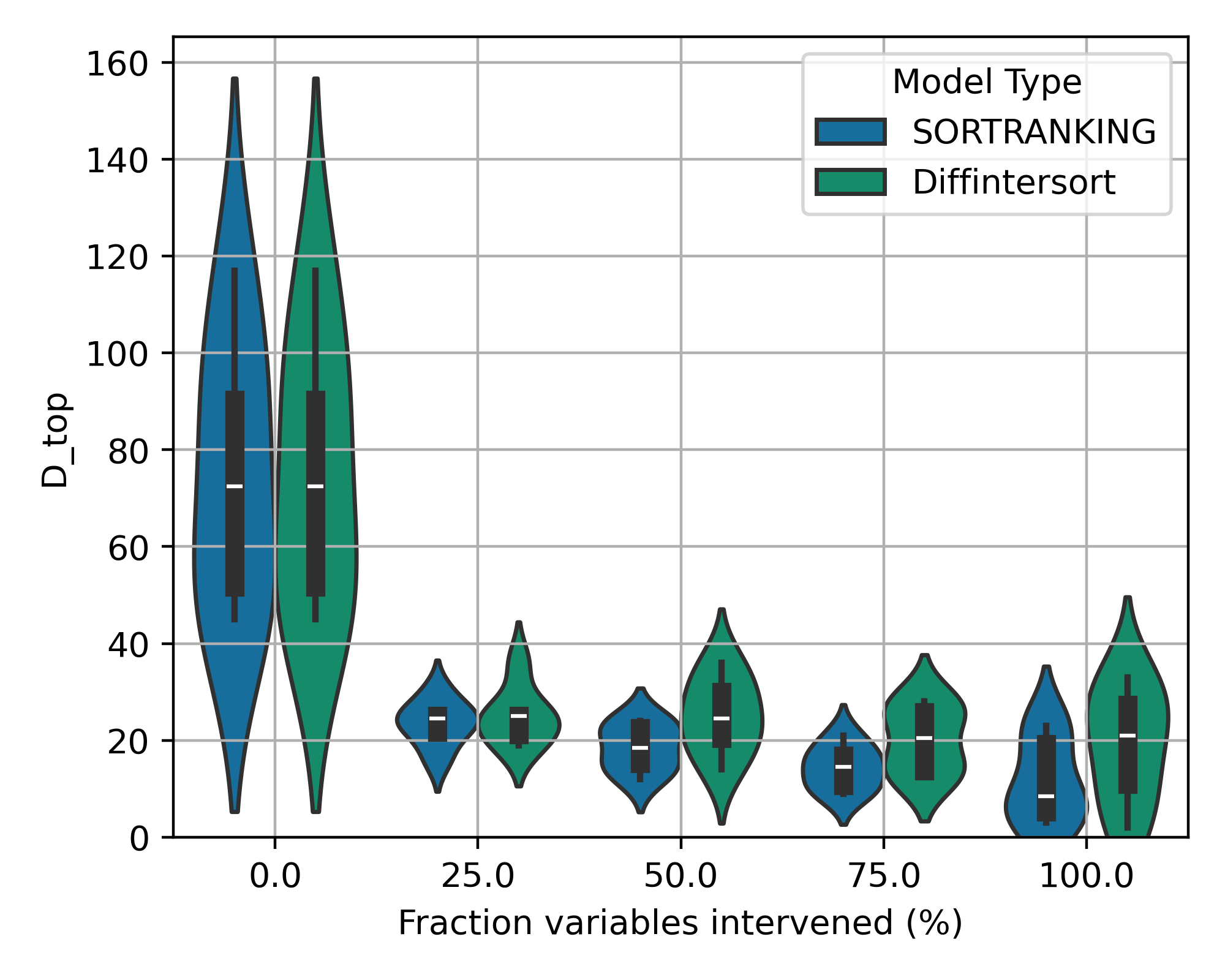}
        \caption{NN 100 variables}
        \label{fig:grn-30}
    \end{subfigure}

    \caption{Top order diverge scores (lower is better) assessing the quality of the derived causal order, comparing our method based on the DiffIntersort score to SORTRANKING and Intersort on $10$ and $30$ variables, for various types of data for a scale free network distribution. }
    \label{fig:ordering-sf-app}
\end{figure}

\begin{figure}[t]
    \centering
    % First row: Gene model with 10 variables
    \begin{subfigure}{0.24\textwidth}
        \includegraphics[width=\linewidth]{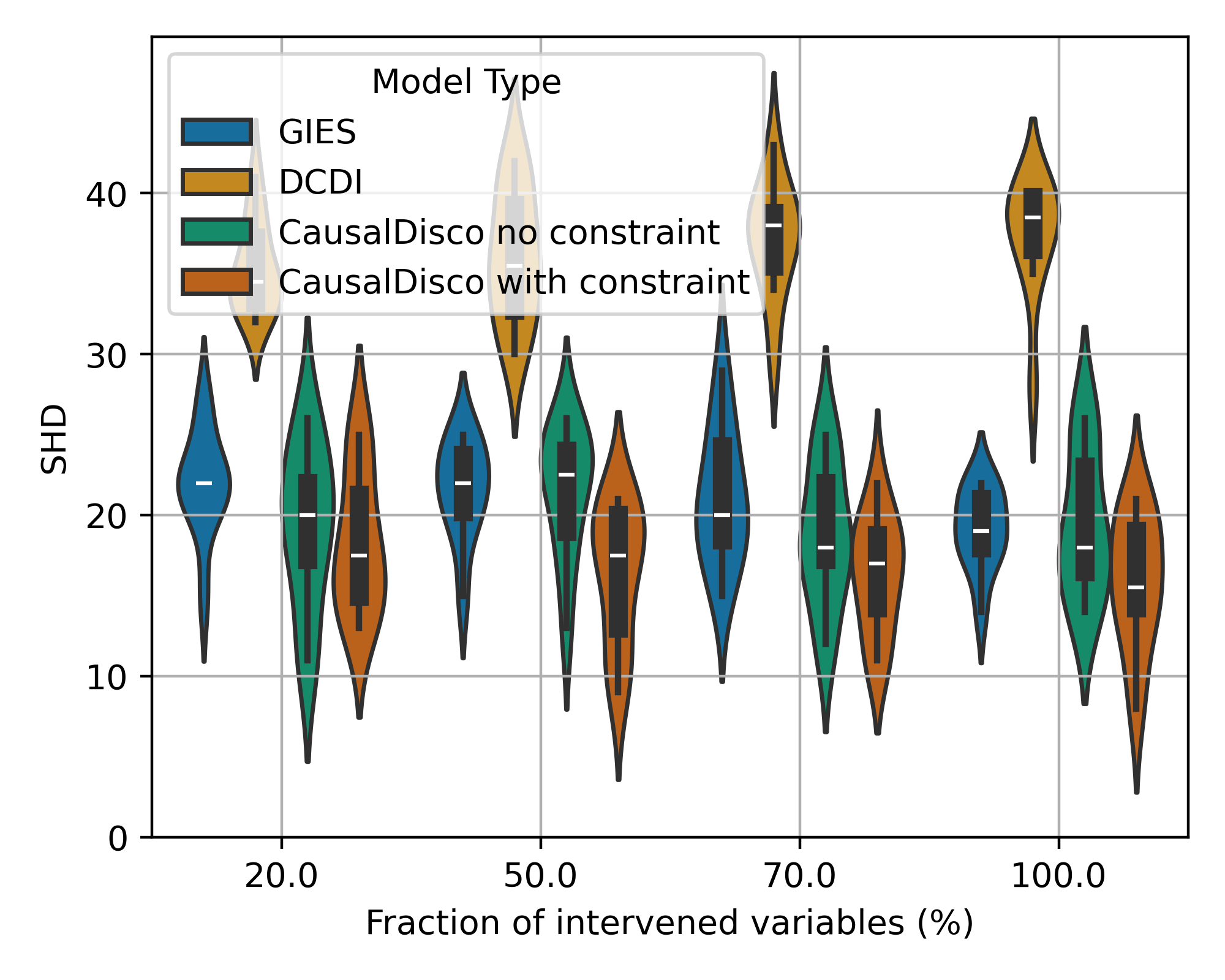}
        \caption{GRN, 10 vars, SHD}
        \label{fig:gene-10-shd}
    \end{subfigure}
    \hfill
    % Fourth row: Linear model with 10 variables
    \begin{subfigure}{0.24\textwidth}
        \includegraphics[width=\linewidth]{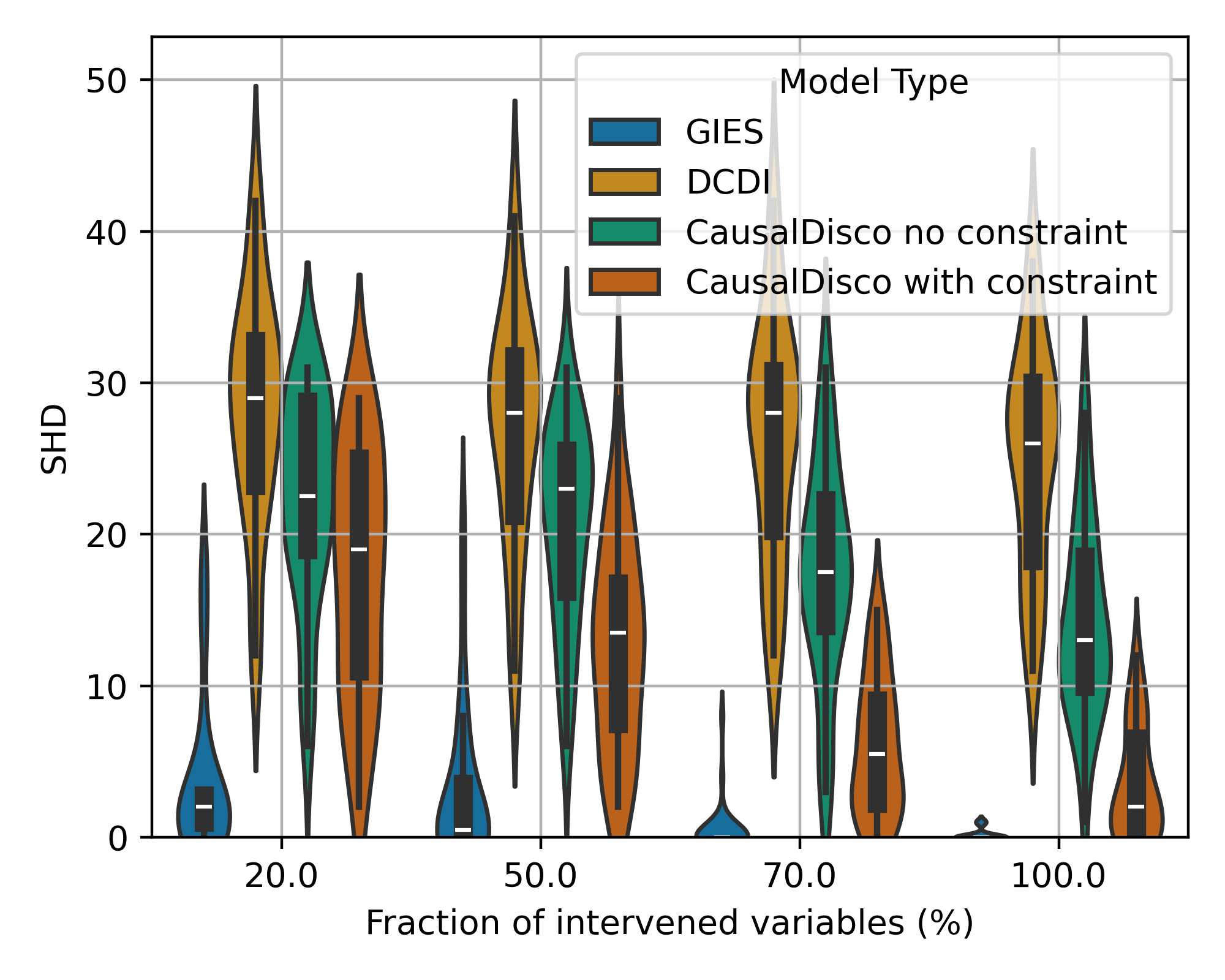}
        \caption{Linear, 10 vars, SHD}
        \label{fig:lin-10-shd}
    \end{subfigure}
    \hfill
    \begin{subfigure}{0.24\textwidth}
        \includegraphics[width=\linewidth]{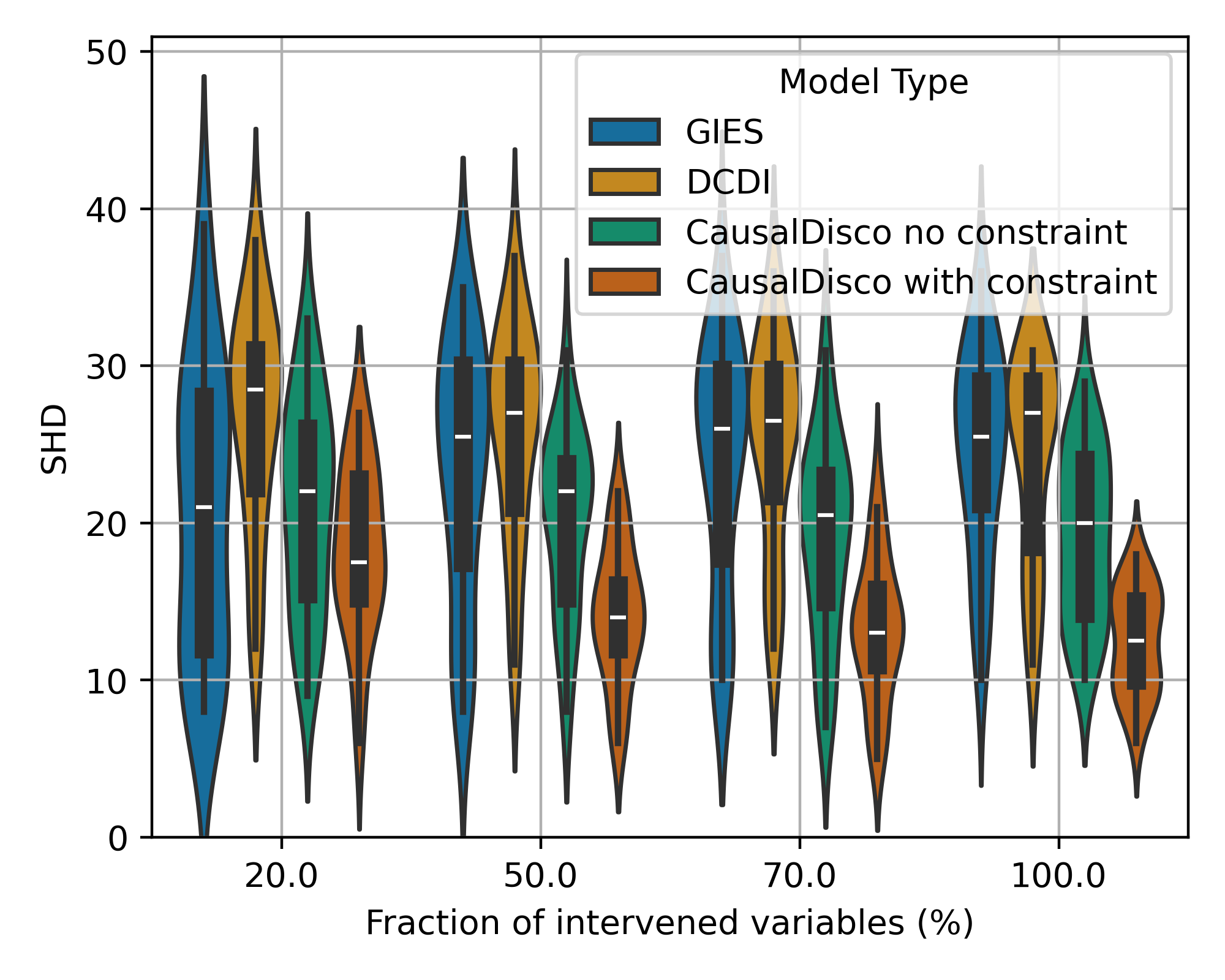}
        \caption{RFF, 10 vars, SHD}
        \label{fig:rff-10-shd}
    \end{subfigure}
    \hfill
    \begin{subfigure}{0.24\textwidth}
        \includegraphics[width=\linewidth]{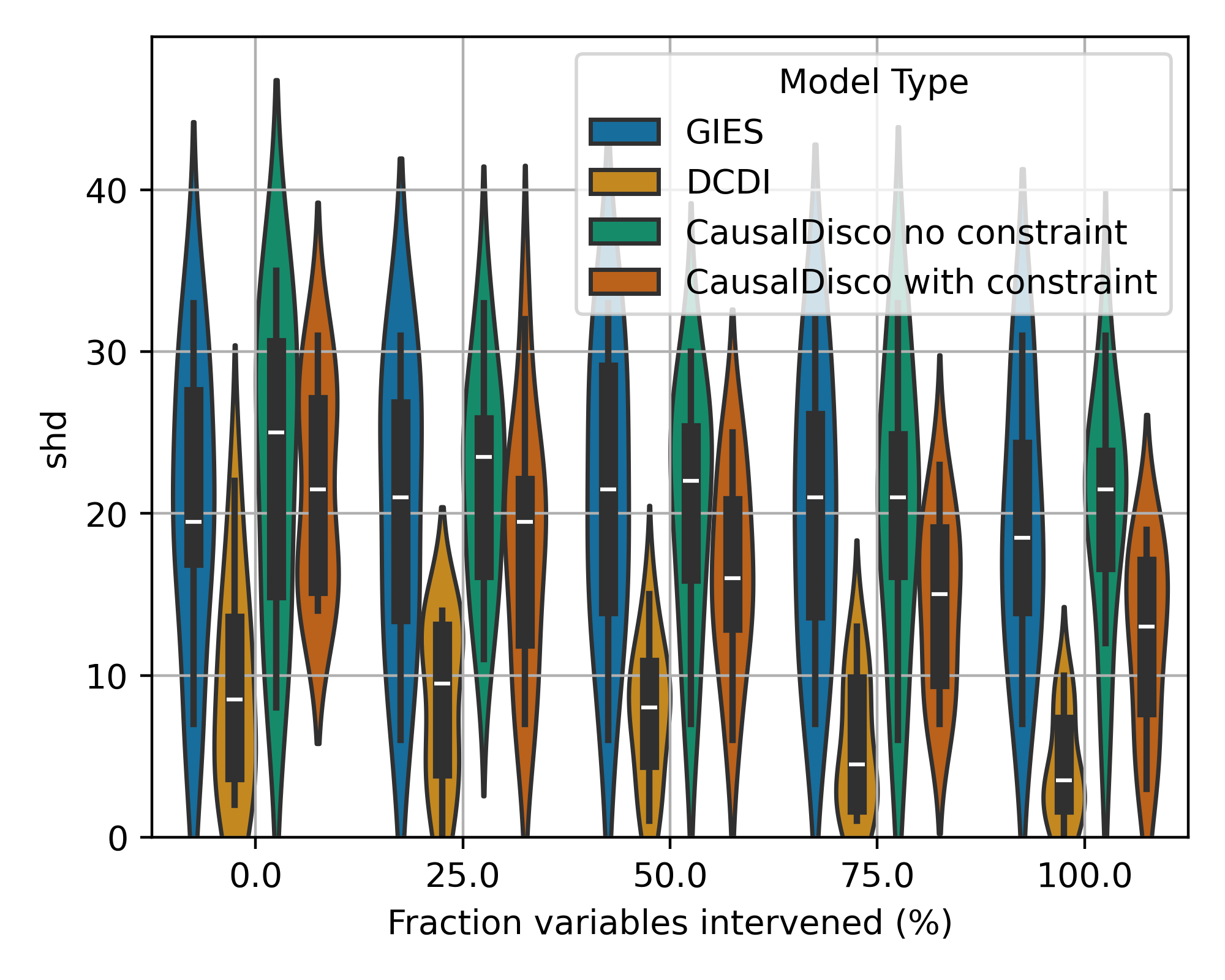}
        \caption{NN, 10 vars, SHD}
        \label{fig:nn-10-shd}
    \end{subfigure}

    % Third row: Gene model with 100 variables
    \begin{subfigure}{0.24\textwidth}
        \includegraphics[width=\linewidth]{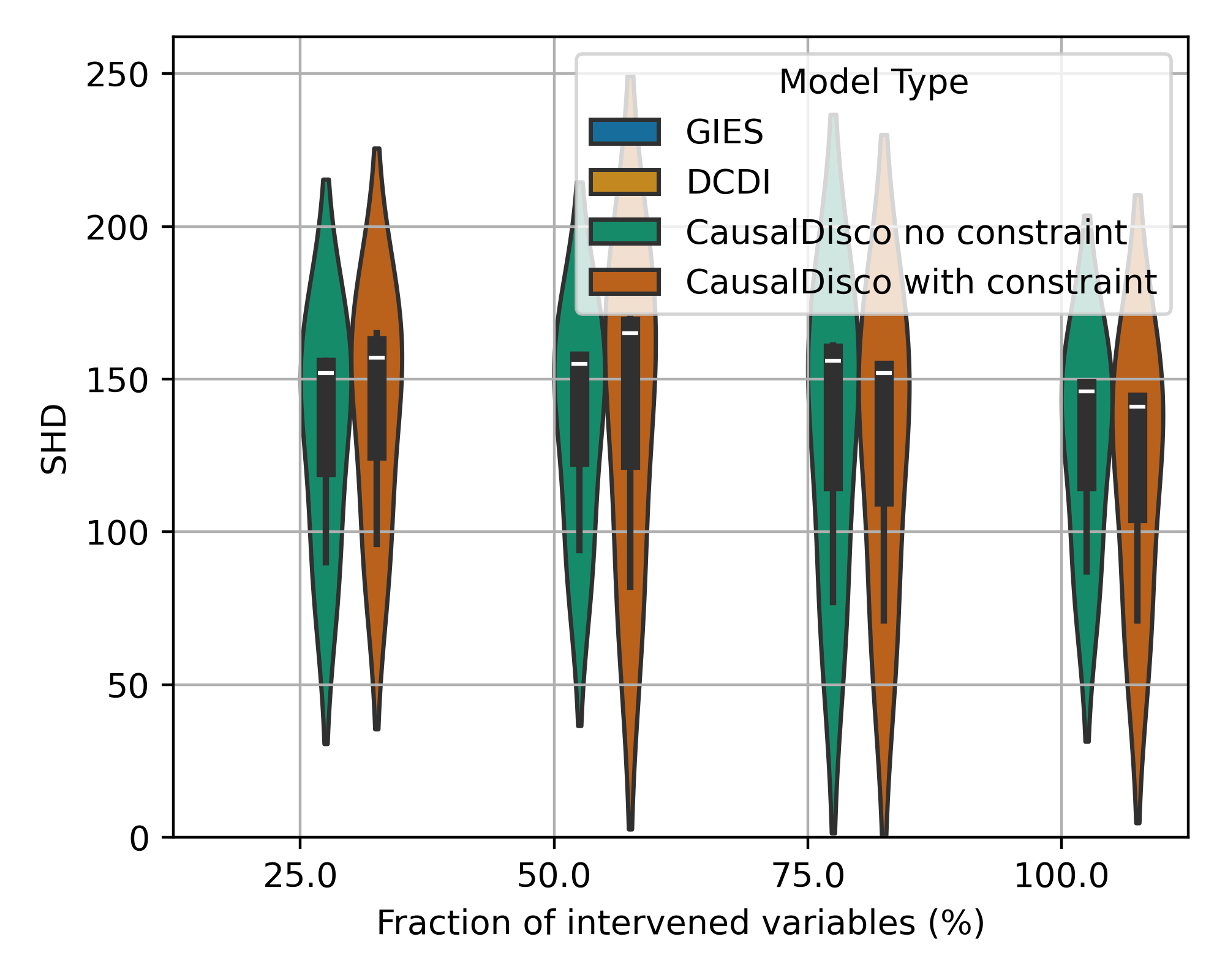}
        \caption{GRN, 100 vars, SHD}
        \label{fig:gene-100-shd}
    \end{subfigure}
    \hfill
    % Sixth row: Linear model with 100 variables
    \begin{subfigure}{0.24\textwidth}
        \includegraphics[width=\linewidth]{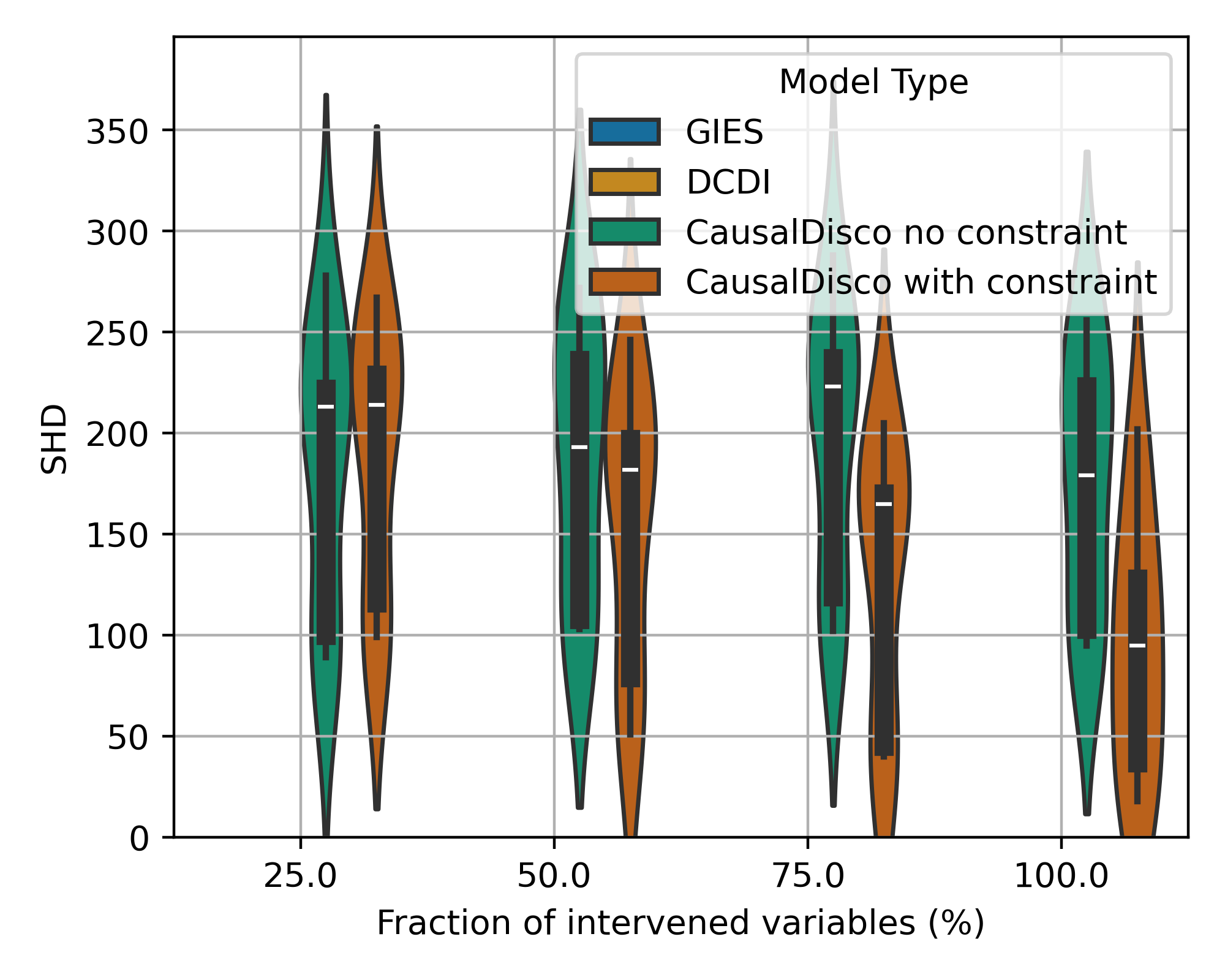}
        \caption{Linear, 100 vars, SHD}
        \label{fig:lin-100-shd}
    \end{subfigure}
    \hfill
    % Seventh row: RFF model with 10 variables
    % Ninth row: RFF model with 100 variables
    \begin{subfigure}{0.24\textwidth}
        \includegraphics[width=\linewidth]{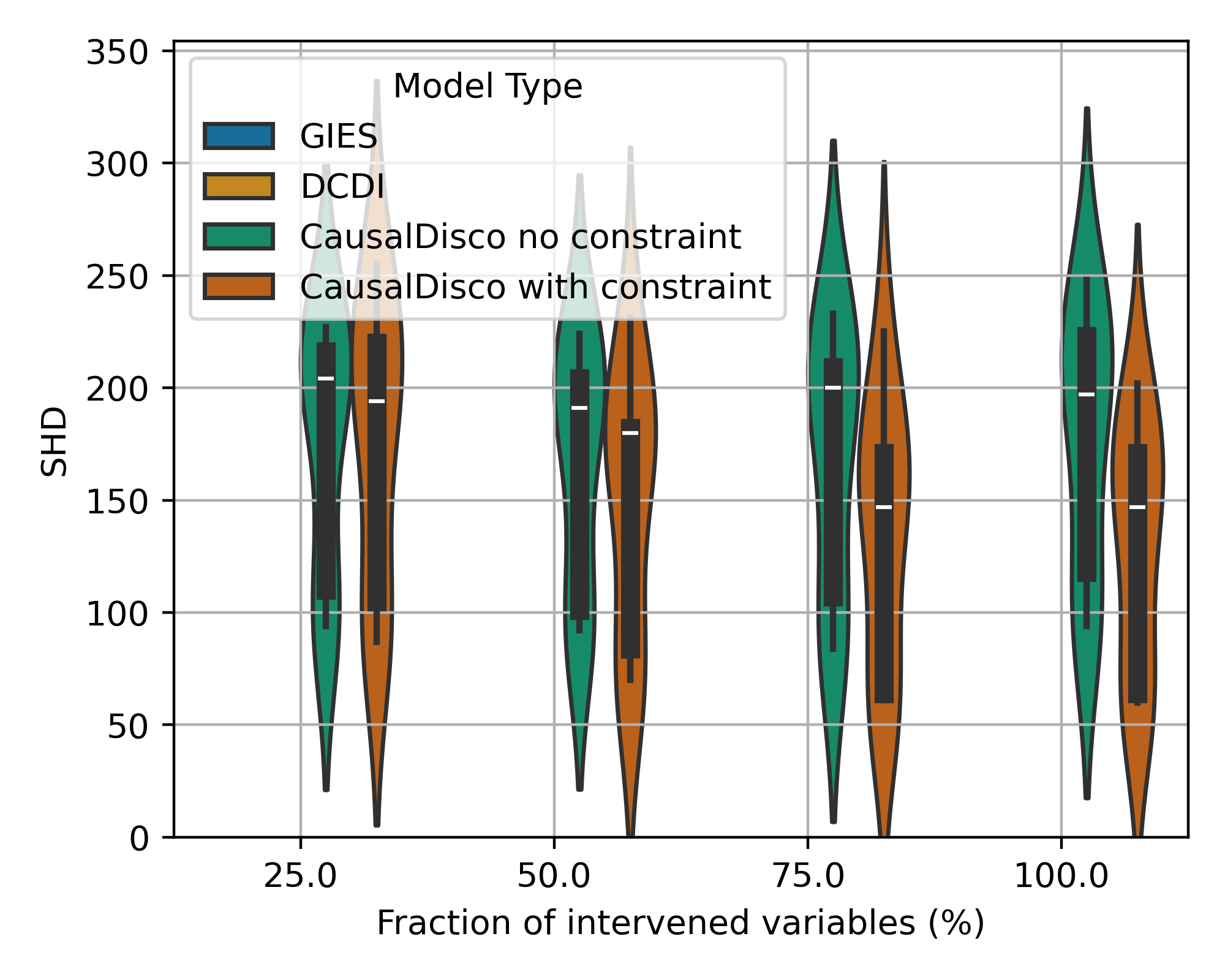}
        \caption{RFF, 100 vars, SHD}
        \label{fig:rff-100-shd}
    \end{subfigure}
    % Tenth row: Neural Network model with 10 variables
    \hfill
    % Twelfth row: Neural Network model with 100 variables
    \begin{subfigure}{0.24\textwidth}
        \includegraphics[width=\linewidth]{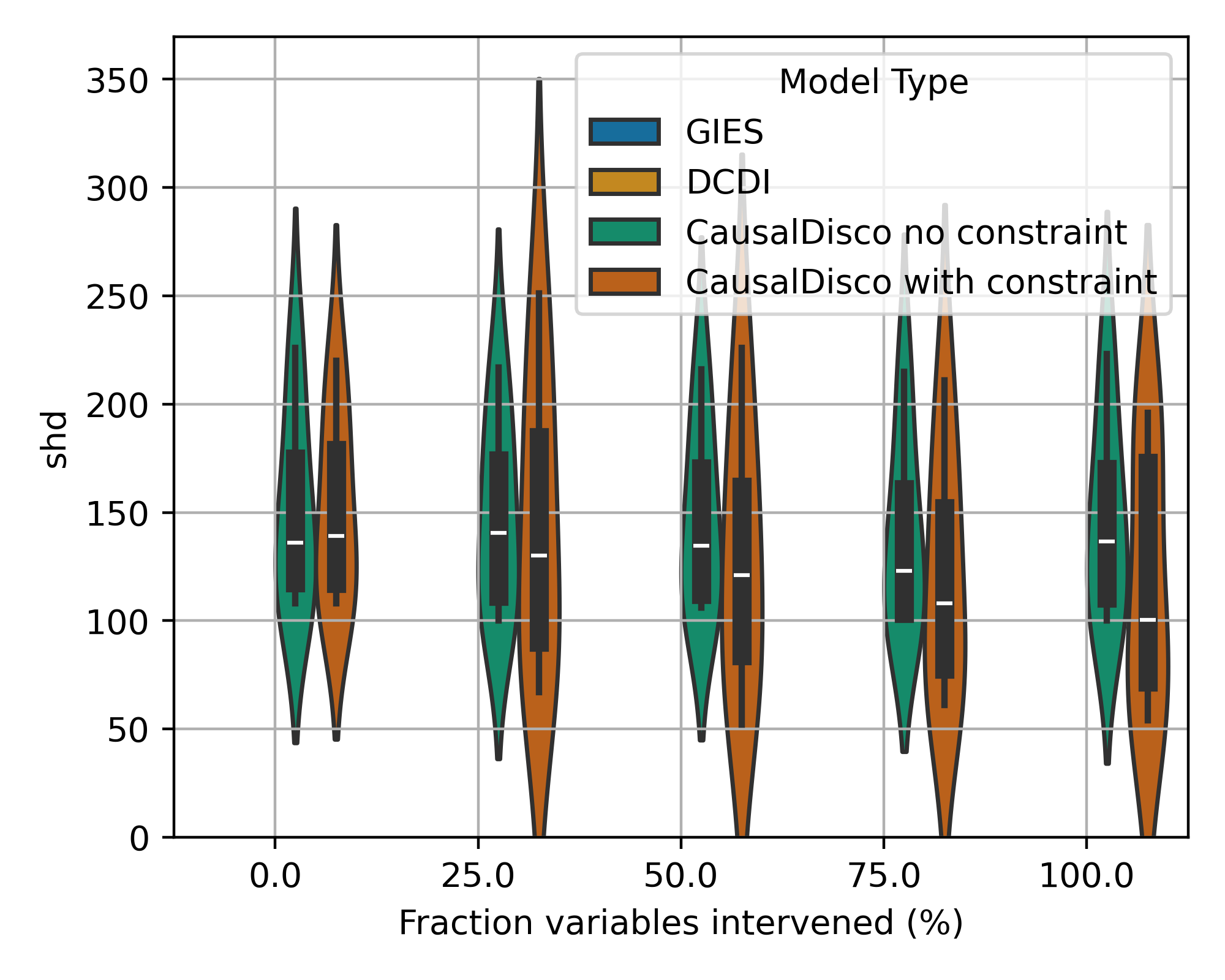}
        \caption{NN, 100 vars, SHD}
        \label{fig:nn-100-shd}
    \end{subfigure}
    \hfill
    \caption{Comparison of Structural Hamming Distance (SHD)  for Gene, Linear, RFF, and Neural Network models with varying numbers of variables. }
    \label{fig:shd-app}
\end{figure}

\begin{figure}[t]
    \centering
    % First row: Gene model with 10 variables
    \begin{subfigure}{0.24\textwidth}
        \includegraphics[width=\linewidth]{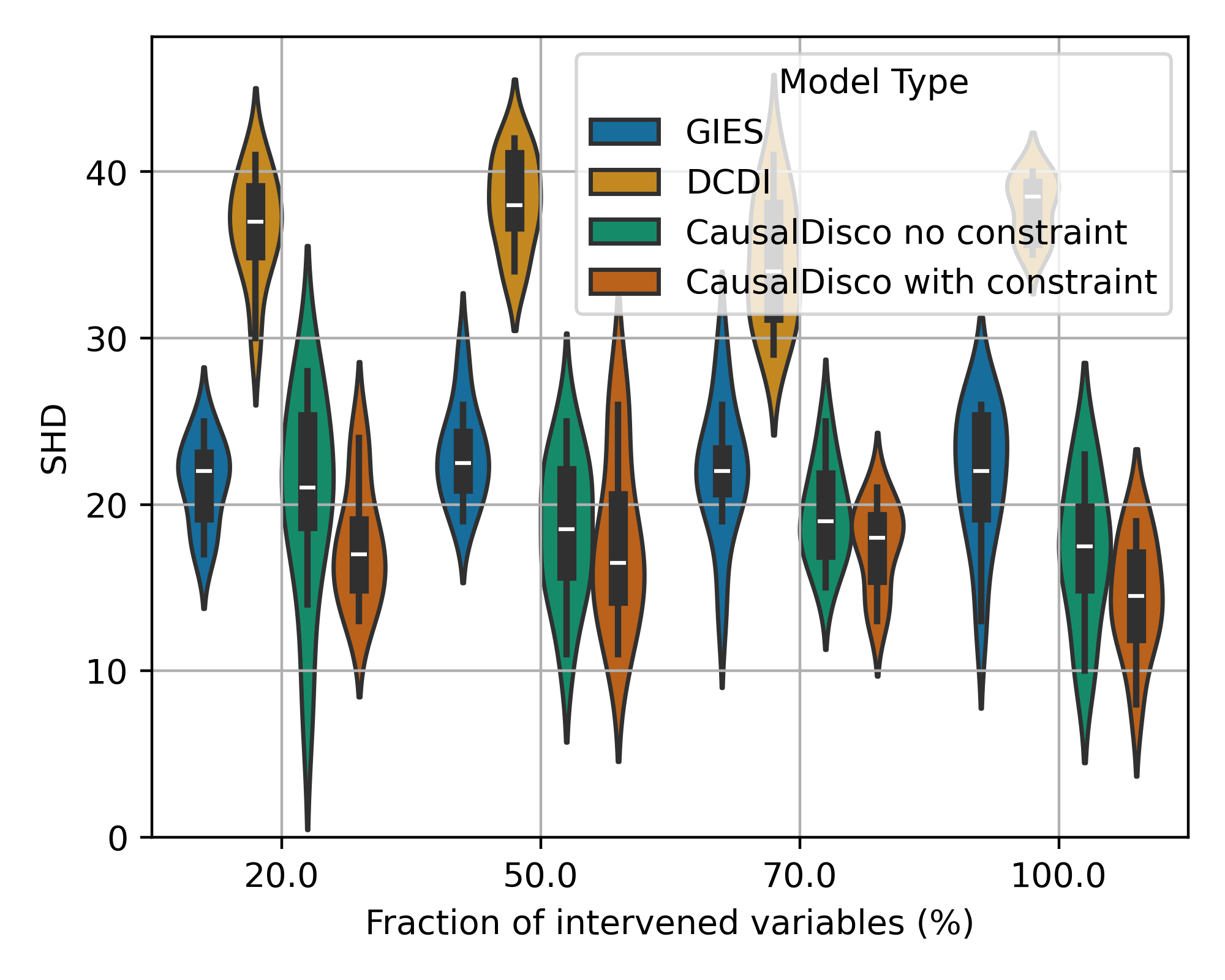}
        \caption{GRN, 10 vars, SHD}
        \label{fig:gene-10-shd}
    \end{subfigure}
    \hfill
    % Fourth row: Linear model with 10 variables
    \begin{subfigure}{0.24\textwidth}
        \includegraphics[width=\linewidth]{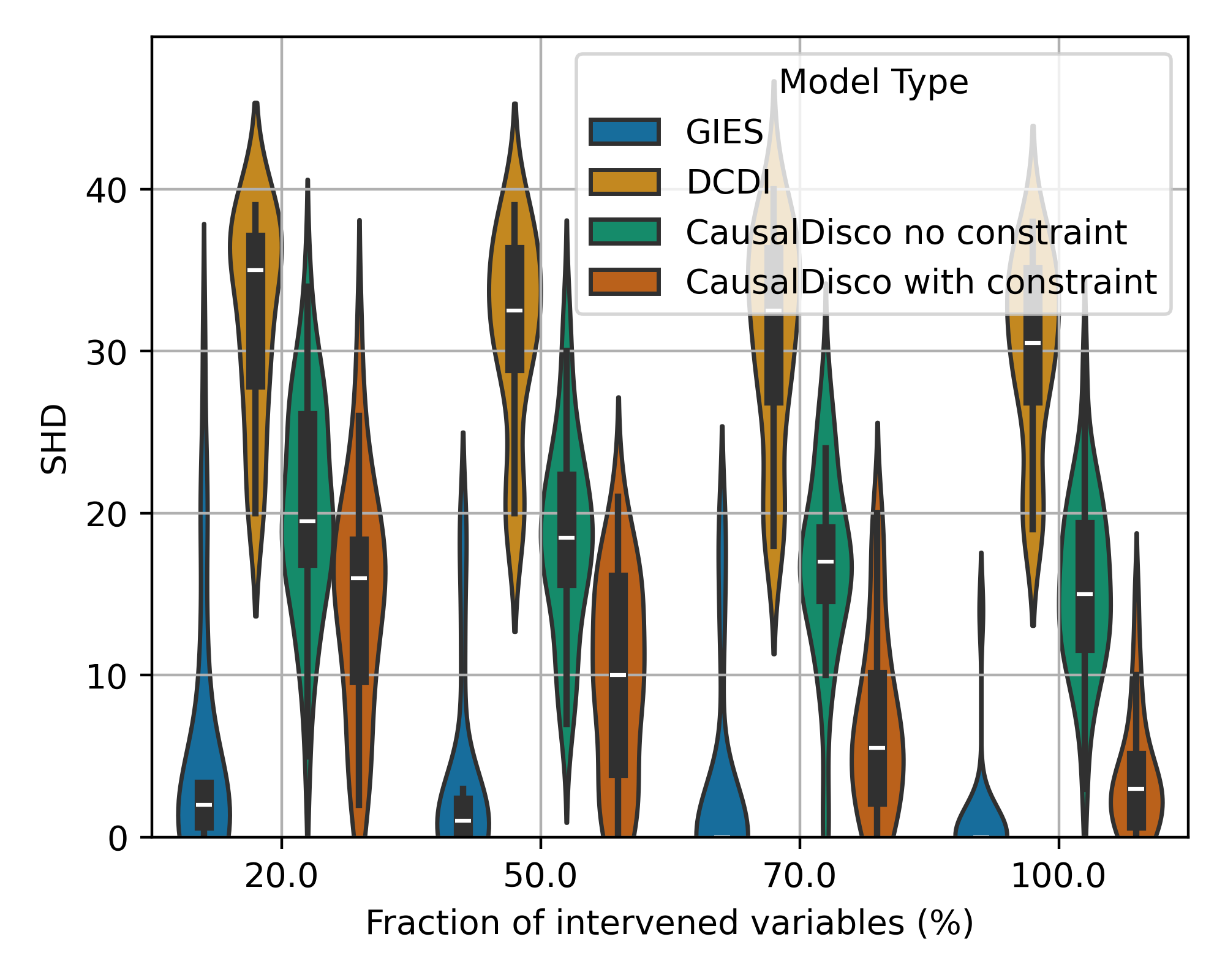}
        \caption{Linear, 10 vars, SHD}
        \label{fig:lin-10-shd}
    \end{subfigure}
    \hfill
    \begin{subfigure}{0.24\textwidth}
        \includegraphics[width=\linewidth]{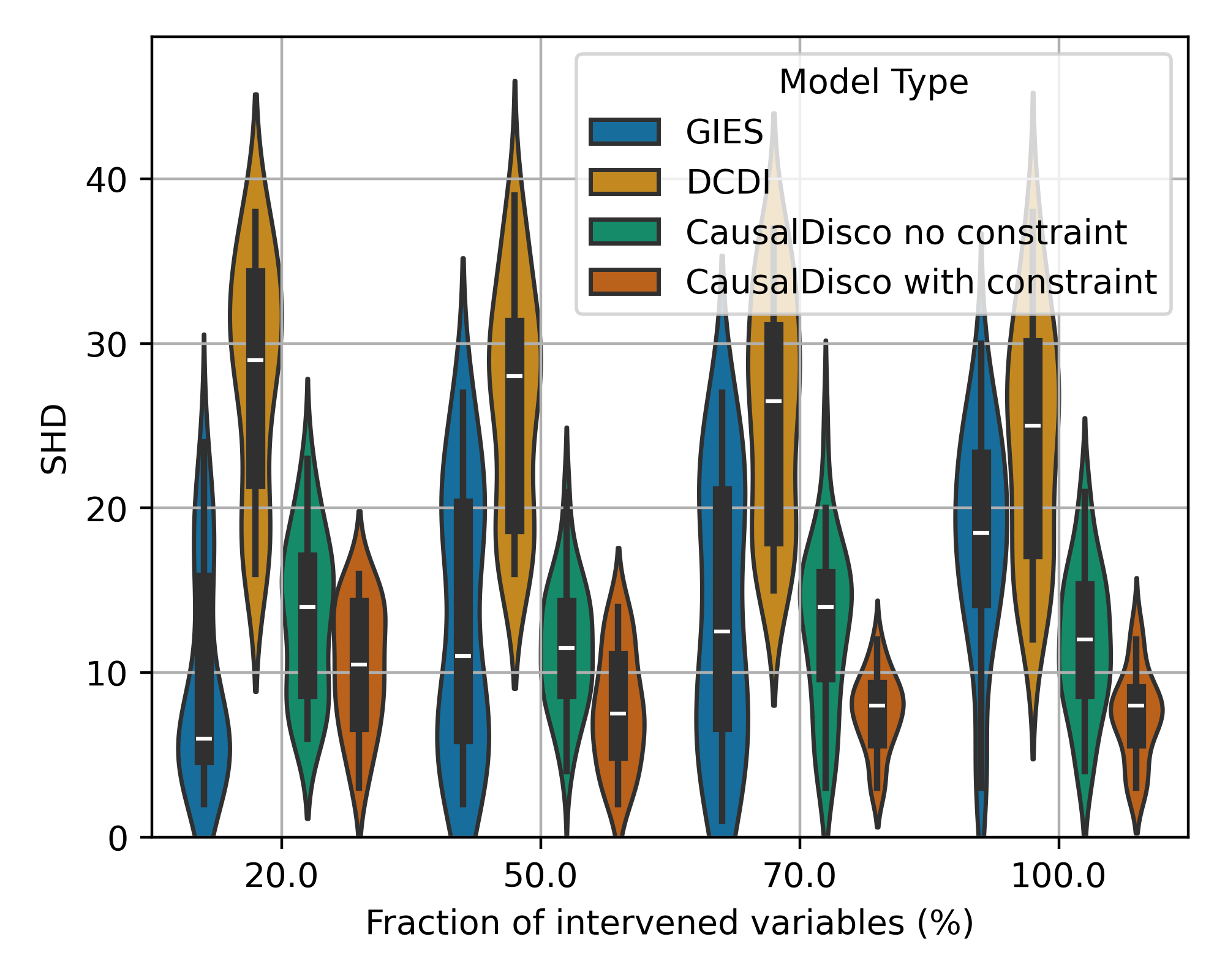}
        \caption{RFF, 10 vars, SHD}
        \label{fig:rff-10-shd}
    \end{subfigure}
    \hfill
    \begin{subfigure}{0.24\textwidth}
        \includegraphics[width=\linewidth]{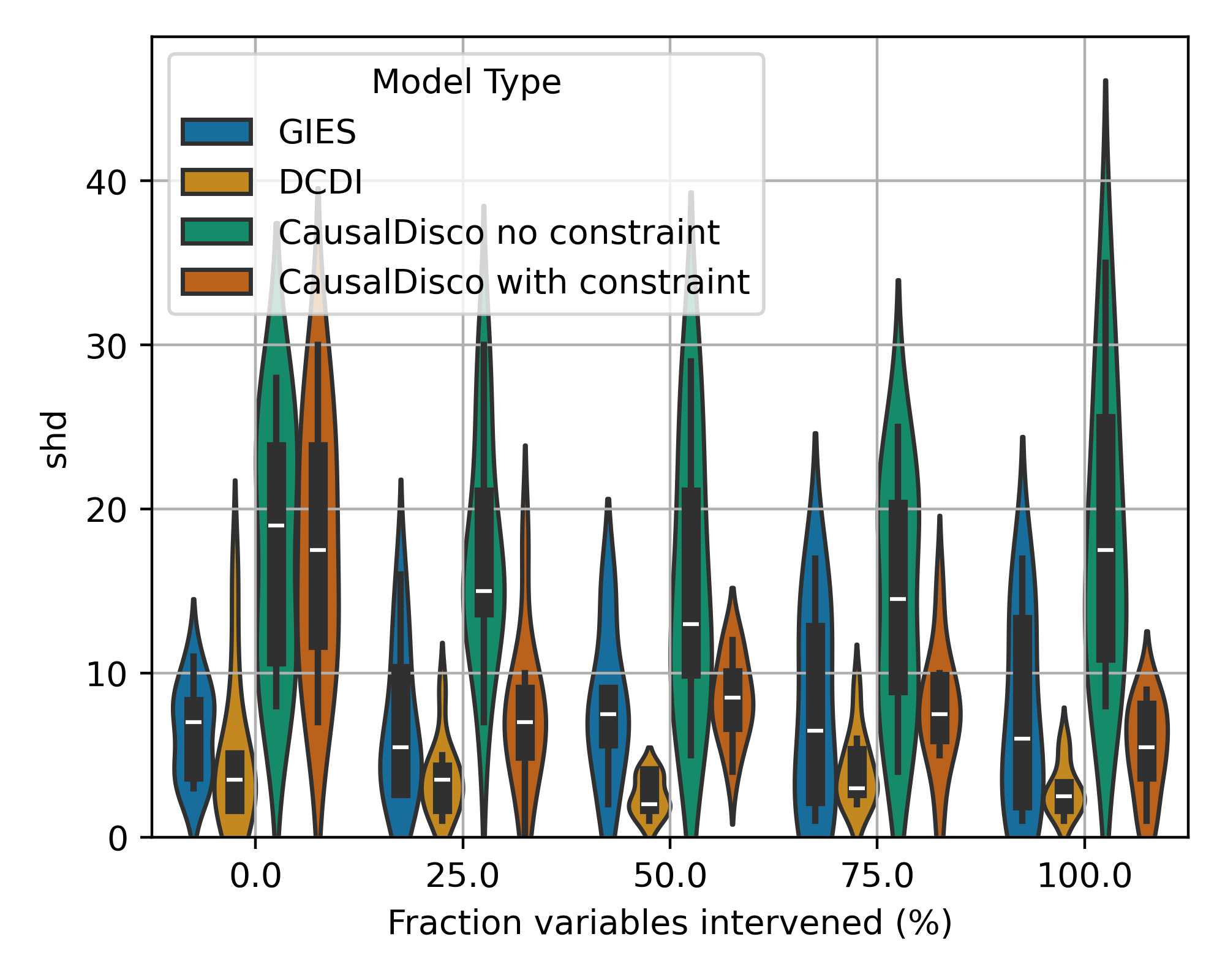}
        \caption{NN, 10 vars, SHD}
        \label{fig:nn-10-shd}
    \end{subfigure}

    \begin{subfigure}{0.24\textwidth}
        \includegraphics[width=\linewidth]{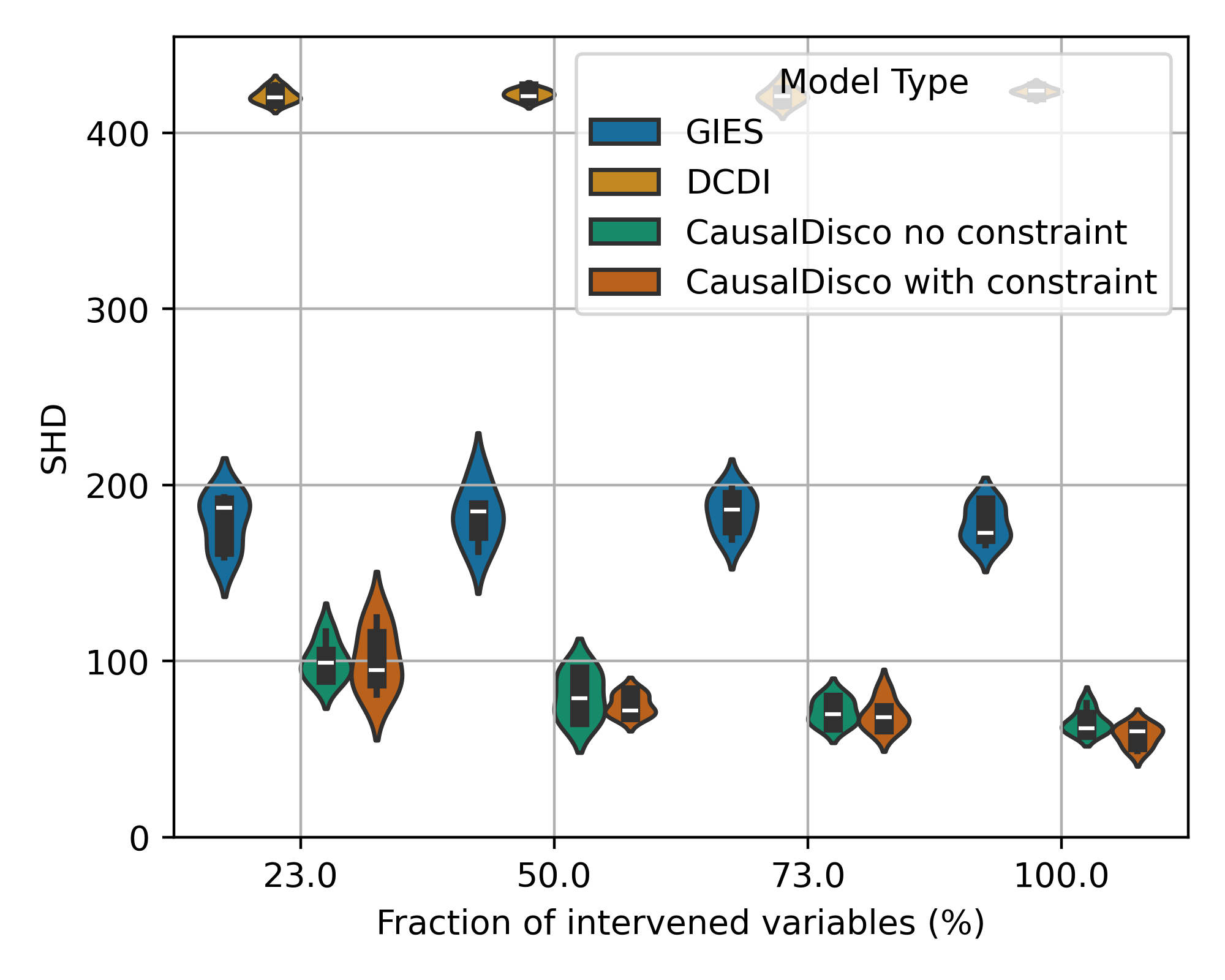}
        \caption{GRN, 30 vars, SHD}
        \label{fig:gene-10-shd}
    \end{subfigure}
    \hfill
    % Fourth row: Linear model with 10 variables
    \begin{subfigure}{0.24\textwidth}
        \includegraphics[width=\linewidth]{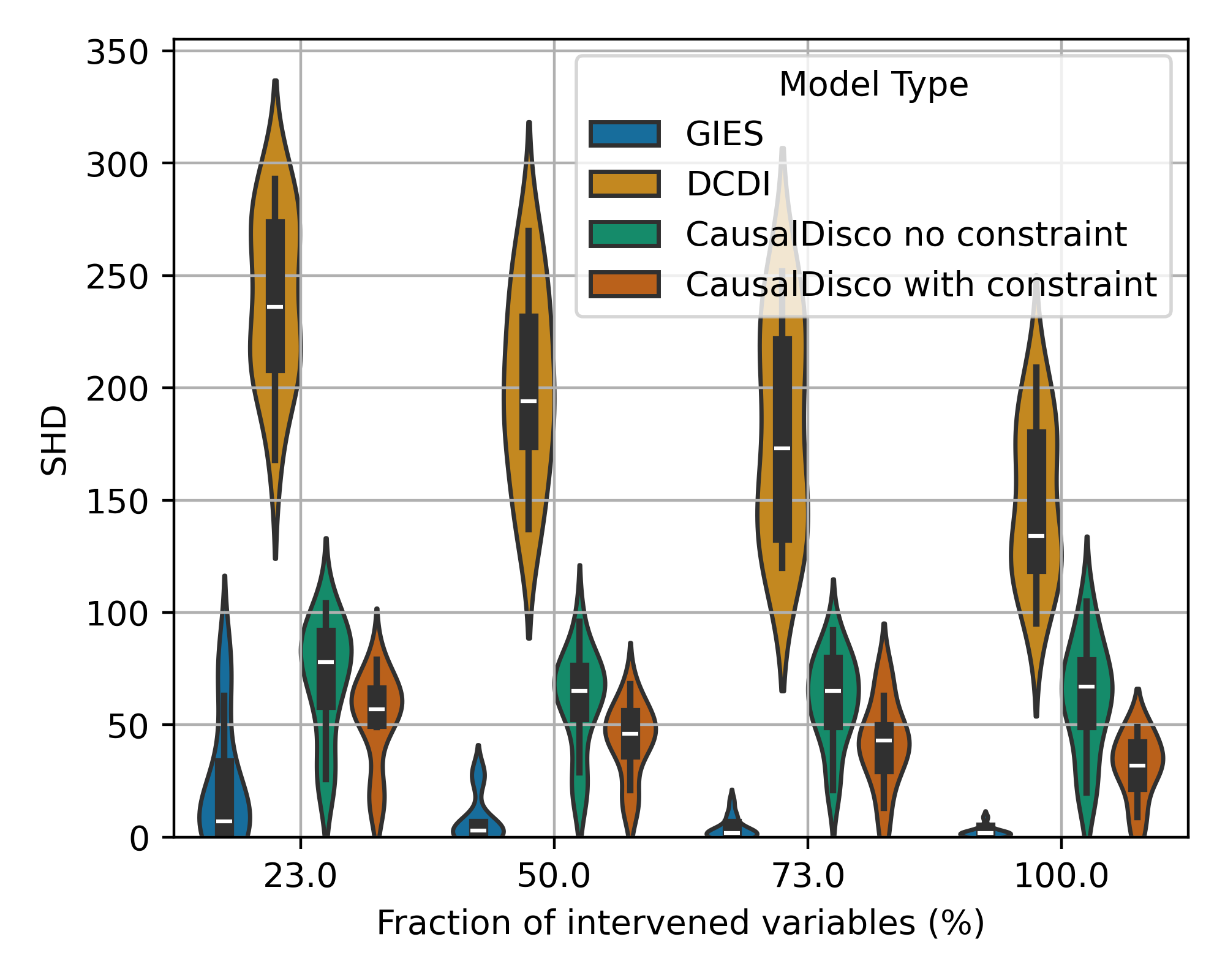}
        \caption{Linear, 30 vars, SHD}
        \label{fig:lin-10-shd}
    \end{subfigure}
    \hfill
    \begin{subfigure}{0.24\textwidth}
        \includegraphics[width=\linewidth]{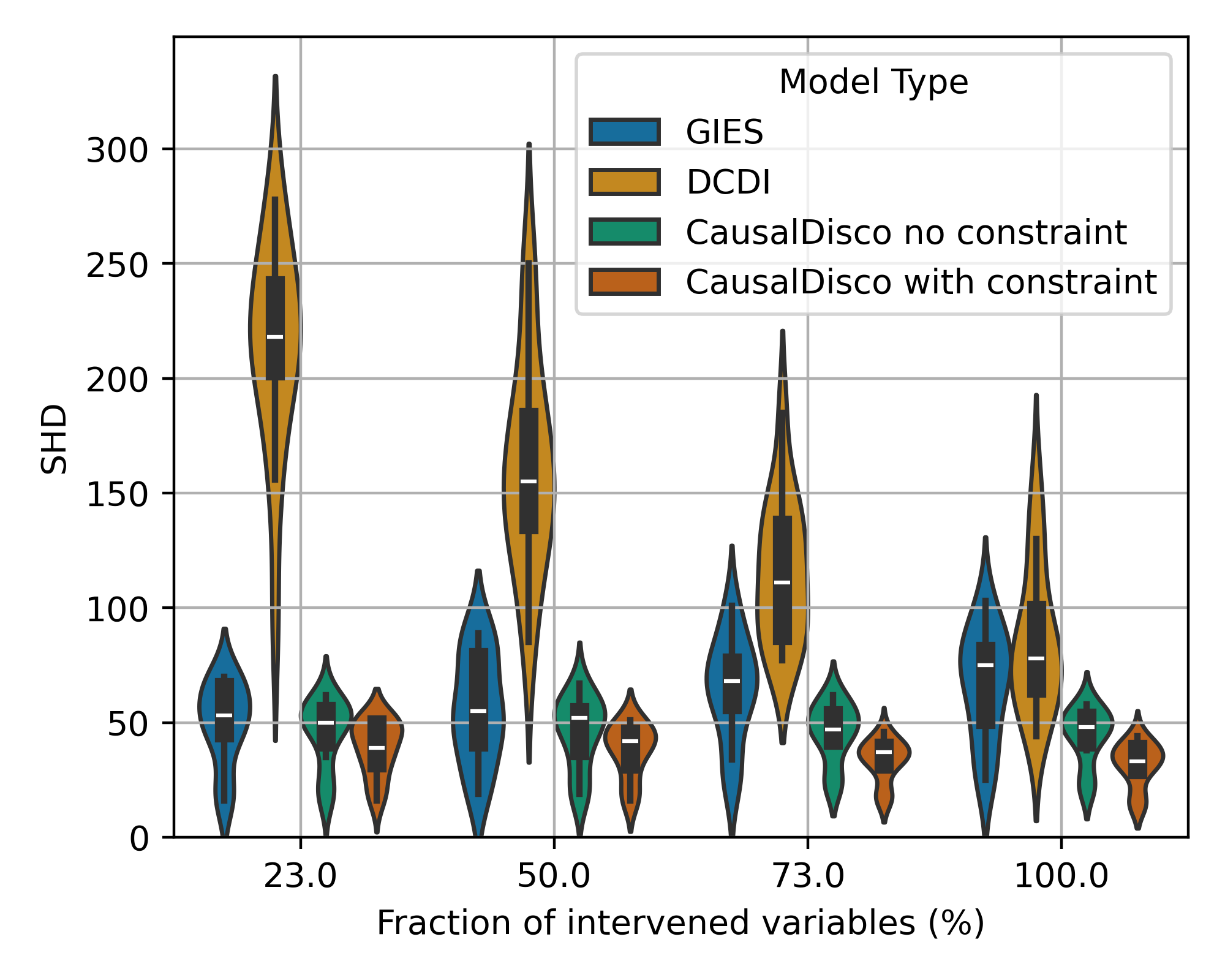}
        \caption{RFF, 30 vars, SHD}
        \label{fig:rff-10-shd}
    \end{subfigure}
    \hfill
    \begin{subfigure}{0.24\textwidth}
        \includegraphics[width=\linewidth]{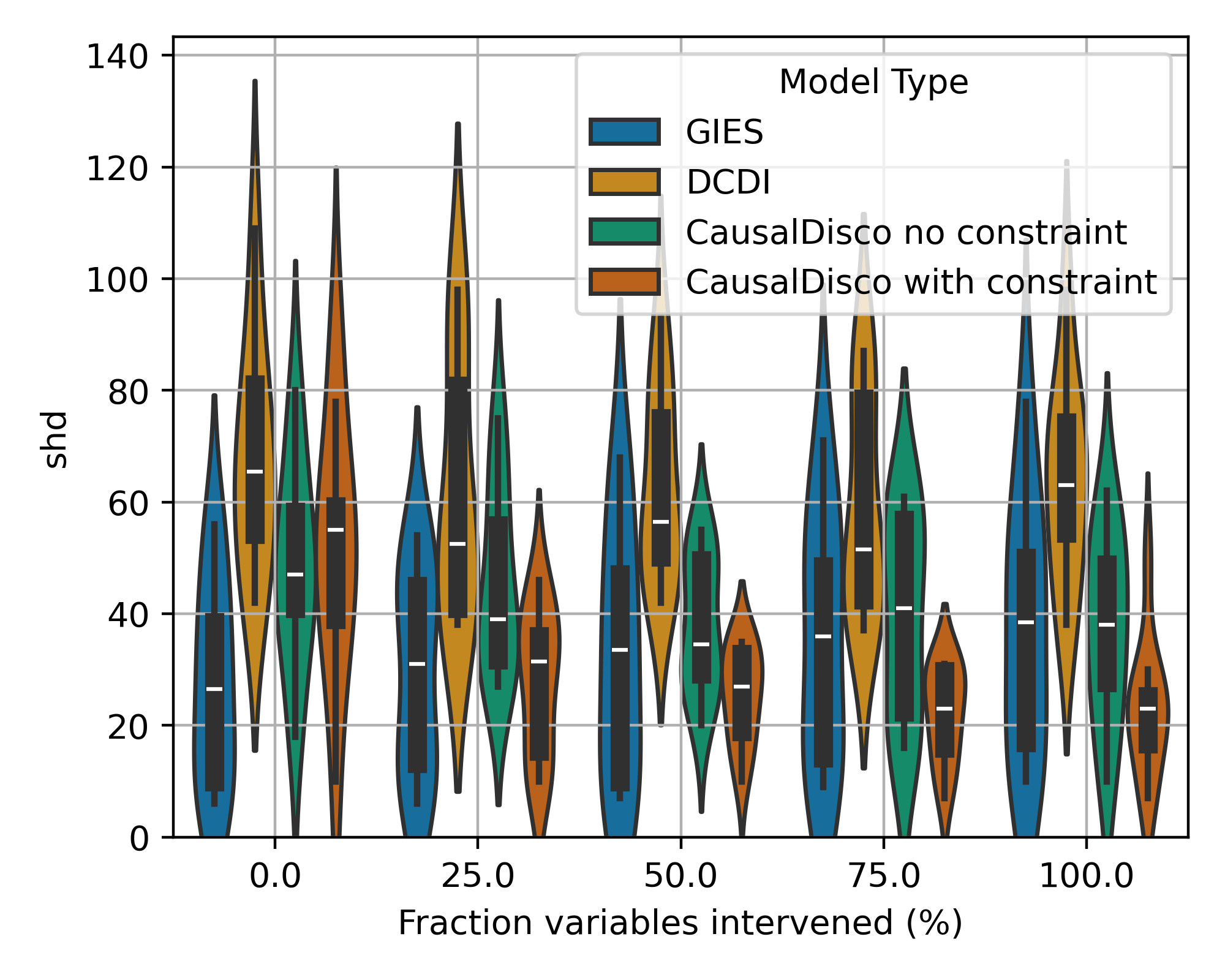}
        \caption{NN, 30 vars, SHD}
        \label{fig:nn-10-shd}
    \end{subfigure}
    
    % Third row: Gene model with 100 variables
    \begin{subfigure}{0.24\textwidth}
        \includegraphics[width=\linewidth]{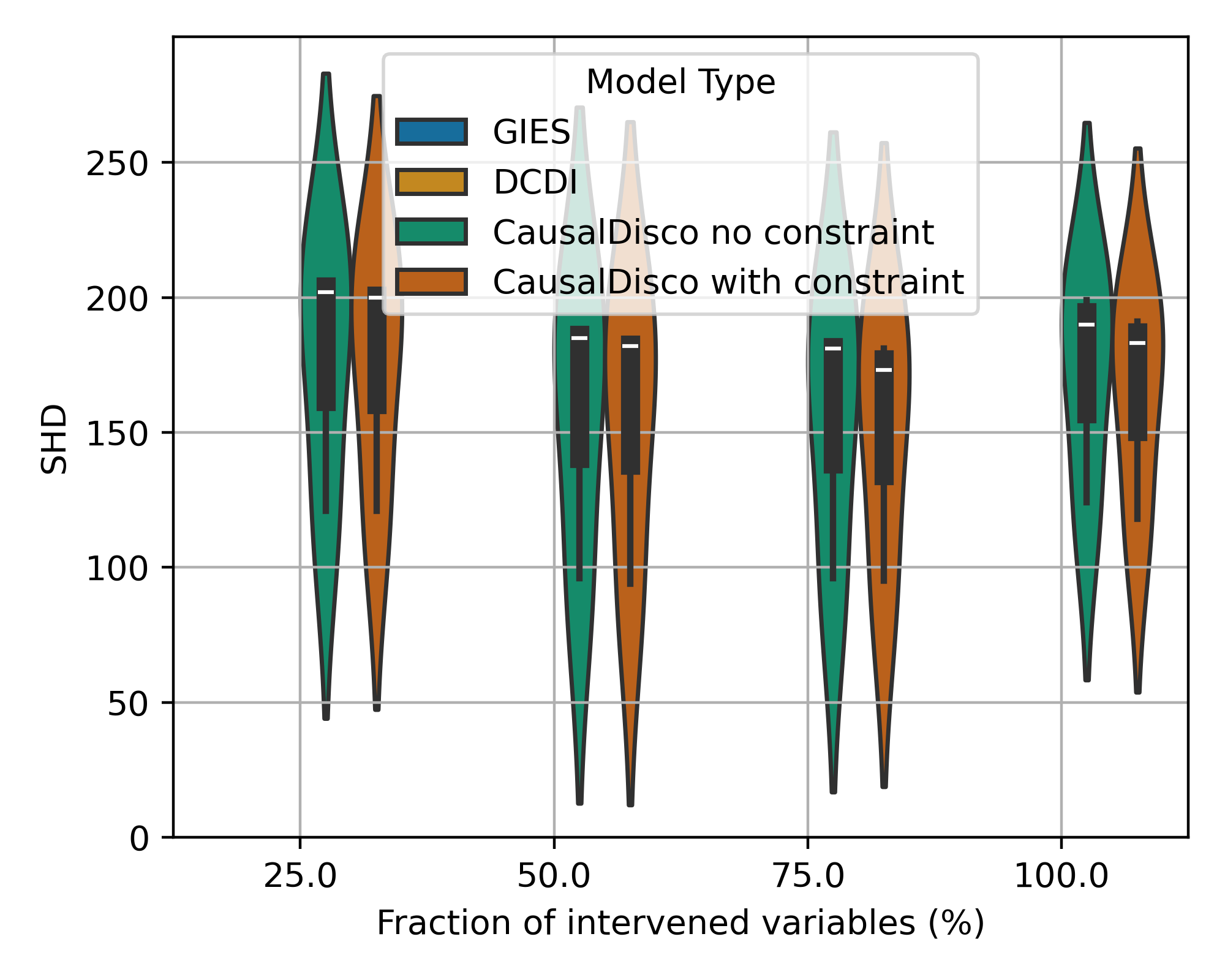}
        \caption{GRN, 100 vars, SHD}
        \label{fig:gene-100-shd}
    \end{subfigure}
    \hfill
    % Sixth row: Linear model with 100 variables
    \begin{subfigure}{0.24\textwidth}
        \includegraphics[width=\linewidth]{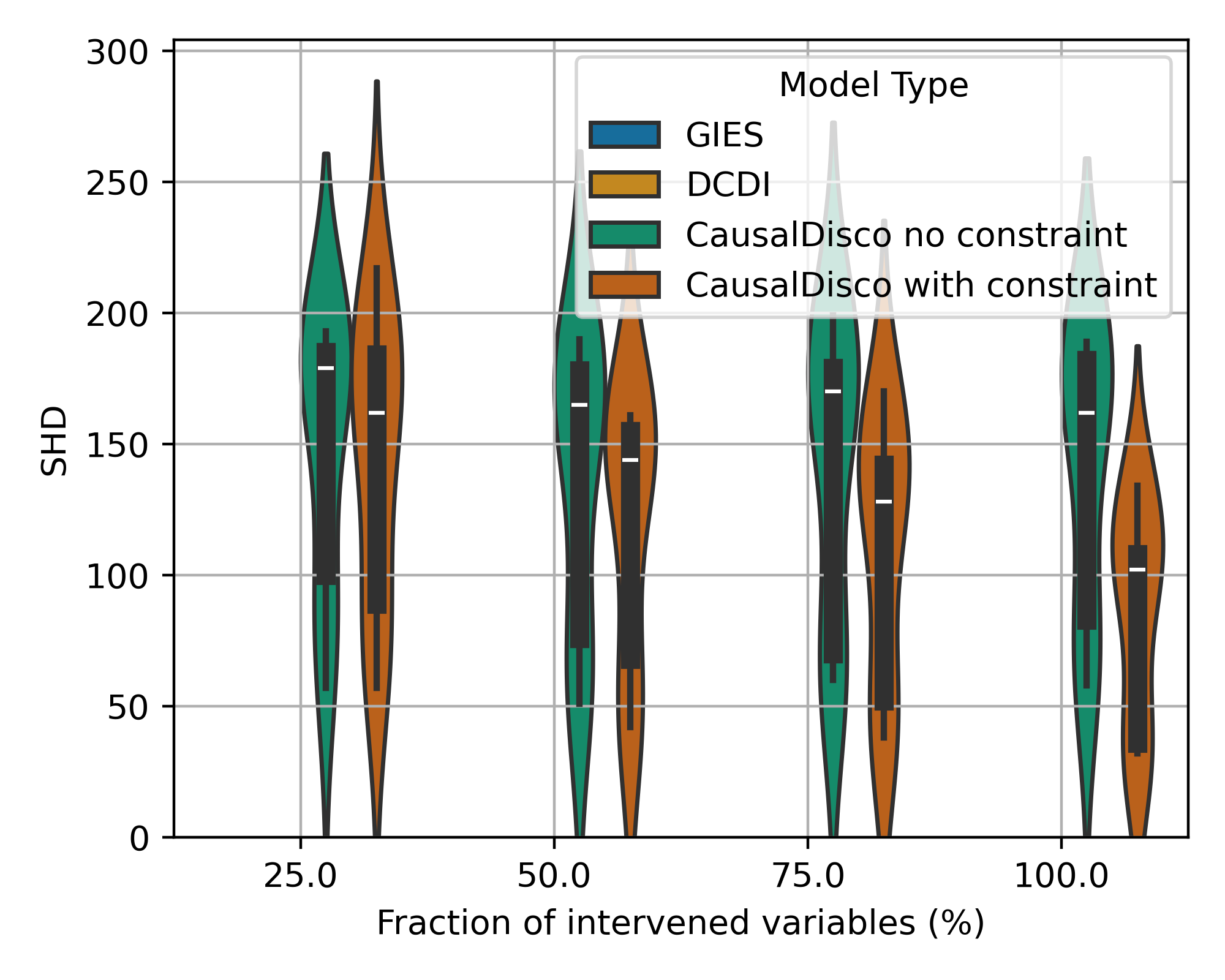}
        \caption{Linear, 100 vars, SHD}
        \label{fig:lin-100-shd}
    \end{subfigure}
    \hfill
    % Seventh row: RFF model with 10 variables
    % Ninth row: RFF model with 100 variables
    \begin{subfigure}{0.24\textwidth}
        \includegraphics[width=\linewidth]{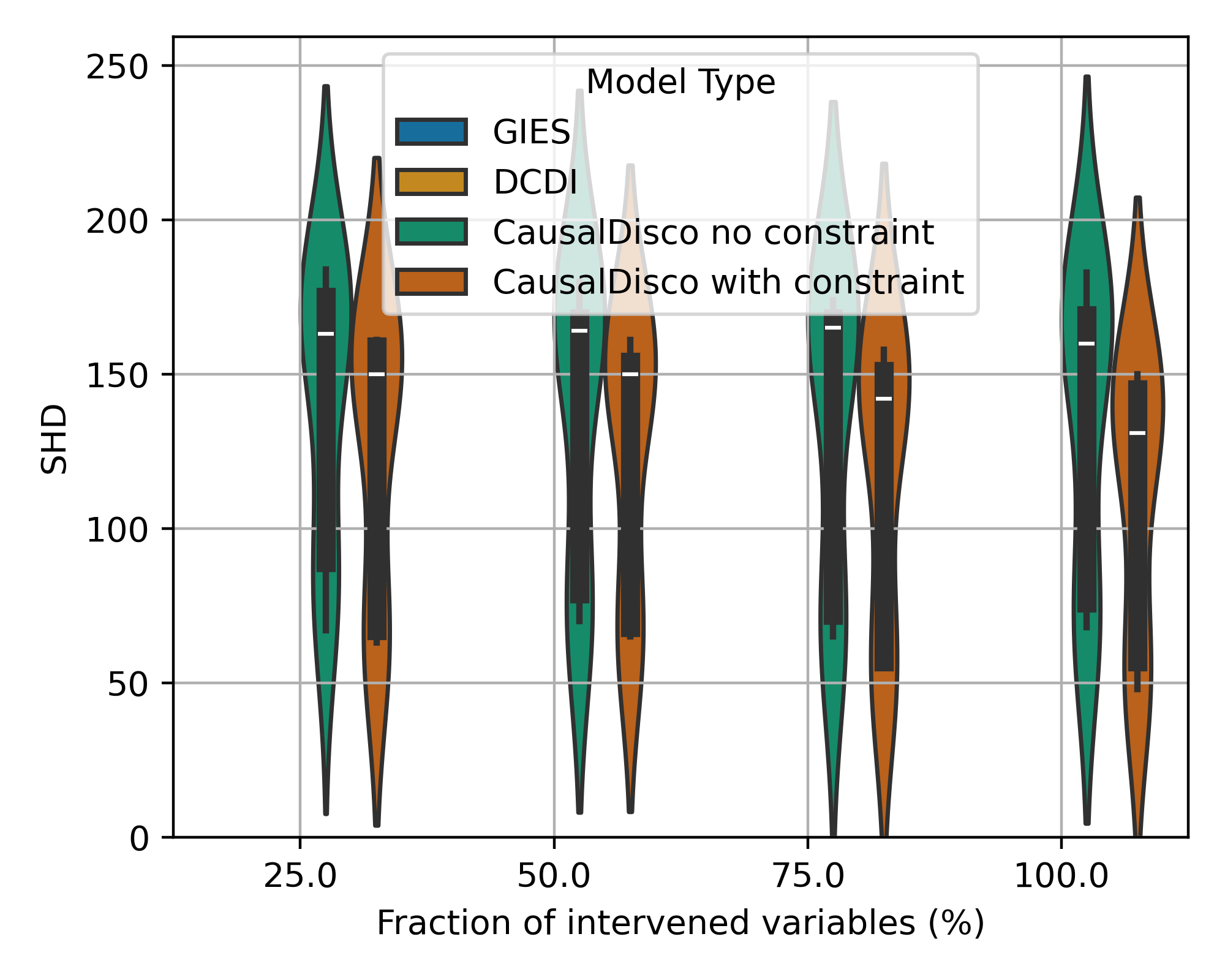}
        \caption{RFF, 100 vars, SHD}
        \label{fig:rff-100-shd}
    \end{subfigure}
    % Tenth row: Neural Network model with 10 variables
    \hfill
    % Twelfth row: Neural Network model with 100 variables
    \begin{subfigure}{0.24\textwidth}
        \includegraphics[width=\linewidth]{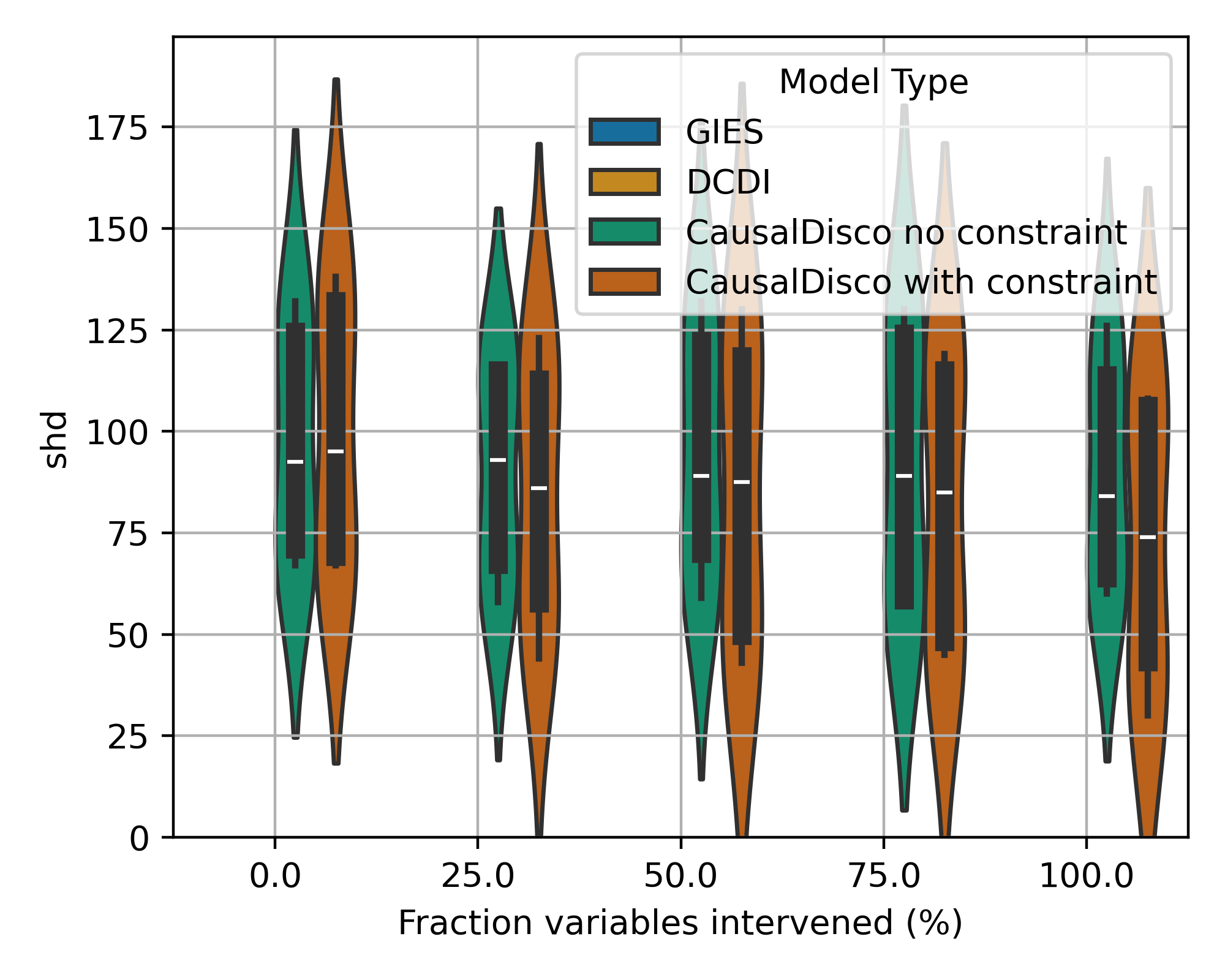}
        \caption{NN, 100 vars, SHD}
        \label{fig:nn-100-shd}
    \end{subfigure}
    \hfill
    \caption{Comparison of Structural Hamming Distance (SHD)  for Gene, Linear, RFF, and Neural Network models with varying numbers of variables for a scale-free network distribution. }
    \label{fig:shd-sf-app}
\end{figure}

\begin{figure}[t]
    \centering
    % First row: Gene model with 10 variables
    
    \begin{subfigure}{0.32\textwidth}
        \includegraphics[width=\linewidth]{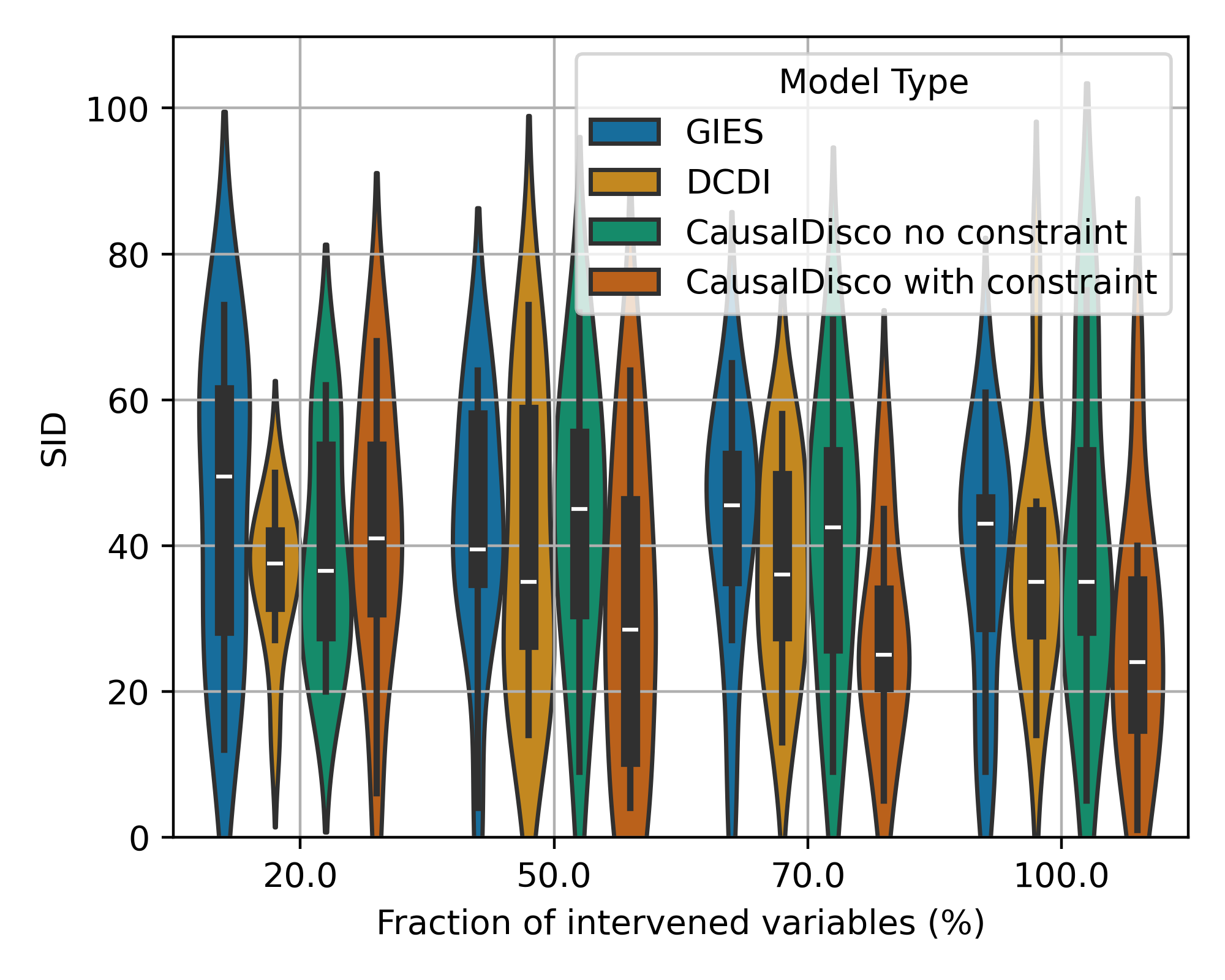}
        \caption{GRN, 10 vars, SID}
        \label{fig:gene-10-sid}
    \end{subfigure}
    % Second row: Gene model with 30 variables
    \hfill
    \begin{subfigure}{0.32\textwidth}
        \includegraphics[width=\linewidth]{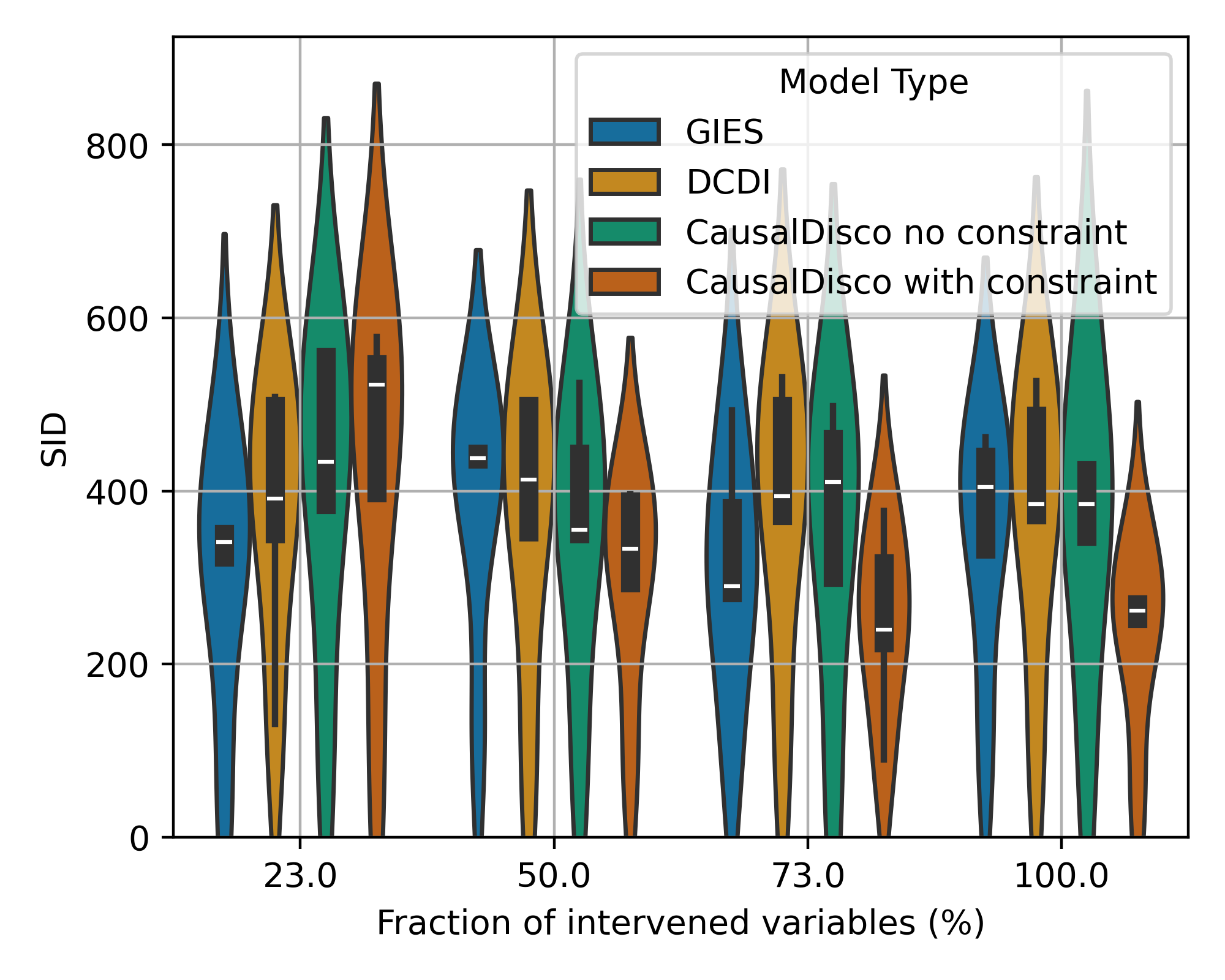}
        \caption{GRN, 30 vars, SID}
        \label{fig:gene-30-sid}
    \end{subfigure}
    % Third row: Gene model with 100 variables
    \hfill
    \begin{subfigure}{0.32\textwidth}
        \includegraphics[width=\linewidth]{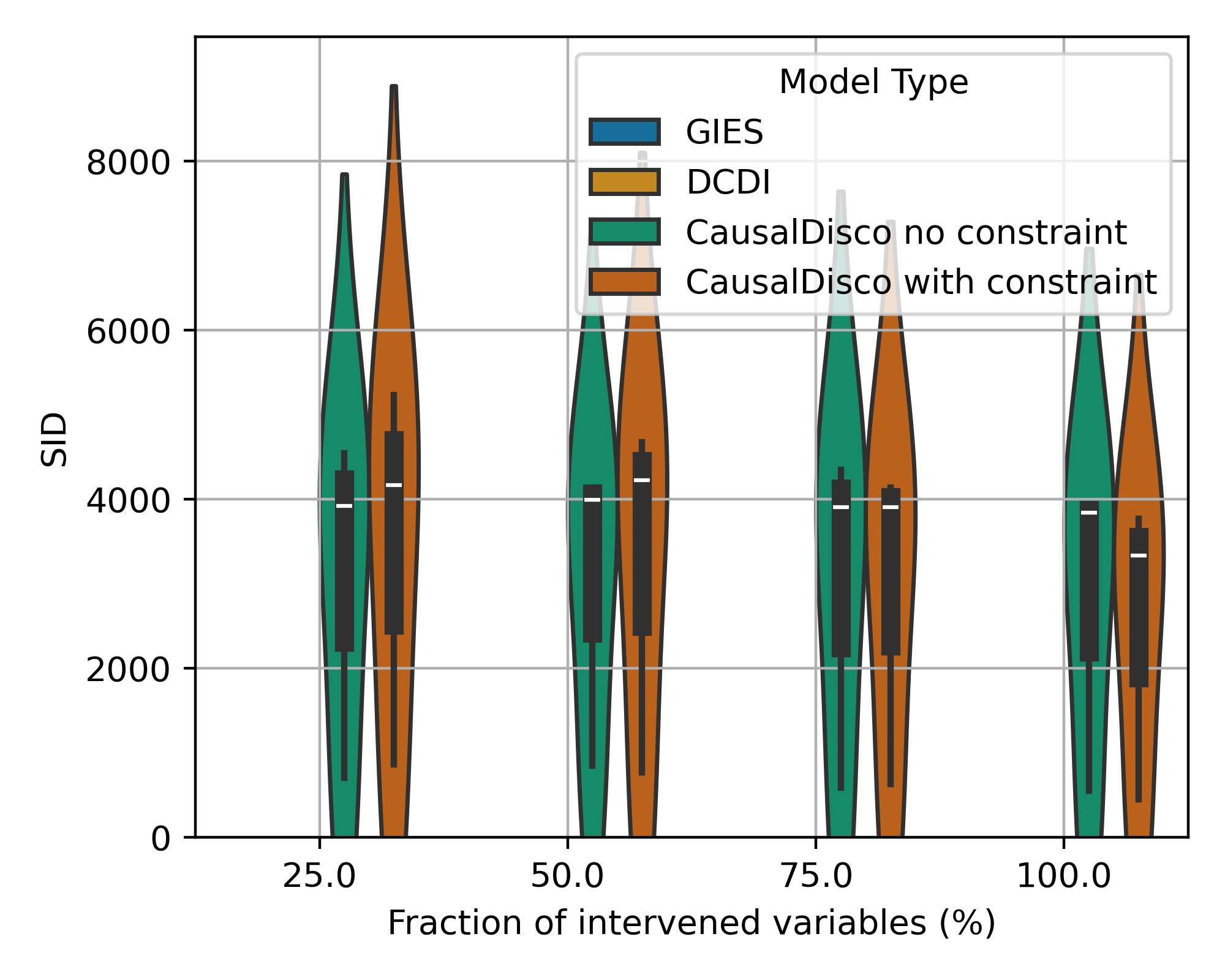}
        \caption{GRN, 100 vars, SID}
        \label{fig:gene-100-sid}
    \end{subfigure}
    % Fourth row: Linear model with 10 variables
    
    \begin{subfigure}{0.32\textwidth}
        \includegraphics[width=\linewidth]{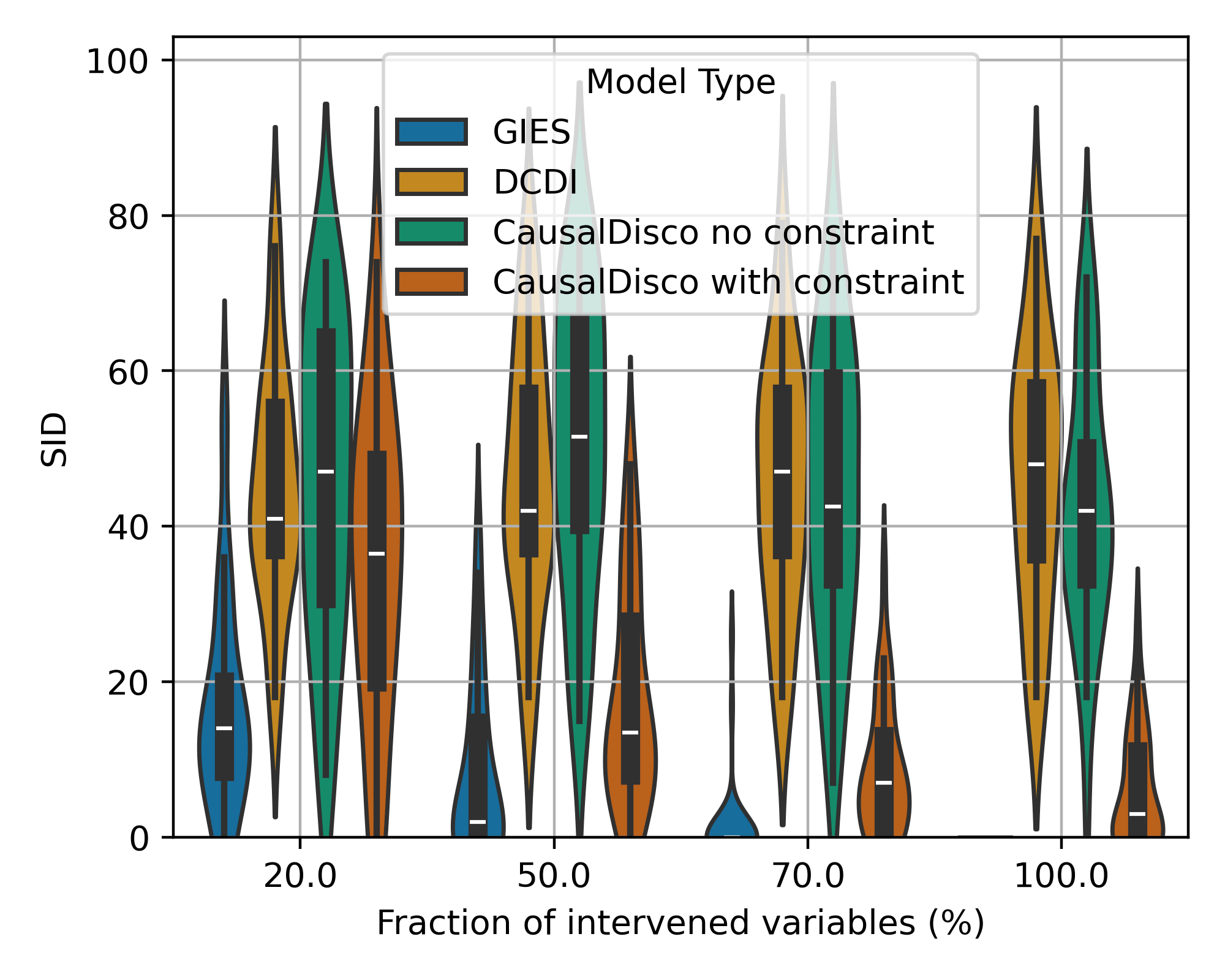}
        \caption{Linear, 10 vars, SID}
        \label{fig:lin-10-sid}
    \end{subfigure}
    % Fifth row: Linear model with 30 variables
    \hfill
    \begin{subfigure}{0.32\textwidth}
        \includegraphics[width=\linewidth]{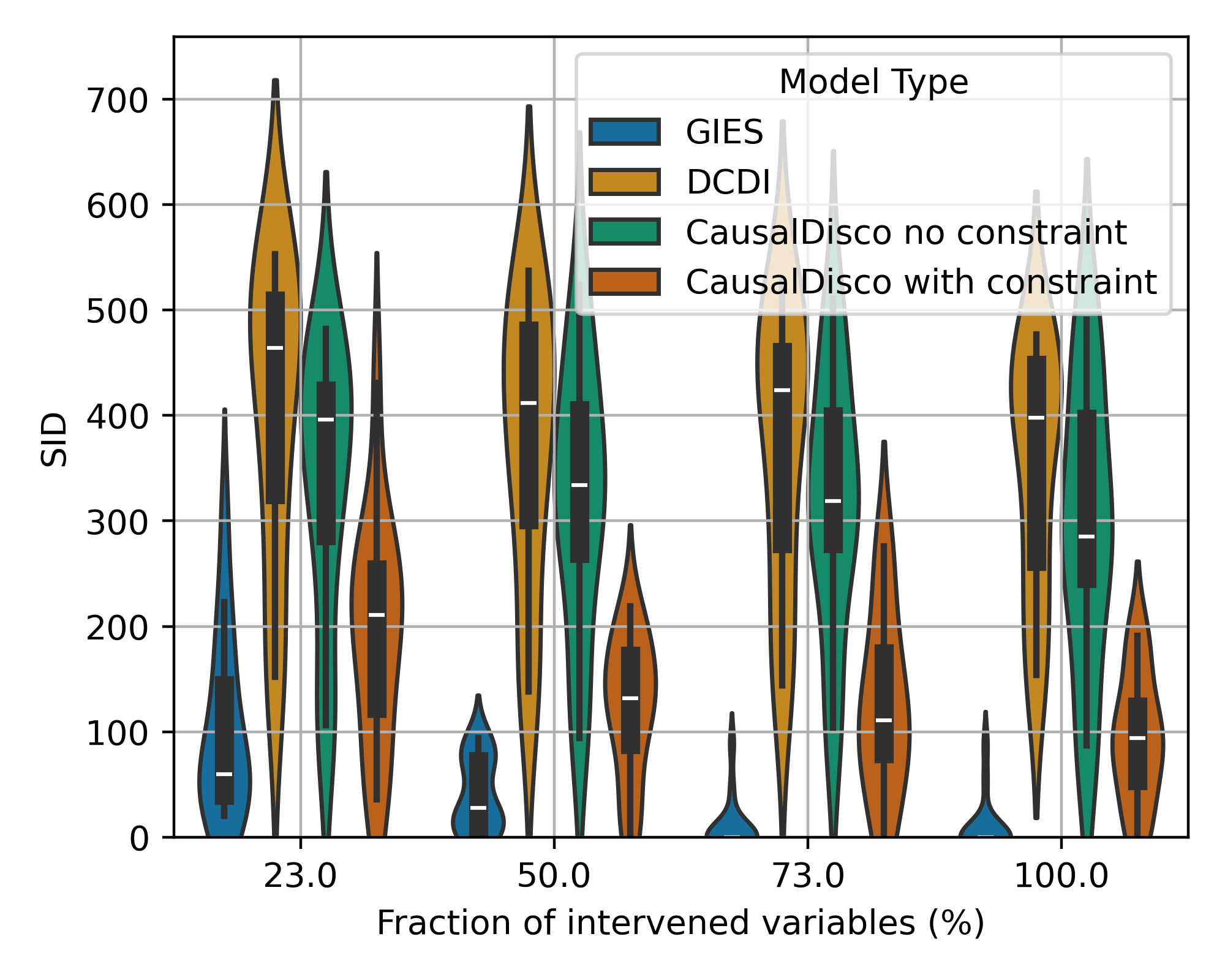}
        \caption{Linear, 30 vars, SID}
        \label{fig:lin-30-sid}
    \end{subfigure}
    % Sixth row: Linear model with 100 variables
    \hfill
    \begin{subfigure}{0.32\textwidth}
        \includegraphics[width=\linewidth]{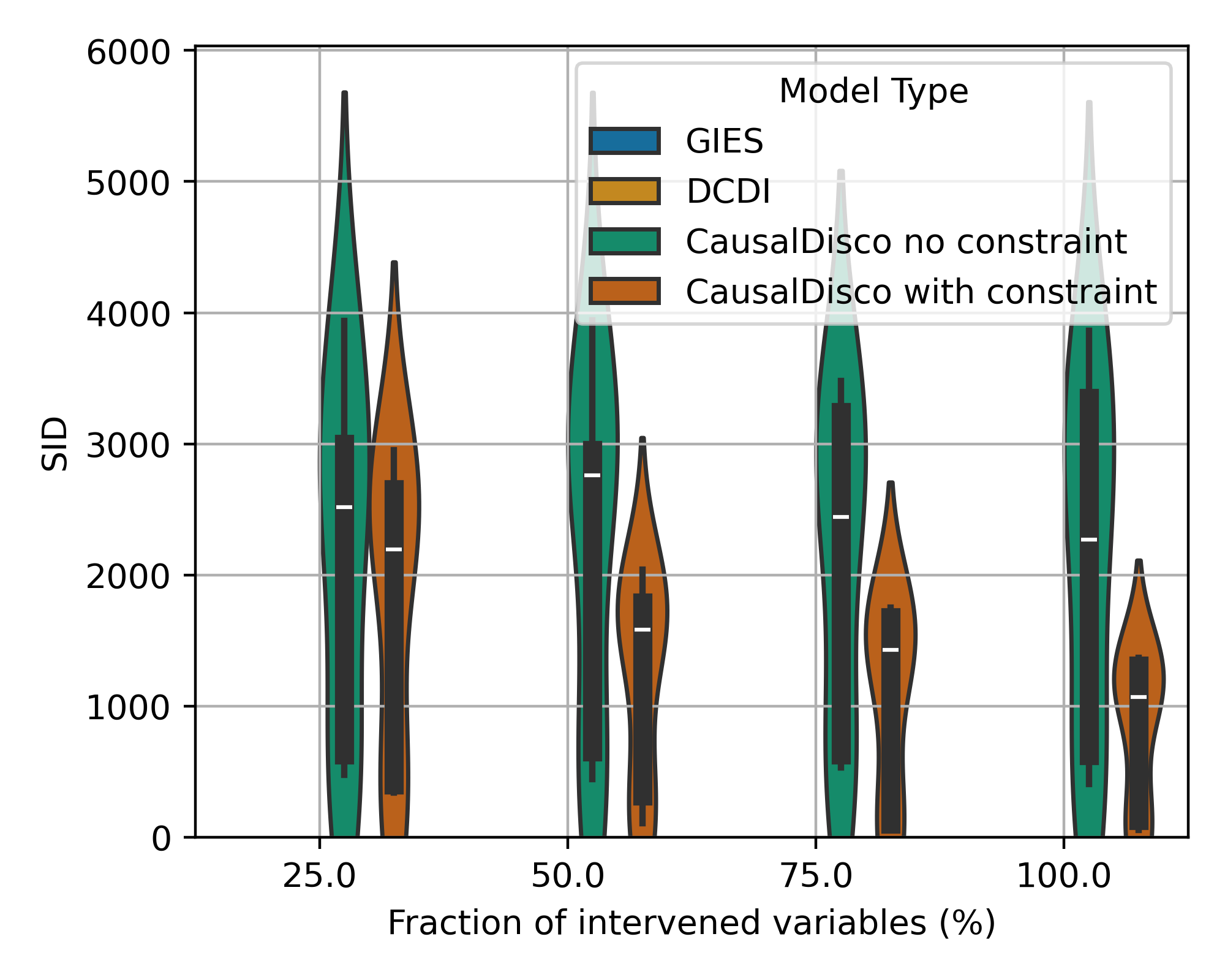}
        \caption{Linear, 100 vars, SID}
        \label{fig:lin-100-sid}
    \end{subfigure}
    % Seventh row: RFF model with 10 variables
   
    \begin{subfigure}{0.32\textwidth}
        \includegraphics[width=\linewidth]{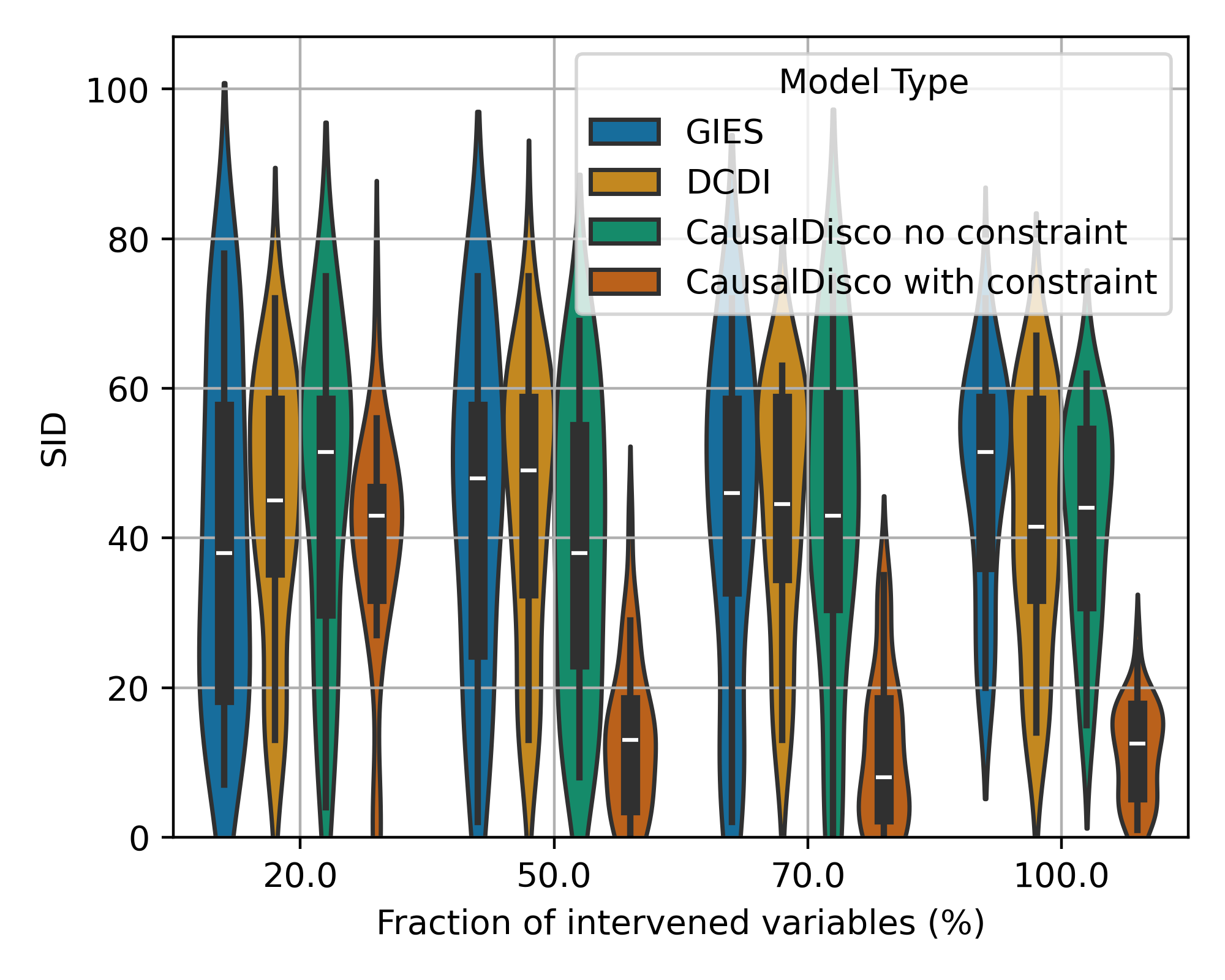}
        \caption{RFF, 10 vars, SID}
        \label{fig:rff-10-sid}
    \end{subfigure}
    % Eighth row: RFF model with 30 variables
    \hfill
    \begin{subfigure}{0.32\textwidth}
        \includegraphics[width=\linewidth]{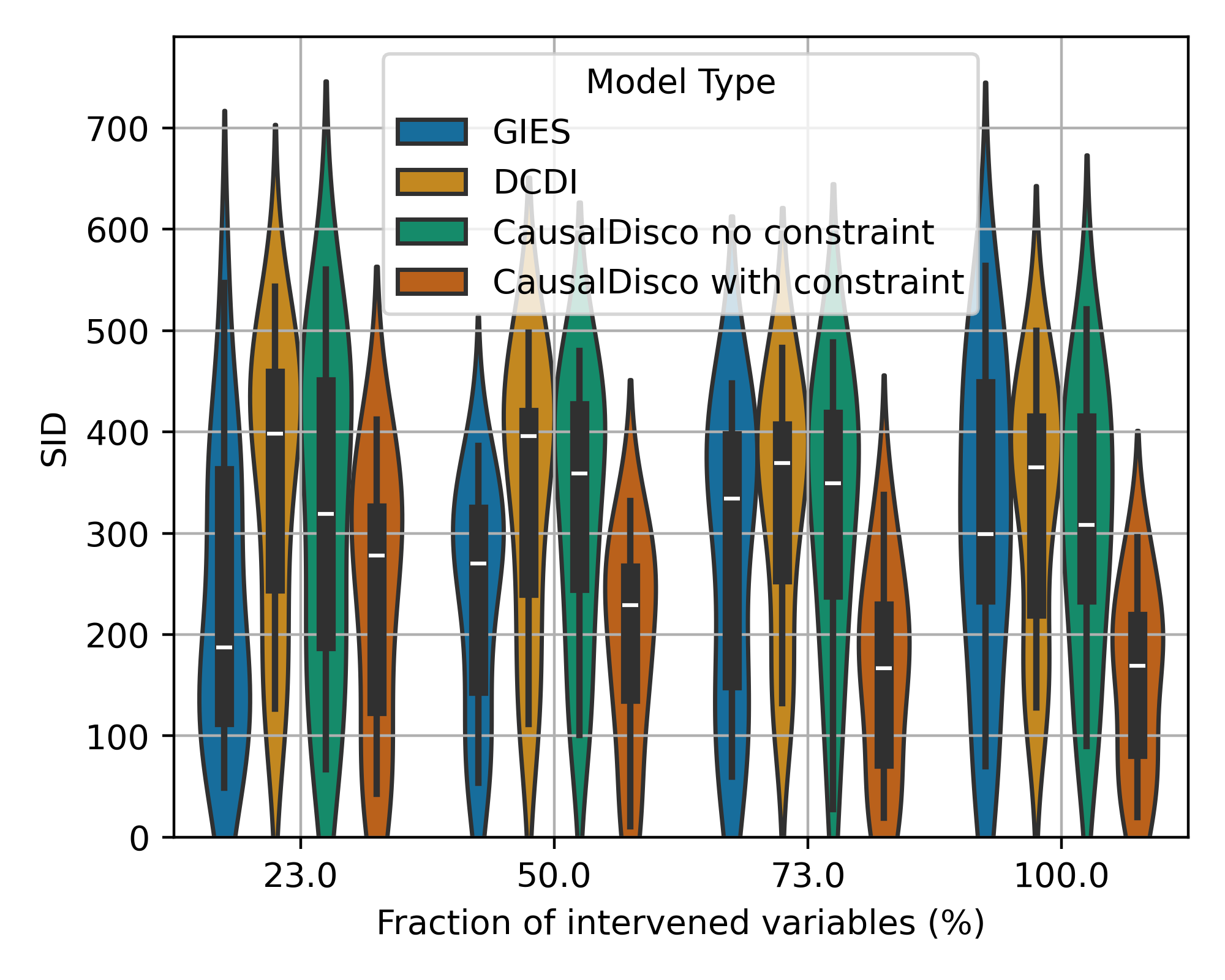}
        \caption{RFF, 30 vars, SID}
        \label{fig:rff-30-sid}
    \end{subfigure}
    % Ninth row: RFF model with 100 variables
    \hfill
    \begin{subfigure}{0.32\textwidth}
        \includegraphics[width=\linewidth]{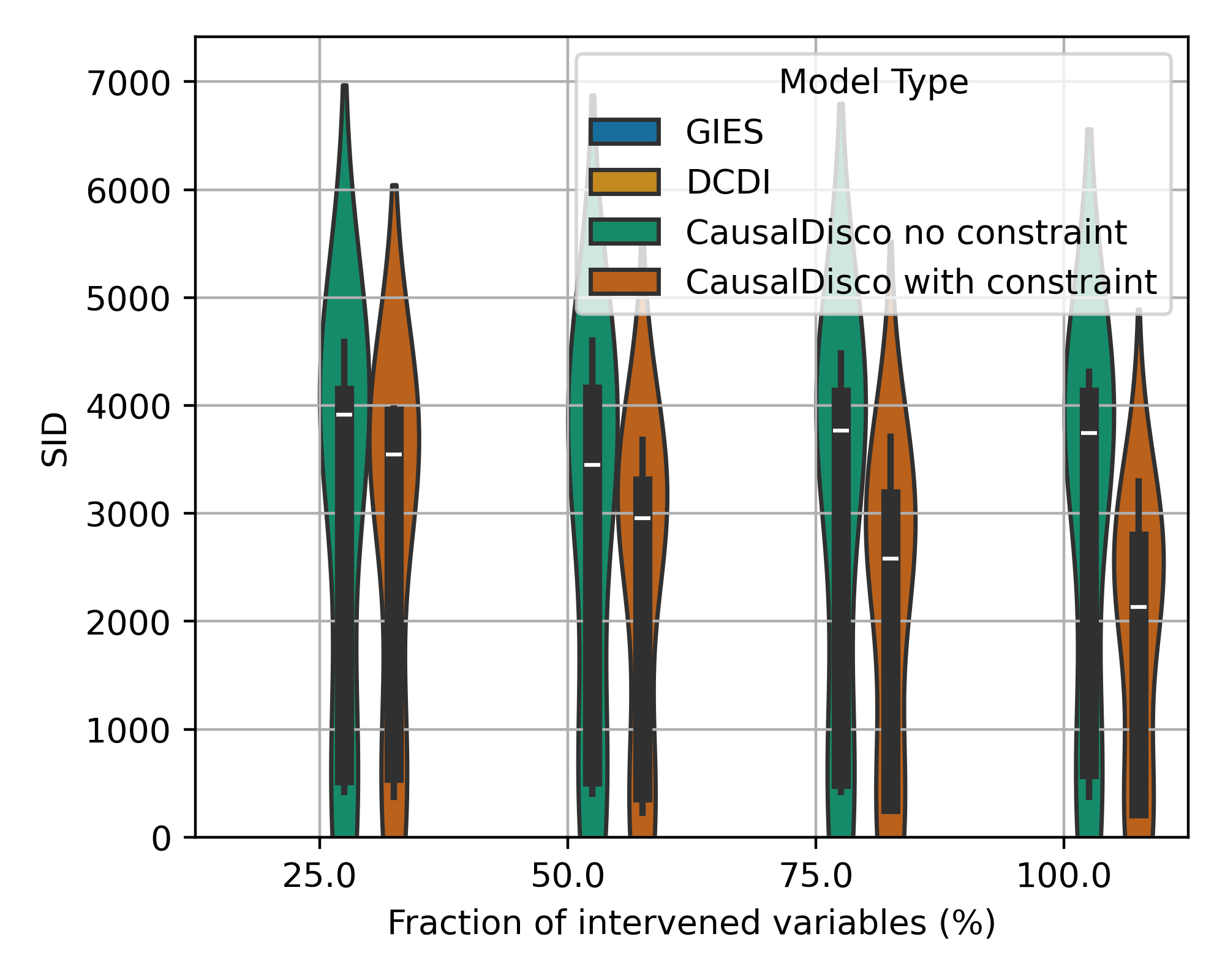}
        \caption{RFF, 100 vars, SID}
        \label{fig:rff-100-sid}
    \end{subfigure}
    % Tenth row: Neural Network model with 10 variables
    
    \begin{subfigure}{0.32\textwidth}
        \includegraphics[width=\linewidth]{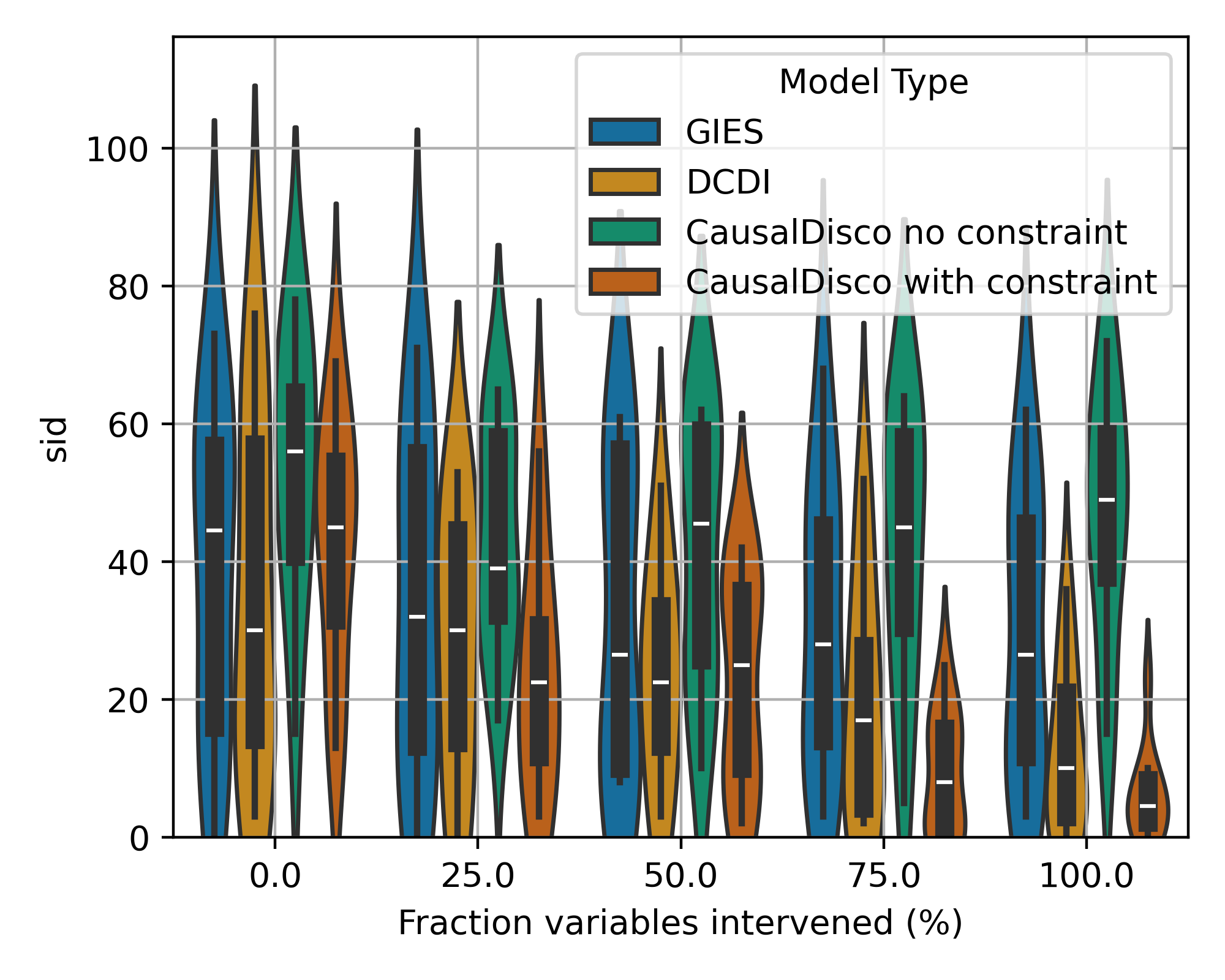}
        \caption{NN, 10 vars, SID}
        \label{fig:nn-10-sid}
    \end{subfigure}
    \hfill% Eleventh row: Neural Network model with 30 variables
    \begin{subfigure}{0.32\textwidth}
        \includegraphics[width=\linewidth]{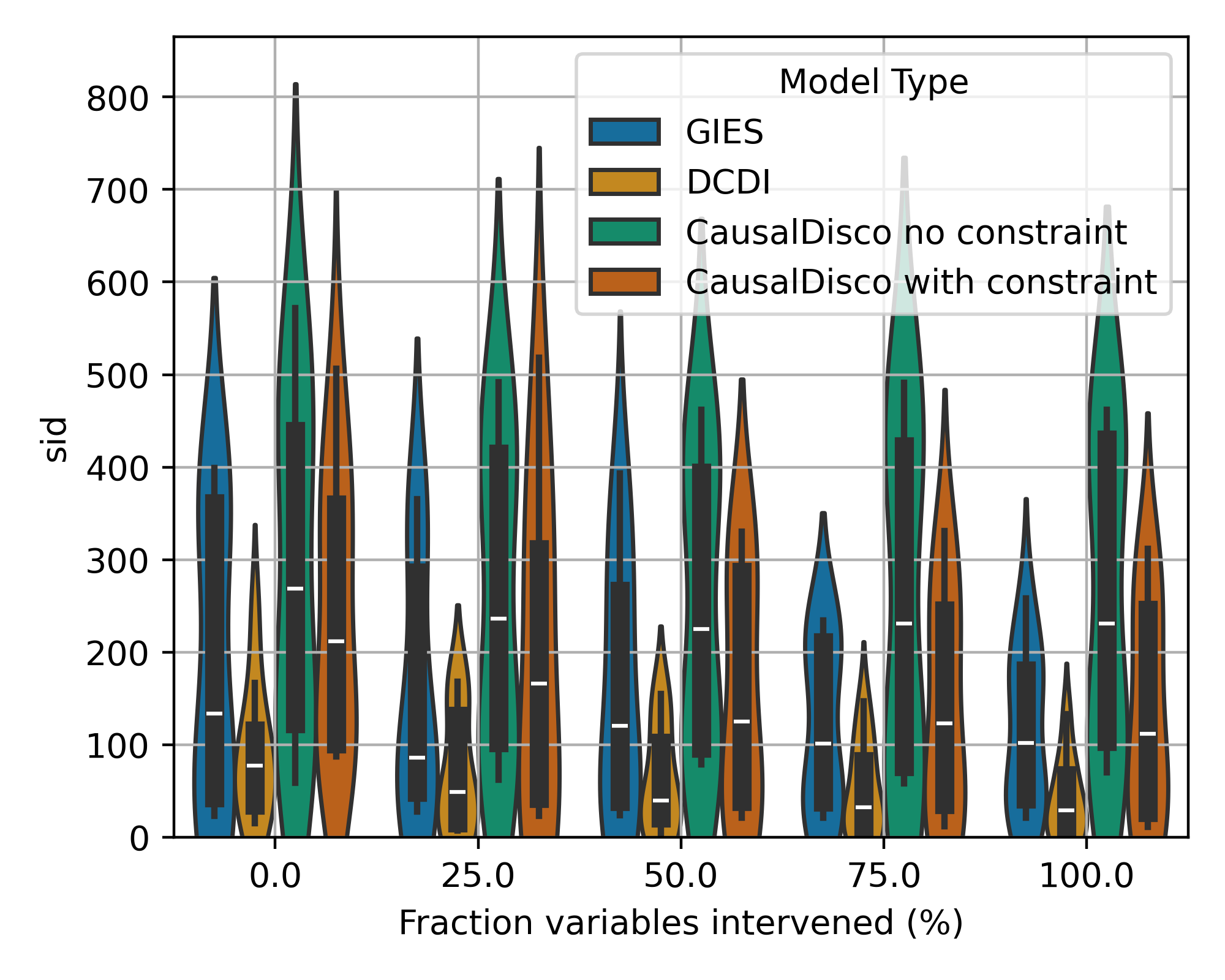}
        \caption{NN, 30 vars, SID}
        \label{fig:nn-30-sid}
    \end{subfigure}
    % Twelfth row: Neural Network model with 100 variables
    \hfill
    \begin{subfigure}{0.32\textwidth}
        \includegraphics[width=\linewidth]{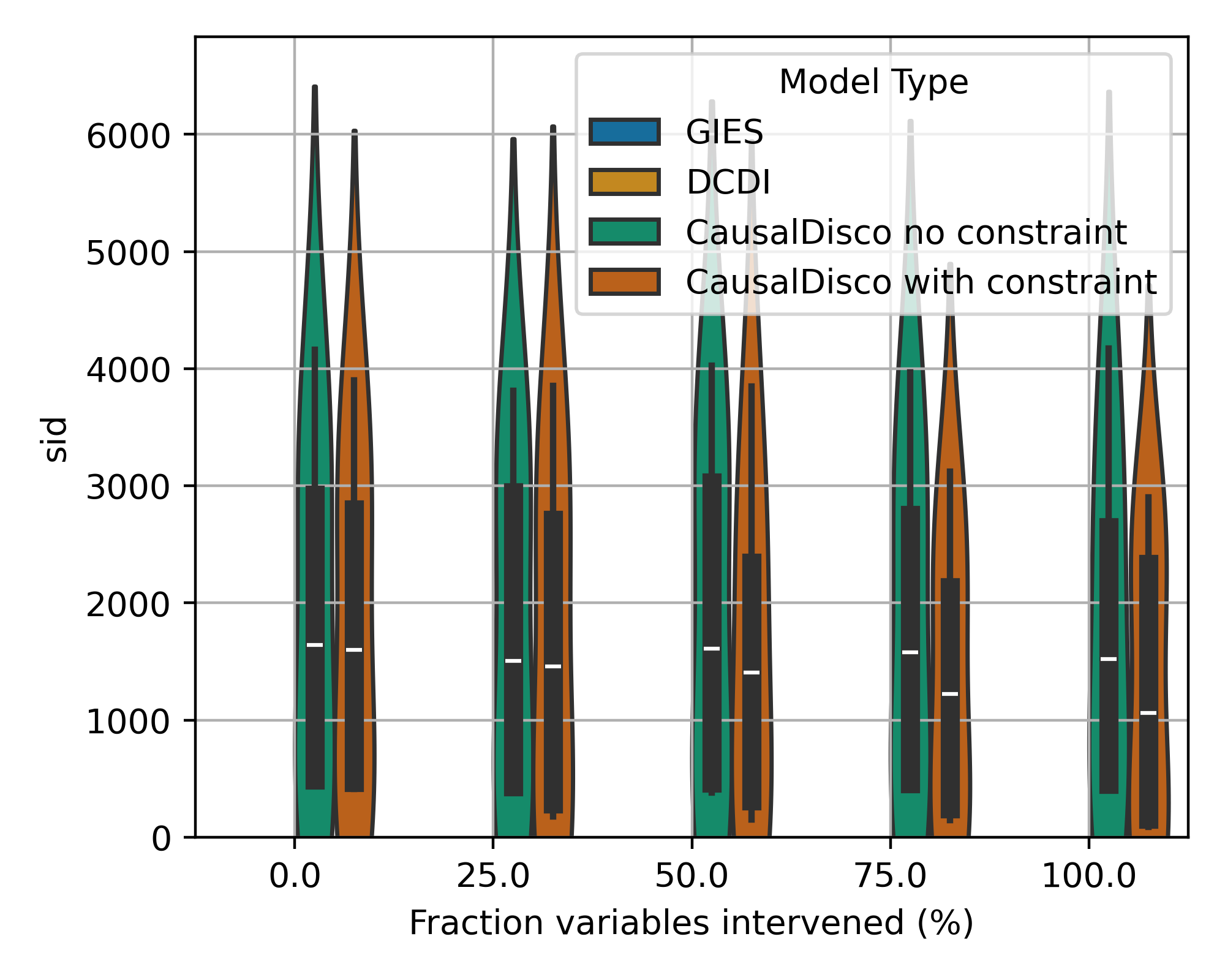}
        \caption{NN, 100 vars, SID}
        \label{fig:sid-app}
    \end{subfigure}
    \caption{Comparison SID (lower is better) for GRN, Linear, RFF, and Neural Network models with varying numbers of variables. }
    \label{fig:synthetic-results-grid}
\end{figure}

\begin{figure}[t]
    \centering
    % First row: Gene model with 10 variables
    
    \begin{subfigure}{0.32\textwidth}
        \includegraphics[width=\linewidth]{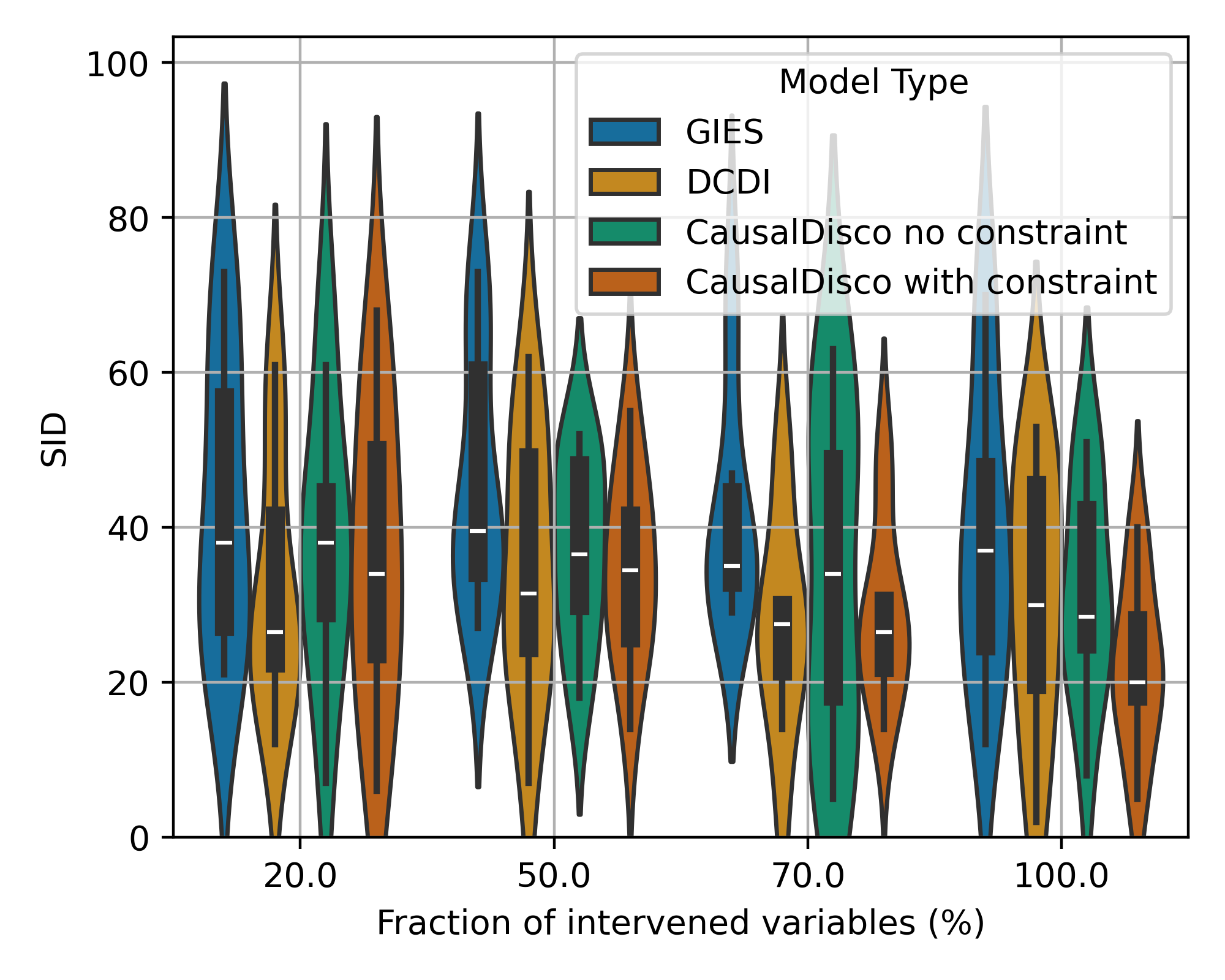}
        \caption{GRN, 10 vars, SID}
        \label{fig:gene-10-sid}
    \end{subfigure}
    % Second row: Gene model with 30 variables
    \hfill
    \begin{subfigure}{0.32\textwidth}
        \includegraphics[width=\linewidth]{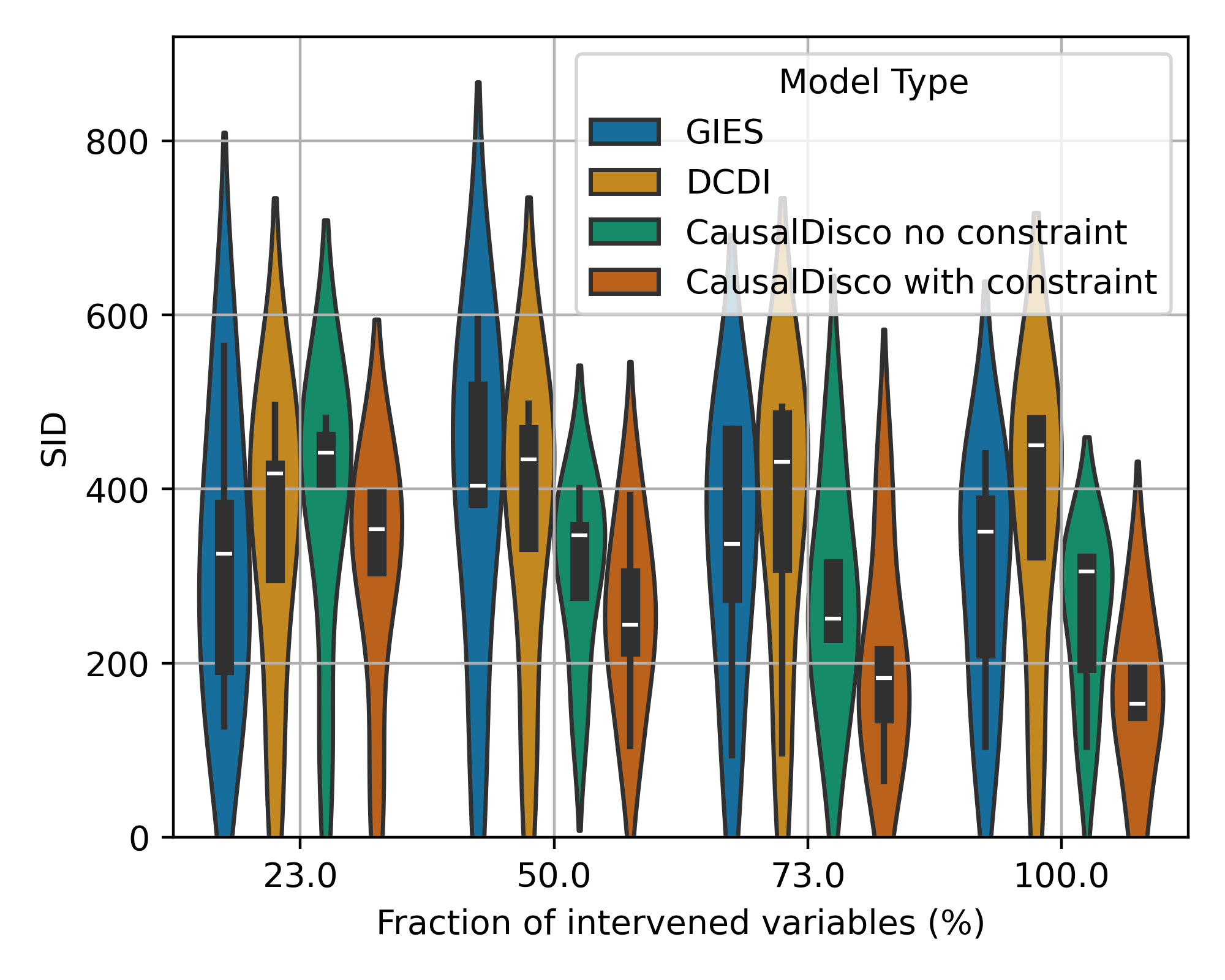}
        \caption{GRN, 30 vars, SID}
        \label{fig:gene-30-sid}
    \end{subfigure}
    % Third row: Gene model with 100 variables
    \hfill
    \begin{subfigure}{0.32\textwidth}
        \includegraphics[width=\linewidth]{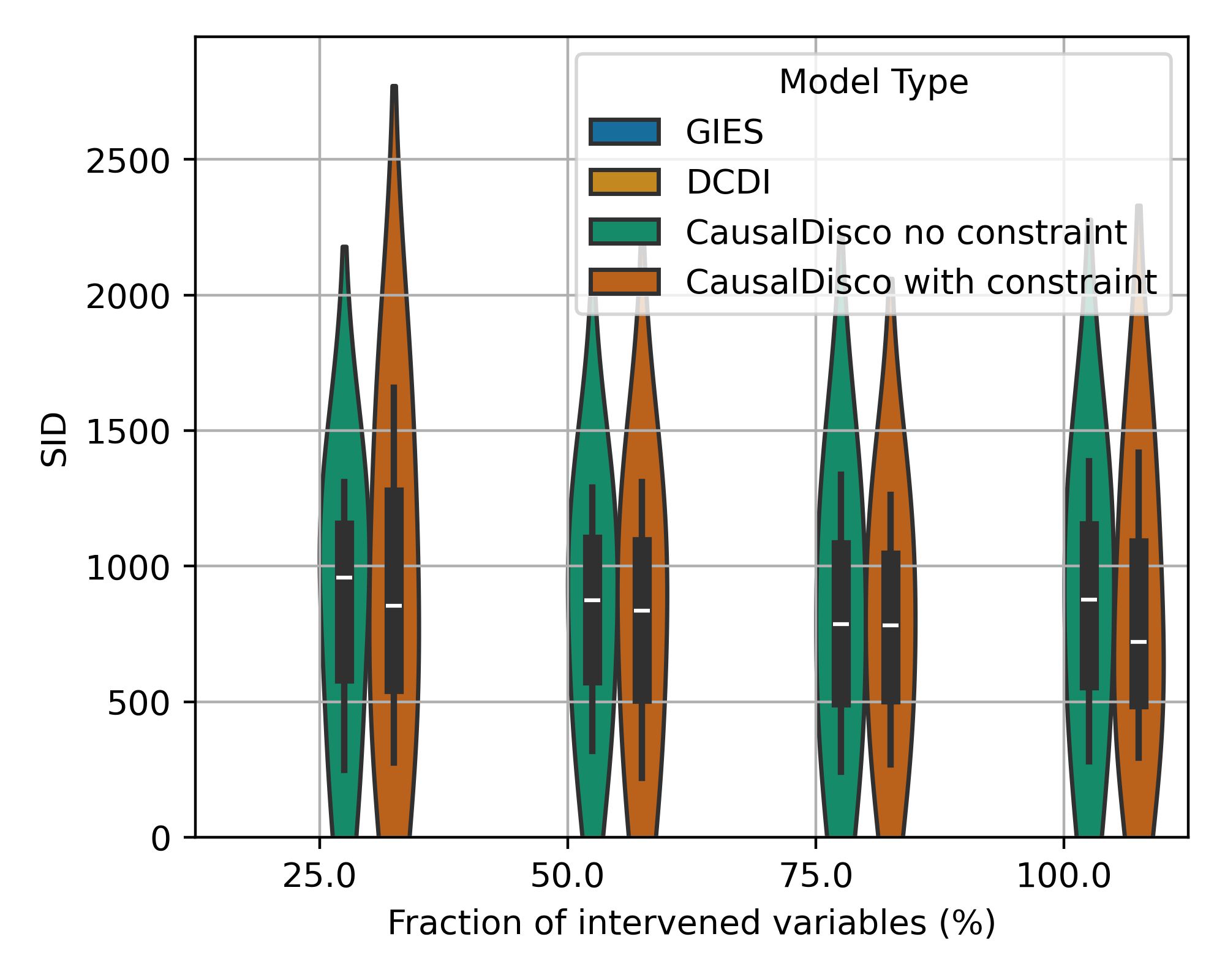}
        \caption{GRN, 100 vars, SID}
        \label{fig:gene-100-sid}
    \end{subfigure}
    % Fourth row: Linear model with 10 variables
    
    \begin{subfigure}{0.32\textwidth}
        \includegraphics[width=\linewidth]{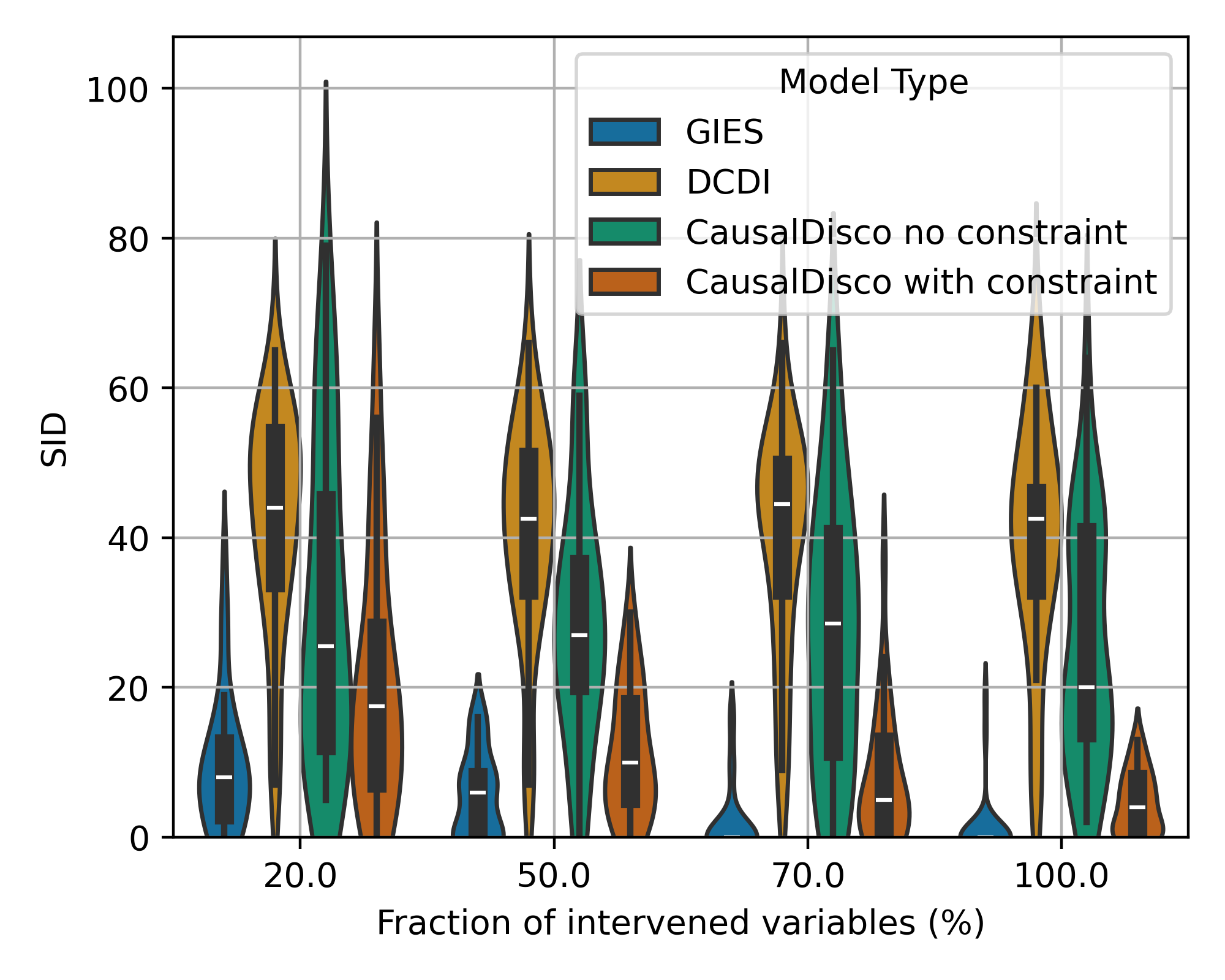}
        \caption{Linear, 10 vars, SID}
        \label{fig:lin-10-sid}
    \end{subfigure}
    % Fifth row: Linear model with 30 variables
    \hfill
    \begin{subfigure}{0.32\textwidth}
        \includegraphics[width=\linewidth]{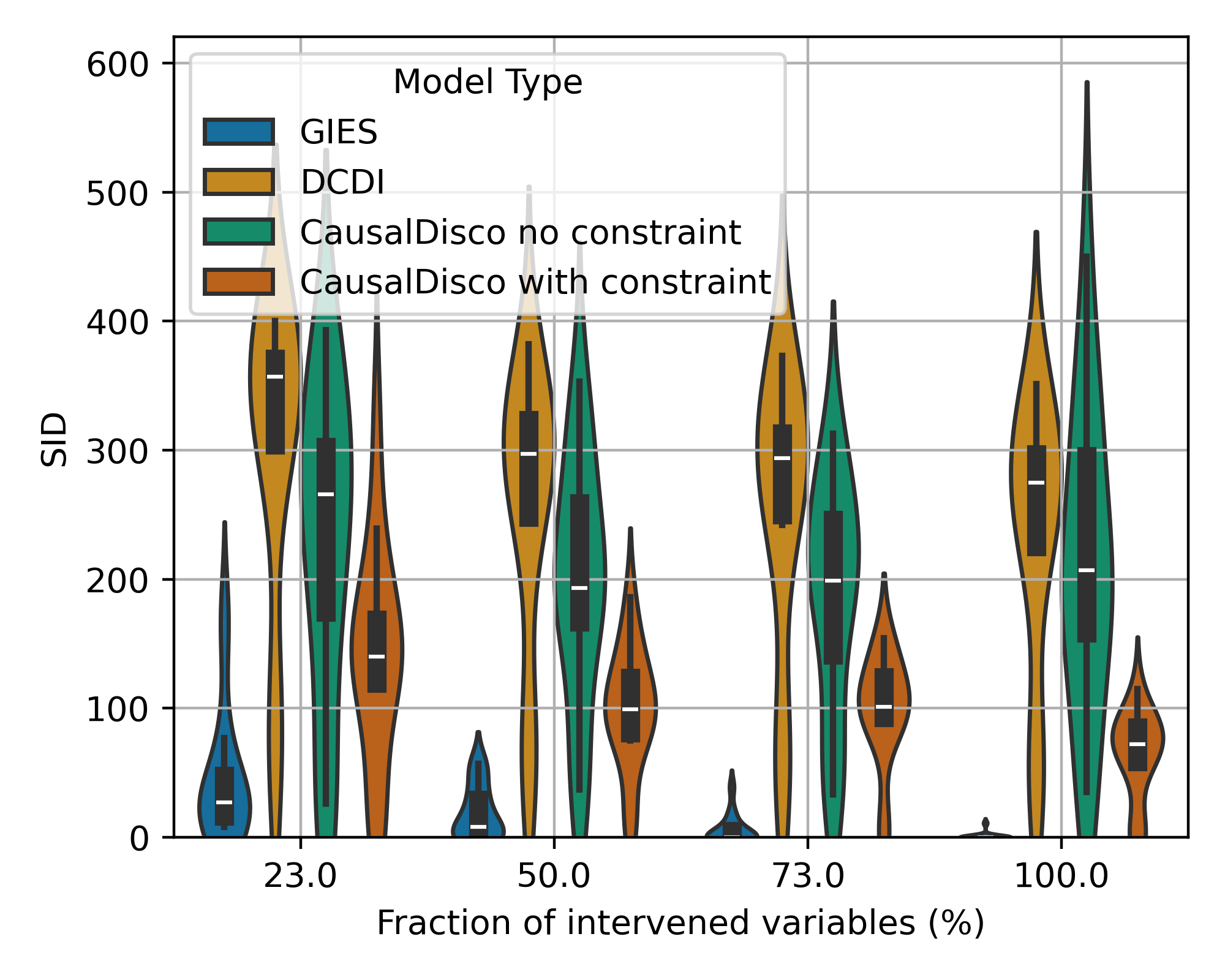}
        \caption{Linear, 30 vars, SID}
        \label{fig:lin-30-sid}
    \end{subfigure}
    % Sixth row: Linear model with 100 variables
    \hfill
    \begin{subfigure}{0.32\textwidth}
        \includegraphics[width=\linewidth]{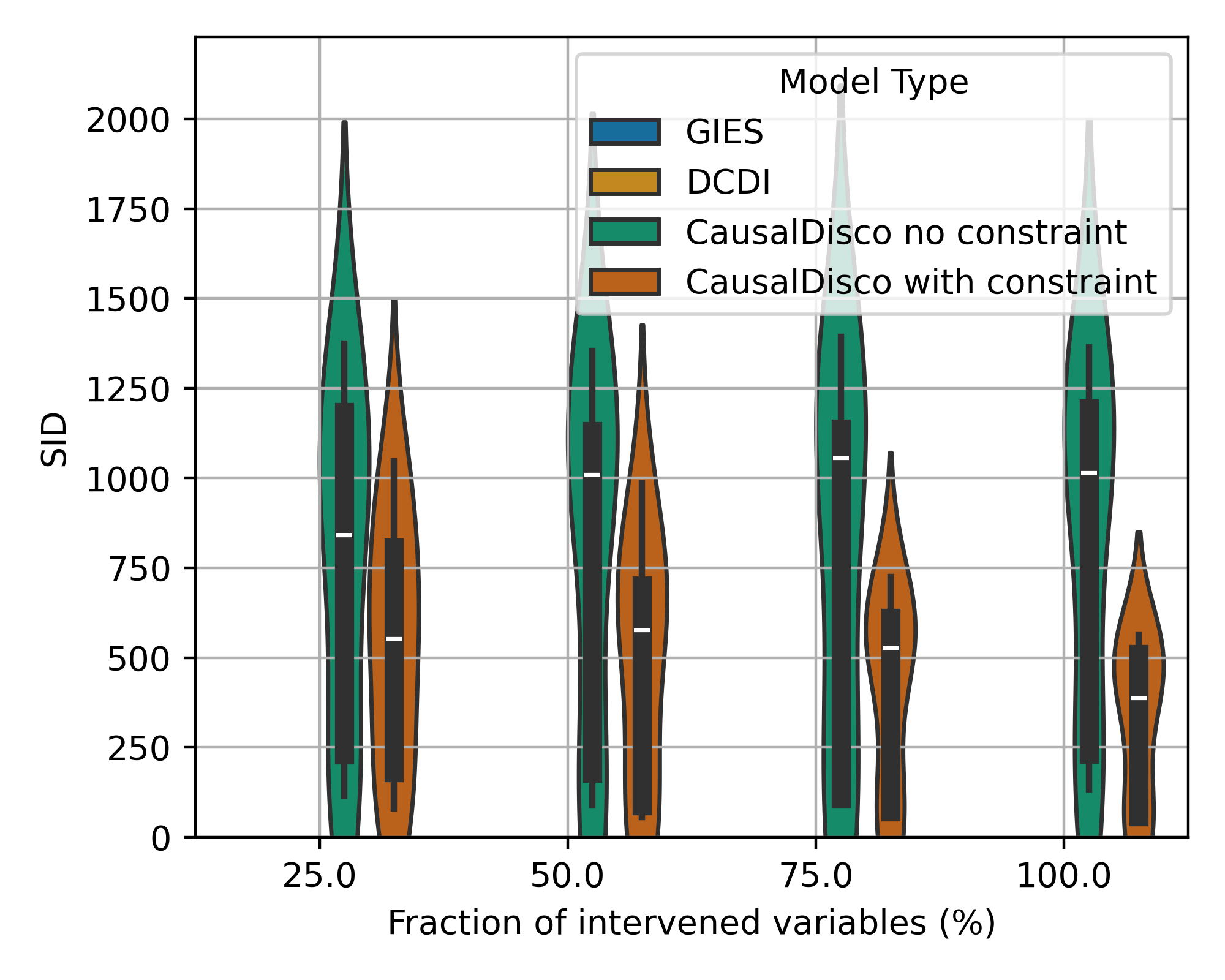}
        \caption{Linear, 100 vars, SID}
        \label{fig:lin-100-sid}
    \end{subfigure}
    % Seventh row: RFF model with 10 variables
   
    \begin{subfigure}{0.32\textwidth}
        \includegraphics[width=\linewidth]{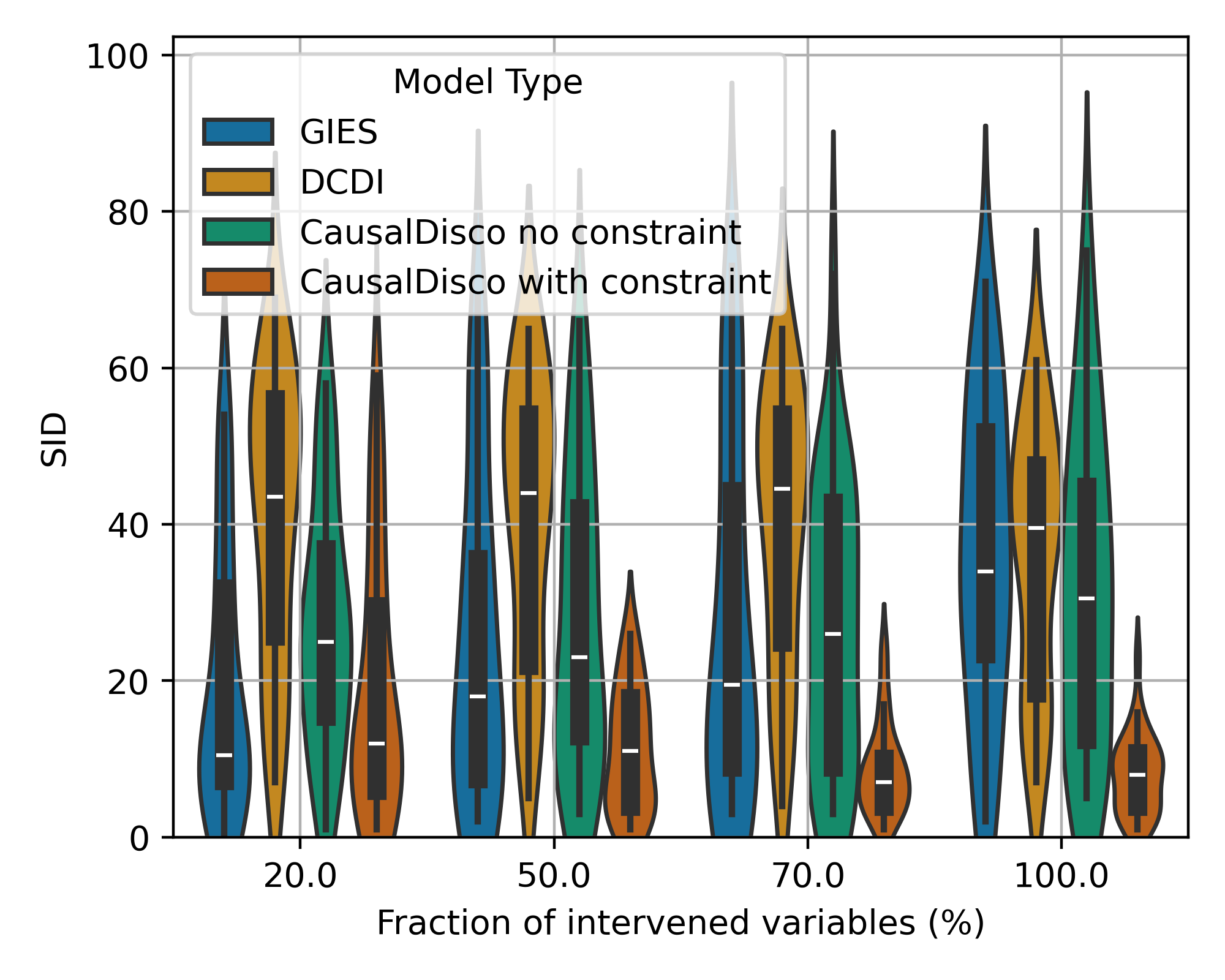}
        \caption{RFF, 10 vars, SID}
        \label{fig:rff-10-sid}
    \end{subfigure}
    % Eighth row: RFF model with 30 variables
    \hfill
    \begin{subfigure}{0.32\textwidth}
        \includegraphics[width=\linewidth]{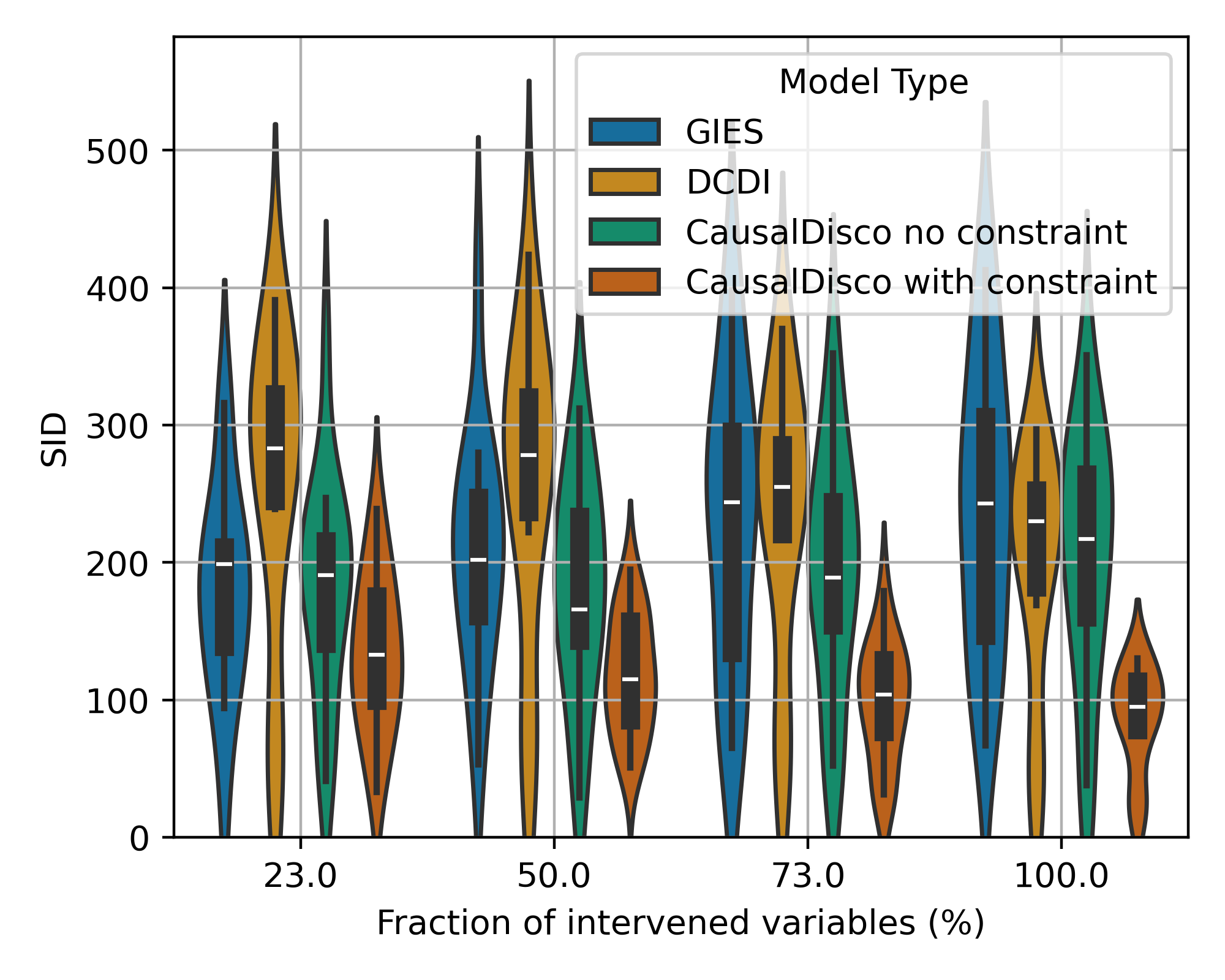}
        \caption{RFF, 30 vars, SID}
        \label{fig:rff-30-sid}
    \end{subfigure}
    % Ninth row: RFF model with 100 variables
    \hfill
    \begin{subfigure}{0.32\textwidth}
        \includegraphics[width=\linewidth]{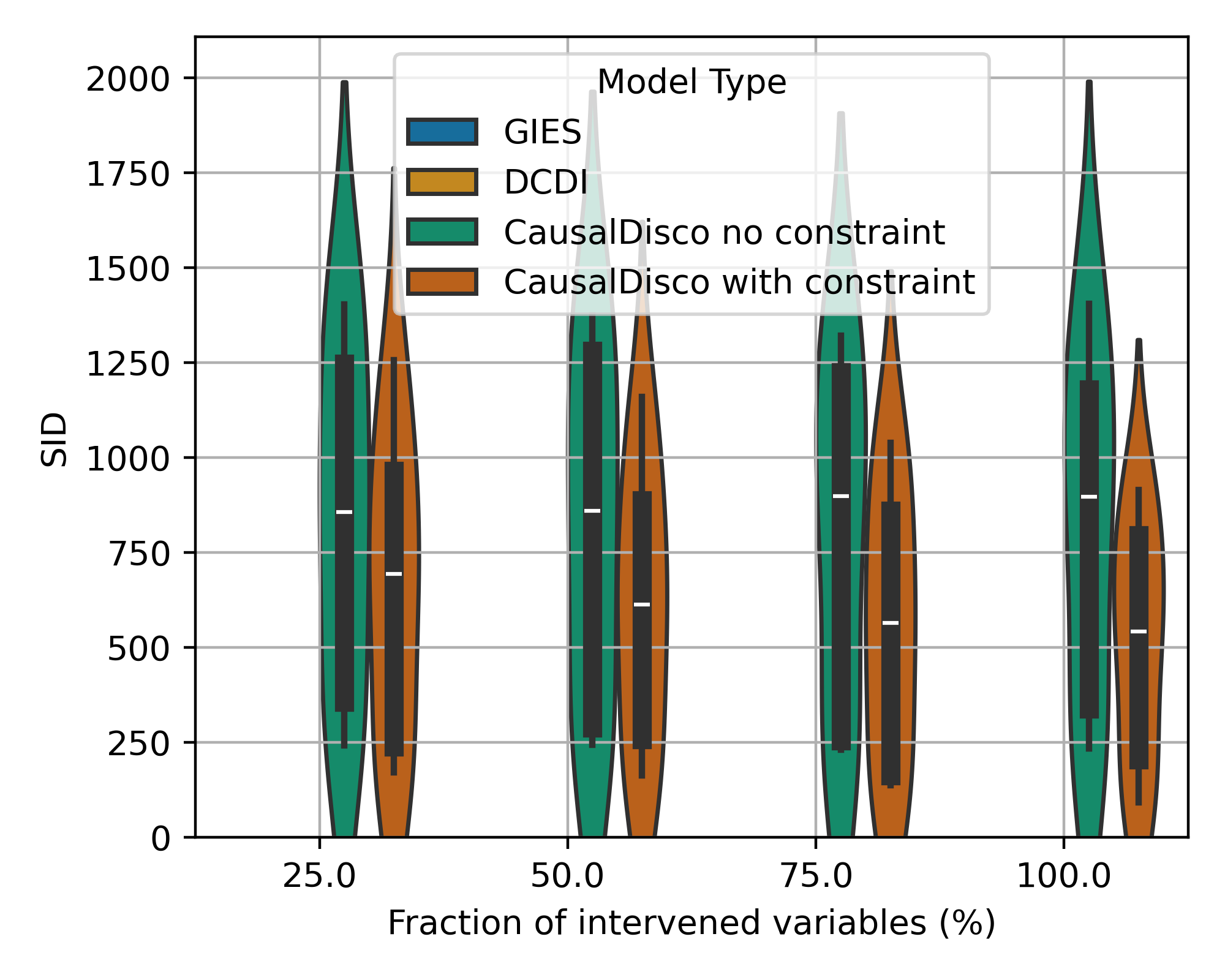}
        \caption{RFF, 100 vars, SID}
        \label{fig:rff-100-sid}
    \end{subfigure}
    % Tenth row: Neural Network model with 10 variables
    
    \begin{subfigure}{0.32\textwidth}
        \includegraphics[width=\linewidth]{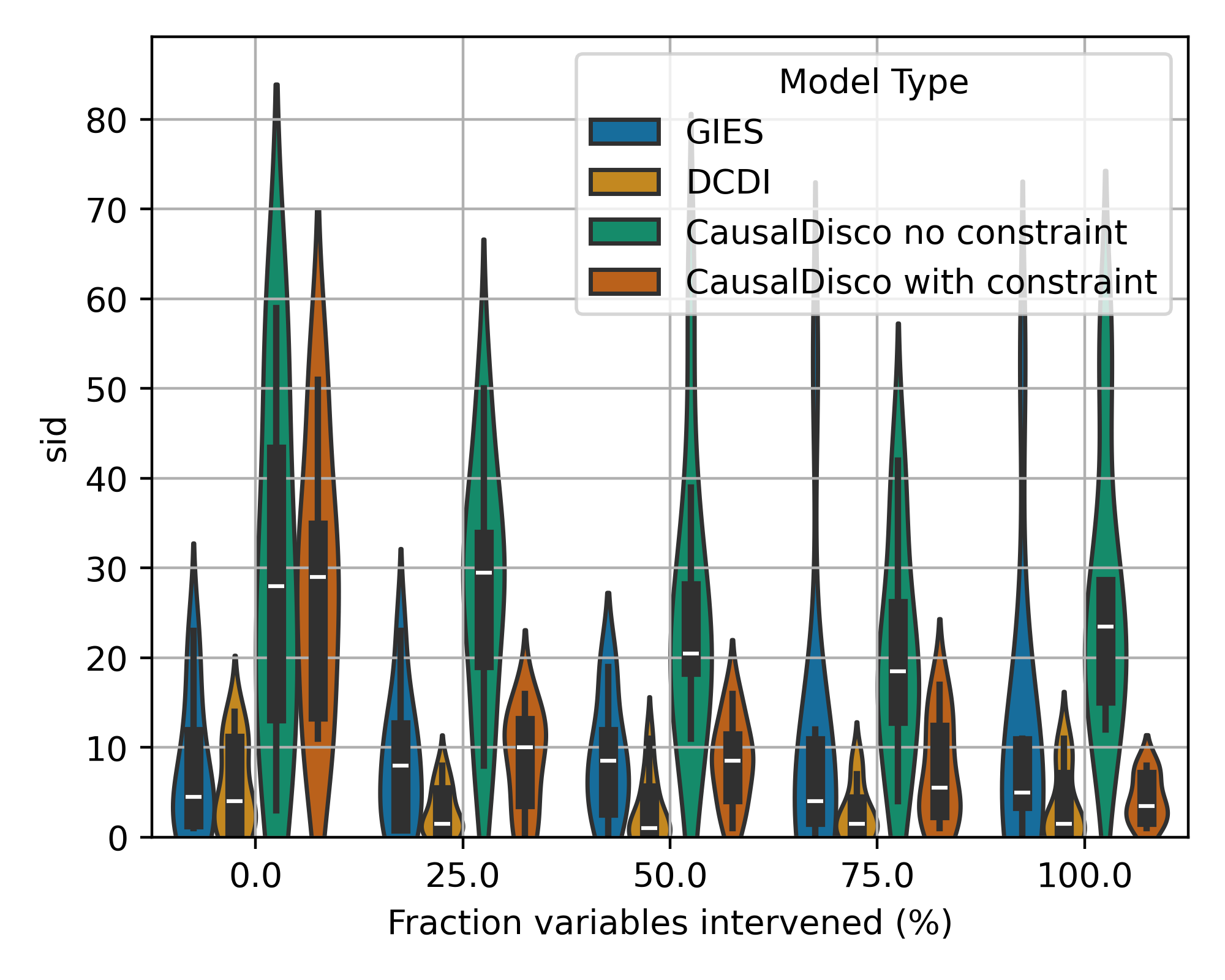}
        \caption{NN, 10 vars, SID}
        \label{fig:nn-10-sid}
    \end{subfigure}
    \hfill% Eleventh row: Neural Network model with 30 variables
    \begin{subfigure}{0.32\textwidth}
        \includegraphics[width=\linewidth]{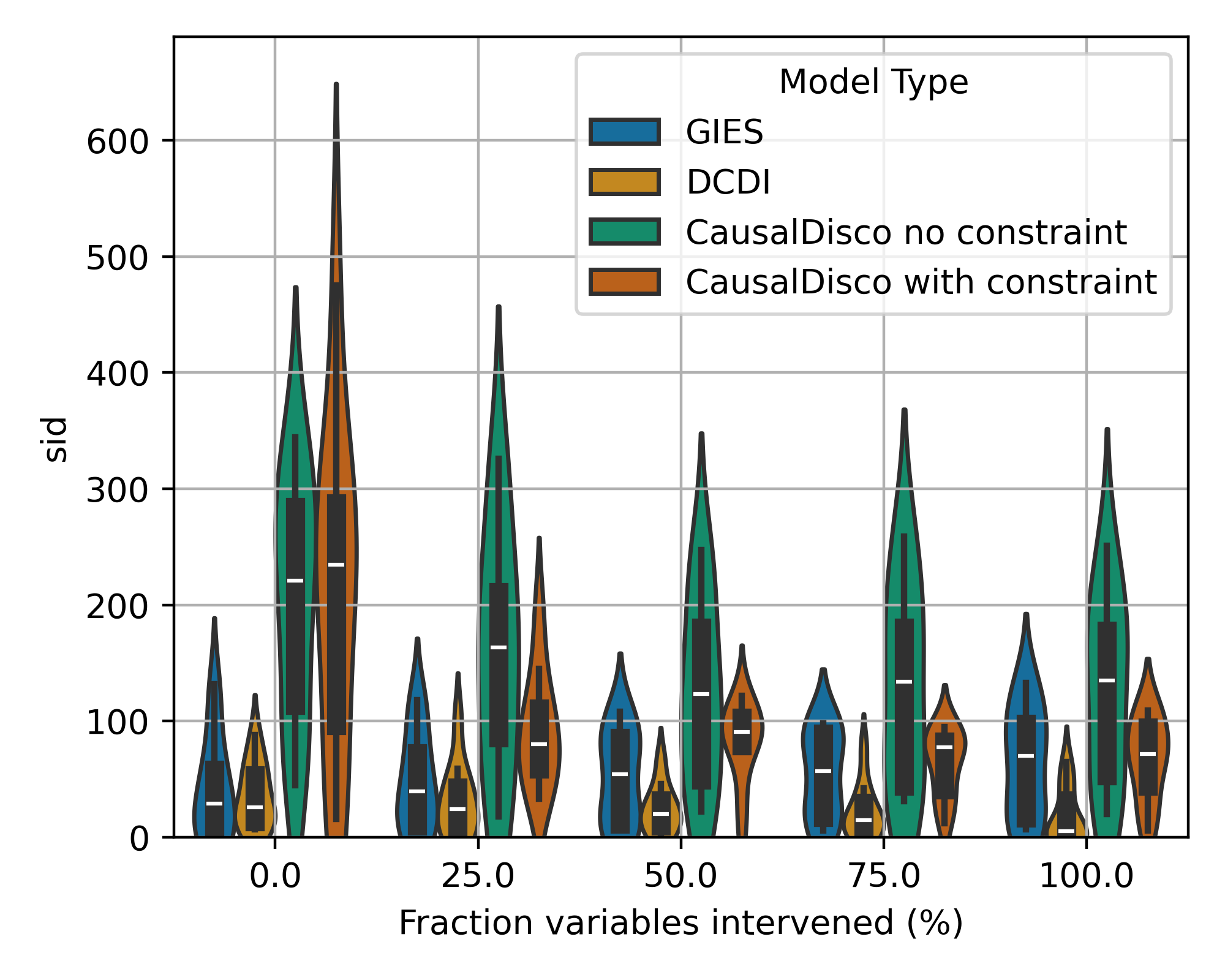}
        \caption{NN, 30 vars, SID}
        \label{fig:nn-30-sid}
    \end{subfigure}
    % Twelfth row: Neural Network model with 100 variables
    \hfill
    \begin{subfigure}{0.32\textwidth}
        \includegraphics[width=\linewidth]{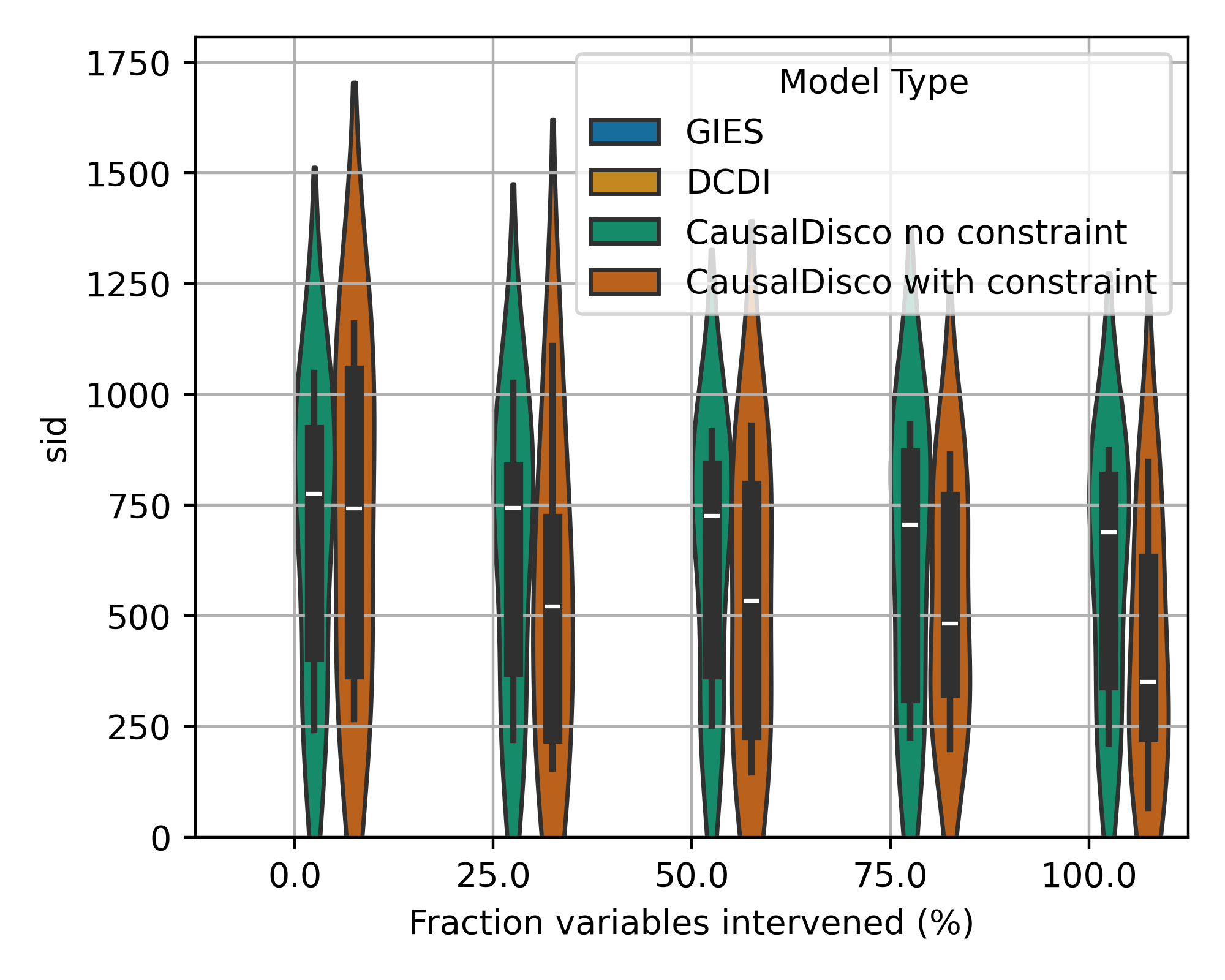}
        \caption{NN, 100 vars, SID}
        \label{fig:sid-app}
    \end{subfigure}
    \caption{Comparison SID (lower is better) for GRN, Linear, RFF, and Neural Network models with varying numbers of variables  for a scale-free network distribution. }
    \label{fig:synthetic-results-grid}
\end{figure}

\end{document}